\definecolor{myblue}{rgb}{0.00, 0.45, 0.70}
\definecolor{mygreen}{rgb}{0.01, 0.62, 0.45}
\definecolor{myyellow}{rgb}{0.9, 0.5, 0}
\definecolor{mygrey}{rgb}{0.5, 0.5, 0.5}
\definecolor{myred}{rgb}{0.84, 0.37, 0.00}
\def\1{\bm{1}}
\def\epsilon{{\varepsilon}}
\newcommand{\RNum}[1]{\uppercase\expandafter{\romannumeral #1\relax}}
\def\vtheta{{\bm{\theta}}}
\def\valpha{{\bm{\alpha}}}
\def\va{{\bm{a}}}
\def\vb{{\bm{b}}}
\def\vd{{\bm{d}}}
\def\vg{{\bm{g}}}
\def\vx{{\bm{x}}}
\def\vy{{\bm{y}}}
\def\vz{{\bm{z}}}
\DeclareMathAlphabet{\mathsfit}{\encodingdefault}{\sfdefault}{m}{sl}
\SetMathAlphabet{\mathsfit}{bold}{\encodingdefault}{\sfdefault}{bx}{n}
\definecolor{codegreen}{rgb}{0,0.6,0}
\definecolor{codegray}{rgb}{0.5,0.5,0.5}
\definecolor{codepurple}{rgb}{0.58,0,0.82}
\definecolor{backcolour}{rgb}{0.95,0.95,0.92}
\definecolor{DarkPink}{rgb}{0.5,0.0,0.18}
\definecolor{DarkGreen}{rgb}{0.1,0.5,0.1}
\definecolor{DarkRed}{rgb}{0.76,0.3,0.15}
\definecolor{DarkBlue}{rgb}{0.1,0.1,0.7}
\definecolor{DarkYellow}{rgb}{.79,.79,0}
\definecolor{darkgreen}{RGB}{0, 100, 0}
\definecolor{darkred}{RGB}{0, 0, 0}
\theoremstyle{plain}
\newtheorem{theorem}{Theorem}[section]
\theoremstyle{definition}
\theoremstyle{remark}
\definecolor{DarkPink}{rgb}{0.5,0.0,0.18}
\definecolor{DarkGreen}{rgb}{0.1,0.5,0.1}
\definecolor{DarkRed}{rgb}{0.5,0.1,0.1}
\definecolor{DarkBlue}{rgb}{0.1,0.1,0.7}
\definecolor{DarkYellow}{rgb}{.79,.79,0}
\newcommand{\grape}{{\textsc{GRAPE}}\xspace}
\newcommand\mapemoji{\raisebox{-5pt}{\includegraphics[width=1.5em]{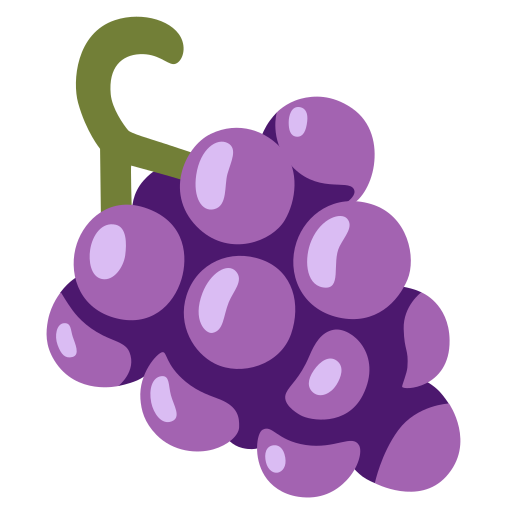}}}
\title{\vspace{-2mm}\textsc{GRAPE}\mapemoji{}: Optimize Data Mixture \\for Group Robust Multi-target Adaptive Pretraining}
\author{%
  Simin Fan, Maria Ios Glarou, Martin Jaggi \\
  EPFL\\
  \texttt{firstname.lastname@epfl.ch} \\
}
\begin{document}
\maketitle
\vspace{-1.5em}
\begin{abstract}
\vspace{-0.5em}
   The performance of large language models (LLMs) across diverse downstream applications is fundamentally governed by the quality and composition of their pretraining corpora.
   Existing domain reweighting algorithms primarily optimize data mixtures for a single target task, thereby resulting in models that overfit to specialized objectives while exhibiting substantial performance degradation on other benchmarks.
   This paper introduces \textbf{G}roup \textbf{R}obust Multi-target \textbf{A}daptive \textbf{P}r\textbf{E}training (\grape), a novel multi-source-multi-target domain reweighting framework designed to calibrate pretraining data mixtures for robust performance across multiple target tasks simultaneously.
   \grape dynamically adjusts sampling weights across source domains (\textit{domain weights}) while concurrently modulating \textit{task weights} that quantify the relative importance of each individual target task.
   This adaptive process prioritizes tasks based on their learning difficulty throughout training. 
   We formulate this interleaved reweighting mechanism as a minimax optimization problem: 
   The inner maximization adjusts task weights leveraging group distributed-robust-optimization (DRO), where those tasks demonstrating the least improvement under the current data mixture are prioritized with higher weights; 
   The outer minimization then optimizes domain weights to maximize loss reduction on the prioritized tasks.
   Experiments on \texttt{ClimbLab} and \texttt{SlimPajama} datasets demonstrate that \grape consistently outperforms baseline methods in terms of reasoning performance across 6 benchmarks. 
   Furthermore, when applied to multilingual targets, \grape effectively identifies optimal training mixtures from mainstream languages, achieving superior language modeling capabilities across 8 low-resource target languages.
\end{abstract}
\vspace{-1.5em}
\section{Introduction}
\vspace{-1mm}
The composition of pretraining data, typically encompassing diverse sources, topics, and languages \citep{gao2020pile,together2023redpajama}, is a critical determinant of large language model (LLM) capabilities. 
While various data sources are conventionally combined using fixed, heuristically determined mixing ratios, recent research indicates that adaptively adjusting these ratios (\textit{domain weights}) via \textbf{\textit{domain reweighting}} during pretraining can substantially enhance downstream task performance \citep{xie2023doremioptimizingdatamixtures,liu2025regmixdatamixtureregression,fan2024dogedomainreweightinggeneralization}. 
However, prevailing domain reweighting methods are largely confined to the \textit{multi-source-single-target} paradigm, focusing exclusively on source-target relevance while failing to address complex interactions across multiple target tasks.
In particular, task-adaptive domain reweighting approaches \citep{grangier2025taskadaptivepretrainedlanguagemodels,fan2024dynamicgradientalignmentonline} curate data mixtures based on performance on one specific task, while often leading to overfitting with compromised generalization ability. 
General domain reweighting techniques such as \cite{liu2025regmixdatamixtureregression} and \cite{fan2024dogedomainreweightinggeneralization} adjust training data mixtures for average performance across various target tasks, assessed via mean stochastic task losses or gradients. 
Yet, due to the heterogeneous dynamics of multi-task learning, average assessment proves suboptimal when confronted with dominant gradients -- where loss or gradient magnitude on certain tasks overshadow others, or conflicting gradients -- where improving one task hinders another.
Conversely, traditional multi-task learning (MTL) strategies like gradient surgery \citep{yu2020gradientsurgerymultitasklearning,liu2023famofastadaptivemultitask,bohn2024taskweightinggradientprojection} require substantial data from each target task and often involve complex gradient manipulations that scale poorly to foundation models, rendering them infeasible for LLM pretraining.

To facilitate effective multi-task adaptation during LLM pretraining, we propose \textbf{G}roup \textbf{R}obust Multi-target \textbf{A}daptive \textbf{P}r\textbf{E}training (\grape), the first domain reweighting algorithm explicitly designed for the \textit{multi-source-multi-target} setting. 
Instead of optimizing for average task performance, \grape calibrates domain weights to maximize improvement on the most challenging tasks at each stage of training, guided by the key principle:
\begin{adjustwidth}{8em}{8em}
\emph{Tasks that are improving slowly deserve more attention.}
\end{adjustwidth}
Specifically, \grape employs group distributed-robust-optimization (DRO) \citep{duchi2020learningmodelsuniformperformance,sagawa2020distributionallyrobustneuralnetworks} to adaptively assign higher \textit{task weights} to target tasks that are currently learning slowest. 
We introduce the \textit{Rate-of-Improvement} (\textit{RoI}, \autoref{equ:roi}), a normalized, step-wise metric assessing task-specific learning progress. 
Unlike prior metrics like excess loss \citep{xie2023doremioptimizingdatamixtures}, RoI avoids reliance on auxiliary models and inherently accounts for varying task difficulties by normalizing improvement against current performance. 
%Tasks demonstrating minimal RoI under the current domain mixture are thus prioritized in subsequent optimization steps.

We formulate this bifold reweighting process as a \textit{minimax} optimization problem (\autoref{equ:minimax}), jointly optimizing: 
(1) \textbf{\textit{domain weights}} ($\valpha$), the sampling distribution over source domains; and 
(2) \textbf{\textit{task weights}} ($\vz$), a distribution over target tasks reflecting their priority during the evolution of training.
The inner maximization phase uses DRO to identify the task distribution $\vz$ that emphasizes the least-improved tasks (minimal RoI). Tasks demonstrating minimal RoI under the current domain mixture are thus prioritized in subsequent optimization steps;
The outer minimization phase then optimizes the domain weights $\valpha$ to maximize the expected improvement according to this dynamically determined worst-case task distribution. 
This interleaved mechanism enables \grape to balance domain utilization against multi-task performance adaptively via a dynamic curriculum, preventing dominance by any single task and fostering robust generalization.

\textbf{Contributions.} Our primary contributions are threefold:
\begin{itemize}[wide=0.5\parindent,topsep=1pt,itemsep=3pt]
\item We introduce \grape, the first principled \textit{multi-target domain reweighting} algorithm for LLM pretraining. We provide a rigorous derivation grounded in optimization theory, including a theoretical analysis of its properties (\S~\ref{sec:method}).
\item We empirically validate \grape's efficacy, demonstrating superior performance over baseline methods on reasoning benchmarks (\S~\ref{sec:exp-reasoning}) and low-resource multilingual language modeling (\S~\ref{sec:exp-multilingual}).
\item Through comprehensive ablation studies and comparisons of different progress assessment metrics used in DRO, we offer insights on the training dynamics of multi-target domain reweighting, shed light on the future research in data-centric LLM pretraining and curriculum learning (\S~\ref{sec:discussion}).
\end{itemize}
\vspace{-1em}
\section{Methodology: Multi-Target Domain Reweighting via \grape}\label{sec:method}
\vspace{-1em}
This section formalizes the problem of domain reweighting for multi-target adaptation and presents \grape, our proposed algorithm that optimizes pretraining data composition for robust performance across multiple target tasks.

\textbf{Problem Formulation. }
Let $\mathcal{D}_{\text{train}}\triangleq \{\mathcal{D}_1, \ldots, \mathcal{D}_K\}$ represent a pretraining corpus partitioned into $k$ source domains based on meta-attributes (e.g., source, topic, etc.). 
During training, data is sampled from $\mathcal{D}_{\text{train}}$ according to \textit{domain weights} $\valpha \in \Delta^K$, where $\Delta^K$ is the probability simplex in $\mathbb{R}^K$.  
Let $\mathcal{D}_{\text{tgt}}\triangleq \{\mathcal{T}_1, \ldots, \mathcal{T}_N\}$ be a set of $N$ target tasks, which can be presented as different benchmarks.
Our objective is to find the optimal domain weights $\valpha$ that yield strong performance across all tasks $\mathcal{T}_n \in \mathcal{D}_{\text{tgt}}$.
We introduce \textit{task weights} $\vz \in \Delta^N \subset\mathbb{R}^N$ to represent the relative importance assigned to each target task during training. The core idea is to dynamically update $\vz$ to reflect task learning progress and subsequently adjust $\valpha$ to optimize the training trajectory based on $\vz$.

Let $\vtheta_t$ denote the model parameters at step $t$, and let $l_n(\vtheta_t)$ be the stochastic loss evaluated on target task $\mathcal{T}_n$ for $n\in [N]$.
We assume optimization proceeds via gradient descent: $\vtheta_{t+1} = \vtheta_t - \gamma_t \vd_t$, where $\gamma_t$ is the learning rate and $\vd_t$ is the update direction. Let $l_k(\vtheta_t), \vg_k(\vtheta_t)$ be the stochastic loss and gradient computed on data from training domain $\mathcal{D}_k$ for $k \in [K]$. The overall parameter update direction is a weighted combination of stochastic domain gradients: $ \vd_t := \sum_{k=1}^K \alpha_k^t \vg_k(\vtheta_t)$, where the domain weights $\valpha^t$ are used as the dynamic importance sampling weights at step $t$.
\vspace{-1em}
\subsection{\grape: Prioritizing the Lagged Tasks by Group DRO}
\textbf{Algorithm. }
To achieve balanced improvement across target tasks, \grape employs Group Distributed Robust Optimization (Group DRO) to jointly adapt task weights ($\vz$) and domain weights~($\valpha$). 
We assess task improvement using the \textit{Rate-of-Improvement (RoI)}, inspired by \cite{liu2023famofastadaptivemultitask}:
\begin{equation}\label{equ:roi}
    r_{n}^{(t)} := \frac{l_n(\vtheta_t) - l_n(\vtheta_{t+1})}{l_n(\vtheta_t)},
\end{equation}
Here, $l_n(\vtheta_t) - l_n(\vtheta_{t+1})$ approximates the one-step improvement in loss for task $\mathcal{T}_n$. Normalizing by $l_n(\vtheta_t)$ provides a scale-invariant measure of relative progress, preventing bias towards tasks with intrinsically higher loss magnitudes. For sufficiently small learning rates $\gamma_t$
, a first-order Taylor expansion yields:\vspace{-1mm}
\begin{align*}
r_{n}^{(t)} &= \frac{\langle \nabla_{\vtheta}l_n(\vtheta_t), \gamma_t \vd_t \rangle + o(\|\vd_t\|^2)}{l_n(\vtheta_t)} 
\approx \gamma_t \langle \nabla_{\vtheta} \log l_n(\vtheta_t), \vd_t \rangle 
= \gamma_t \sum_{k=1}^K \alpha_k^t \langle \nabla_{\vtheta} \log l_n(\vtheta_t), \vg_k(\vtheta_t) \rangle,
\end{align*}
where $\nabla_{\vtheta} \log l_n(\vtheta_t) = \frac{\nabla_{\vtheta} l_n(\vtheta_t)}{l_n(\vtheta_t)}$.
Following the principles of Group DRO \citep{sagawa2020distributionallyrobustneuralnetworks, qi2021onlinemethodclassdistributionally}, we formulate a minimax objective to optimize the domain weights ($\valpha$) for the worst-case (minimum) RoI score across tasks:
\begin{align}\label{equ:minimax}
\max_{\valpha\in \Delta^{K}} \; \min_{n\in [N]} \; \mathbb{E}[r_{n}^{(t)}]
&\approx \max_{\valpha\in \Delta^{K}} \min_{\vz\in \Delta^{N}} \sum_{n=1}^N z_n \mathbb{E}[r_{n}^{(t)}] - \overline{h}(\valpha) + \underline{h}(\vz) %\notag 
\\[-1mm]
&\approx \max_{\valpha\in \Delta^{k}} \min_{\vz\in \Delta^{N}} \gamma_t \sum_{k=1}^K \alpha_k \sum_{n=1}^N z_n \mathbb{E}[\langle \nabla_{\vtheta} \log l_n(\vtheta_t), \vg_k(\vtheta_t) \rangle] - \overline{h}(\valpha) + \underline{h}(\vz). \notag
\end{align}
Here, the inner minimization finds the task weights $\vz$ that concentrate on the tasks with the lowest expected RoI; 
The outer maximization finds the domain weights $\valpha$ that maximize this minimum expected RoI. 
Following \cite{qi2021onlinemethodclassdistributionally, NIPS2016_4588e674}, we introduce Bregman divergence regularization terms $\overline{h}(\valpha)$ and $\underline{h}(\vz)$ relative to the previous weights $\valpha^{(t-1)}$ and $\vz^{(t-1)}$ to stabilize the updates and manage stochastic variance:
$$\overline{h}(\valpha) := \mu_{\valpha} D_\Psi(\valpha \| \valpha^{(t-1)}), \quad \underline{h}(\vz):= \mu_{\vz} D_\Psi(\vz \| \vz^{(t-1)}),$$
where $\Psi(\vb)=\sum_i b_i \log(b_i)$ is the negative entropy. Solving this regularized minimax problem (details in Appendix \ref{apdx:derivation}) yields multiplicative update rules resembling online mirror descent:
\begin{align}
\vz^{t} = \frac{\hat{\vz}^{t}}{\sum_{n \in [N]} \hat{z}_n^{t}}, &\text{ with } \hat{z}_n^t \leftarrow z_n^{t-1} \cdot \exp \bigg( - \frac{\gamma_t }{\mu_{\vz}} \mathbb{E}_{\vx \sim \mathrm{mix}(\valpha^{t-1})}[\langle \nabla_{\vtheta} \log l_n(\vtheta_t), \nabla_{\vtheta} \ell(\vtheta_t; \vx) \rangle] \bigg), \label{equ:task-adapt} \\[-1mm]
\valpha^{t} = \frac{\hat{\valpha}^{t}}{\sum_{k \in [K]} \hat{\alpha}_k^{t}}, &\text{ with } \hat{\alpha}_k^t \leftarrow \alpha_k^{t-1} \cdot \exp \bigg( \frac{\gamma_t}{\mu_{\valpha}} \mathbb{E}_{\vy \sim \mathrm{mix}(\vz^{t-1})}[\langle \vg_k(\vtheta_t), \nabla_{\vtheta} \log \ell(\vtheta_t; \vy) \rangle] \bigg). \label{equ:train-reweight}
\end{align}
In practice, the expectations are replaced by stochastic estimates using mini-batches. The inner product $\langle \nabla_{\vtheta} \log l_n(\vtheta_t), \vg_k(\vtheta_t) \rangle$ captures the alignment between the gradient direction of task $n \in [N]$ (normalized by loss) and the gradient direction of domain $k\in [K]$. $\nabla_{\vtheta} \ell(\vtheta_t; \vx)$ denotes the (training loss) gradient from a training batch $\vx$ sampled according to $\valpha^{t-1}$, and $\nabla_{\vtheta} \log \ell(\vtheta_t; \vy)$ denotes the task-weighted normalized validation gradient from a validation batch $\vy$ sampled according to $\vz^{t-1}$. The pseudocode is presented in Algorithm \ref{alg:grape}.
\begin{algorithm}[ht!]
   \caption{\textbf{G}roup \textbf{R}obust Multi-target \textbf{A}daptive \textbf{P}r\textbf{E}training (\grape)}
   \label{alg:grape}
\begin{adjustbox}{width=0.9\textwidth}
\begin{minipage}{\linewidth}
\begin{algorithmic}[1]
   \State {\bfseries Input:} Training domains $\mathcal{D}_{\text{train}}\triangleq \{\mathcal{D}_1, \ldots, \mathcal{D}_K\}$, target task validation sets $\mathcal{D}_{\text{tgt}}\triangleq \{\mathcal{T}_1, \ldots, \mathcal{T}_N\}$, optimizer $\texttt{Optimizer}$, loss function $l(\cdot)$, initial parameters $\vtheta^0$, learning rate schedule $\gamma_0$, initial weights $\valpha^0, \vz^0$, update frequencies $\Delta T_{\valpha}, \Delta T_\vz$, regularization coefficients $\mu_\valpha, \mu_\vz$, total steps $T$.
   \vspace{0.1em}
   \For{$t \in  \{0 \dots T\}$}
        \hfill\textcolor{codegray}{\footnotesize \#  \textit{Standard training: update model parameters $\vtheta$}}
        \vspace{0.2em}
        \State Sample a batch from $\mathcal{D}_{\text{train}}$: $\vx \sim \mathrm{mix}(\valpha^t)$
        \vspace{0.2em}
        \State Update model parameters $\vtheta^{t+1} \leftarrow \texttt{Optimizer}(\vtheta^t, \nabla_{\vtheta} \ell(\vtheta^t, \vx))$
        \vspace{0.5em}
        \If{ $t \% \Delta T_\vz = 0$}
            \hfill \textcolor{codegray}{\footnotesize \#  \textit{Task Reweighting: update task weights $\vz$}}
            \vspace{0.2em}
            \State Sample one batch from each target task: $\vy_n \sim \mathcal{T}_n \in \mathcal{D}_{\text{tgt}}$ for $n \in [N]$
            \vspace{0.2em}
            \State Sample one batch from $\mathcal{D}_{\text{train}}$: $\vx \sim \mathrm{mix}(\valpha^t)$ 
            \vspace{0.2em}
            \State Compute gradient alignments: $\va^t_n\leftarrow \langle \nabla_{\vtheta} \log \ell(\vtheta^{t+1}; \vy_n), \nabla_{\vtheta} \ell(\vtheta^{t+1}; \vx)\rangle$
            \vspace{0.2em}
            \State Update \textit{task weights}: $\displaystyle \vz^{t+1} \leftarrow\frac{\hat{\vz}}{\sum_{n=1}^N \hat{\vz}_n} $ with $\displaystyle \hat{\vz} = \vz^t\odot \exp\big(- \frac{\gamma_t }{\mu_{\vz}} \va^t\big)$
        \vspace{-0.2em}
        \Else{$\qquad \vz^{t+1}\leftarrow \vz^{t}$}
        \vspace{0.2em}
        \EndIf
        \vspace{0.2em}
        \If{ $t \% \Delta T_\valpha = 0$}
            \hfill \textcolor{codegray}{\footnotesize \#  \textit{Domain Reweighting: update domain weights $\valpha$}}
            \vspace{0.2em}
            \State Sample one batch from each domain: $\vx_k \sim \mathcal{D}_k \in \mathcal{D}_{\text{train}}$ for $k \in [K]$
            \vspace{0.2em}
            \State Sample one batch from $\mathcal{D}_{\text{tgt}}$: $\vy \sim \mathrm{mix}(\vz^t)$ 
            \vspace{0.2em}
            \State Compute gradient alignments: $\displaystyle \va^t_k\leftarrow \langle \nabla_{\vtheta} \ell(\vtheta^{t+1}; \vx_k), \nabla_{\vtheta} \log \ell(\vtheta^{t+1}; \vy) \rangle$
            \State Update \textit{domain weights}: $\displaystyle \valpha^{t+1} \leftarrow\frac{\hat{\valpha}}{\sum_{k=1}^K \hat{\valpha}_k} $ with $\displaystyle \hat{\valpha} = \valpha^t\odot \exp\big(\frac{\gamma_t }{\mu_{\valpha}} \va^t\big)$
            \vspace{-0.2em}
        \Else{$\qquad \valpha^{t+1}\leftarrow \valpha^{t}$}
        \vspace{0.2em}
        \EndIf
        \vspace{0.2em}
   \EndFor
   \State \textbf{Return} Model parameters $\vtheta^{T}$
\end{algorithmic}
\end{minipage}
\end{adjustbox}
\end{algorithm}

\textbf{Efficiency analysis.}
Updating $\valpha$ and $\vz$ at every step (\autoref{equ:task-adapt}, \autoref{equ:train-reweight}) requires $N$ target gradients and $K$ domain gradients computations, which is computationally expensive. 
We mitigate this by performing updates periodically, where domain weights ($\valpha$) and task weights ($\vz$) are updated every $\Delta T_{\vz}$ and $\Delta T_{\valpha}$ steps.
For our experiments on \texttt{SlimPajama} with $K$=$7$ and $N$=$6$, using $\Delta T_{\vz} = 100, \Delta T_{\valpha} = 100$, the computational overhead is approximately $(N+1)\Delta T_z^{-1} + (K+1)\Delta T_\alpha^{-1}$=$15\%$ in terms of gradient computations relative to standard training. Compared to prior domain reweighting algorithm \textsc{DoGE} \citep{fan2024dogedomainreweightinggeneralization}, our task reweighting step only introduce $(N+1)\Delta T_z^{-1}$=$7\%$ overhead in computations.
The memory overhead stems primarily from storing gradients for each domain/task during the update step. If model parameters require $m$ storage, \texttt{AdamW} optimizer stores $\approx4\times m$ (model parameters, gradients, and two EMA states).
The peak memory usage at reweighting steps stores one additional gradient ($m$), leading to $\approx25\%$ memory overhead.

\textbf{Interpreting the Reweighting Mechanism. }
The update rules (\autoref{equ:task-adapt}, \autoref{equ:train-reweight}) reveal an elegant negative feedback regulation at the heart of \grape's reweighting mechanism: 
Task weights $\vz_n$ increase for tasks where the normalized gradient $\nabla \log \ell(\vtheta, \vy)$
has low alignment with the overall update direction $\vd_t$, indicating slow progress under the current regime (\autoref{equ:task-adapt}). 
Subsequently, domain weights $\valpha_j$ increase for domains whose gradients $\vg_k$ align well with the normalized gradients of these prioritized, struggling tasks (represented by $\nabla \log \ell(\vtheta, \vy)$ where $\vy \sim \mathrm{mix}(\vz)$) (\autoref{equ:train-reweight}). 
This mechanism continuously directs training focus towards underperforming tasks by upweighting relevant source domains. The full derivation is provided in Appendix \ref{apdx:derivation}.
\vspace{-1em}
\subsection{Theoretical Properties of \grape}
\vspace{-0.5em}
The \grape algorithm, formulated as a regularized minimax optimization problem solved via multiplicative updates, possesses desirable theoretical properties under standard assumptions. The update rules (\autoref{equ:task-adapt}, \autoref{equ:train-reweight}) are instances of online mirror descent applied to the variables ($\valpha, \vz$) of the minimax game. 
Theorem \ref{thm:convergence} suggests that under strong convexity and smoothness assumptions, the algorithm converges towards the set of Pareto optimal solutions at an $\mathcal{O}(1/T)$ rate, where $T$ is the number of training iterations.
However, we acknowledge the limitation imposed by the strong convexity assumption. In practical non-convex scenarios of LLM pretraining, convergence guarantees are typically weaker. The proof of Theorem \ref{thm:convergence} is presented in~Appendix \ref{appd:proof_convergence}. 
\begin{theorem}[Convergence of GRAPE]\label{thm:convergence}
Let the loss functions $l_n(\boldsymbol{\theta})$ be $L$-smooth for all $n \in [N]$ and the norm of stochastic gradients be upper-bounded by $\mathcal{G}$. If the learning rate $\gamma_t$ satisfies $\gamma_t \leq \frac{1}{L}$ and the regularization parameters $\mu_{\boldsymbol{\alpha}}$, $\mu_{\boldsymbol{z}}$ are chosen such that $\mu_{\boldsymbol{\alpha}} > 0$ and $\mu_{\boldsymbol{z}} > 0$, then the GRAPE algorithm with update rules given by the above equations converges to a neighborhood of the Pareto optimal solution at a rate of $\mathcal{O}(1/T)$, where $T$ is the number of training iterations. Specifically, for any $\epsilon > 0$, there exists $T_0$ such that for all $T > T_0$:
$$\min_{t \in [T]} \left\{\max_{n \in [N]} \mathbb{E}[l_n(\boldsymbol{\theta}_t)] - \min_{\boldsymbol{\theta}} \max_{n \in [N]} l_n(\boldsymbol{\theta})\right\} \leq \epsilon$$
\end{theorem}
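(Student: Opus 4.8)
The plan is to read \grape as a stochastic primal–dual scheme: the parameters $\vtheta_t$ play the role of the primal variable updated by (approximate) gradient descent, while the task weights $\vz$ and domain weights $\valpha$ are dual-type variables updated by online mirror descent with the negative-entropy mirror map $\Psi$ (i.e.\ exponentiated gradient / multiplicative weights). The object of the analysis is the scalarized potential $\Phi(\vtheta):=\max_{n\in[N]}l_n(\vtheta)=\max_{\vz\in\Delta^N}\langle\vz,\vl(\vtheta)\rangle$ with $\vl(\vtheta)=(l_1(\vtheta),\dots,l_N(\vtheta))$; since minimizing $\Phi$ returns a Pareto-optimal point, it suffices to bound $\min_{t\in[T]}\big(\mathbb{E}[\Phi(\vtheta_t)]-\min_{\vtheta}\Phi(\vtheta)\big)$. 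Throughout I would use the first-order identity already derived in the excerpt, $r_n^{(t)}\approx\gamma_t\langle\nabla_{\vtheta}\log l_n(\vtheta_t),\vd_t\rangle$ with $\vd_t=\sum_k\alpha_k^t\vg_k(\vtheta_t)$, to convert the mirror-descent updates of $\vz$ and $\valpha$ into genuine one-step decrements of the target losses, carrying the $o(\|\vd_t\|^2)$ remainder along as an additive bias.

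First I would record a descent inequality for the primal step. Because each $l_n$ is $L$-smooth and $\vd_t$ is a convex combination of stochastic gradients with $\|\vg_k\|\le\mathcal{G}$, we have $\|\vd_t\|\le\mathcal{G}$ and, for any fixed $\vz\in\Delta^N$ and $\gamma_t\le 1/L$,
\[
\langle\vz,\vl(\vtheta_{t+1})\rangle\;\le\;\langle\vz,\vl(\vtheta_t)\rangle\;-\;\gamma_t\big\langle\nabla_{\vtheta}\langle\vz,\vl(\vtheta_t)\rangle,\vd_t\big\rangle\;+\;\tfrac{L\gamma_t^2}{2}\mathcal{G}^2 .
\]
Taking expectations and exploiting that the $\valpha$-update (exponentiated gradient on the domain–target alignments) pushes $\vd_t$ toward the direction best aligned with the $\vz$-weighted normalized target gradients lets me lower-bound the inner-product term by exactly the worst-case RoI that \autoref{equ:minimax} is designed to maximize. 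Invoking, if needed, the strong-convexity hypothesis alluded to before the theorem through a quadratic-growth / Polyak–Łojasiewicz inequality then turns this per-step progress into geometric decay of $\mathbb{E}[\Phi(\vtheta_t)]-\Phi^\star$ up to an irreducible floor governed by $\gamma_t\mathcal{G}^2$, gradient-estimation variance, and the Taylor remainder.

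Next I would add the two mirror-descent regret bounds. Since the $\vz$-update is mirror descent under $\Psi(\vb)=\sum_i b_i\log b_i$ with step $\gamma_t/\mu_{\vz}$ on the loss vector with coordinates $\langle\nabla_{\vtheta}\log l_n(\vtheta_t),\vd_t\rangle$ (bounded by $\mathcal{G}^2/\min_n l_n$ under a uniform positive lower bound on the losses), telescoping $D_\Psi(\vz\|\vz^{t-1})$ gives, for every comparator $\vz\in\Delta^N$,
\[
\sum_{t=1}^{T}\langle\vz^{t}-\vz,\,\widehat{\vl}_t\rangle\;\le\;\frac{\mu_{\vz}\log N}{\gamma_T}\;+\;\frac{1}{2\mu_{\vz}}\sum_{t=1}^{T}\gamma_t\,\|\widehat{\vl}_t\|_\infty^2 ,
\]
and symmetrically for $\valpha$ with $\log K$ in place of $\log N$. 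Summing the primal descent inequality over $t$, adding the two regret inequalities, and choosing the comparators to be the saddle point $(\vtheta^\star,\vz^\star,\valpha^\star)$ makes the coupling inner products cancel (primal–dual telescoping), leaving $\tfrac1T\sum_t\big(\mathbb{E}[\Phi(\vtheta_t)]-\Phi^\star\big)\le \tfrac{C_1}{\gamma_T T}+C_2\,\bar\gamma\,\mathcal{G}^2+(\text{bias})$. A constant $\gamma_t\asymp 1/\sqrt T$ yields $\mathcal{O}(1/\sqrt T)$; under the strong-convexity/quadratic-growth assumption with the standard $\gamma_t\asymp 1/t$ (or constant-then-halved) schedule this sharpens to the claimed $\mathcal{O}(1/T)$, and the residual bias terms are precisely the "neighborhood". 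Finally $\min_{t\in[T]}(\cdot)\le\tfrac1T\sum_t(\cdot)$ gives the displayed bound, and picking $T_0$ so the $\mathcal{O}(1/T)$ tail plus the neighborhood radius falls below $\epsilon$ concludes.

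The main obstacle — the part that is genuinely delicate rather than routine — is the three-way coupling combined with the fact that the primal step descends the \emph{training} loss $\ell(\vtheta;\vx)$ rather than the target losses $l_n$ directly: one must argue that the $\valpha$-update, by aligning $\vd_t$ with the $\vz$-weighted normalized target gradients, makes $\vd_t$ an ascent direction for the worst-case RoI, which implicitly needs the target-task gradient directions to lie (approximately) in the span of the domain gradients. The secondary difficulty is controlling the staleness from the periodic updates ($\Delta T_{\valpha},\Delta T_{\vz}>1$) together with the $o(\|\vd_t\|^2)$ remainder in \autoref{equ:roi}, and showing their combined effect enlarges the convergence neighborhood by only an $\mathcal{O}(\gamma_t)$ term; the rest is bookkeeping on the mirror-descent and smoothness inequalities.
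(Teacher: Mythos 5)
Your proposal takes a genuinely different route from the paper's. The paper argues directly on the worst-case objective $\mathcal{L}(\vtheta_t)=\max_{n}l_n(\vtheta_t)$: it applies the $L$-smoothness descent lemma to the currently worst task $n_t=\arg\max_n l_n(\vtheta_t)$, obtains $\mathcal{L}(\vtheta_{t+1})-\mathcal{L}(\vtheta_t)\le -\gamma_t\mathcal{Q}_t+\tfrac{\gamma_t G^2}{2}$ with $\mathcal{Q}_t=\langle\nabla_{\vtheta}l_{n_t}(\vtheta_t),\vd_t\rangle$, asserts that the exponential reweighting drives $z_{n_t}^t\to 1$ and hence makes $\mathcal{Q}_t$ eventually exceed $G^2/2$, and then telescopes (also invoking $\mu$-strong convexity, which does not appear in the theorem statement) to reach a bound of the form $\tfrac{L\|\vtheta_1-\vtheta^*\|^2}{2T}+\tfrac{G^2}{2L}-\tfrac{1}{T}\sum_t\tfrac{\mathcal{Q}_t}{L}$, reading the last two terms as the ``neighborhood.'' You instead run the standard primal--dual machinery: a smoothness descent inequality for $\vtheta$, two entropic mirror-descent regret bounds (with $\log N$ and $\log K$ constants) for $\vz$ and $\valpha$, saddle-point telescoping, and a quadratic-growth/PL argument to upgrade $\mathcal{O}(1/\sqrt T)$ to the claimed $\mathcal{O}(1/T)$. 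Your route is more systematic and makes the provenance of each error term (regret, step-size floor, Taylor remainder, staleness from $\Delta T_{\valpha},\Delta T_{\vz}>1$) explicit, whereas the paper's is shorter but leans entirely on an asserted, unquantified growth of $\mathcal{Q}_t$; your observation that strong convexity is needed for the $\mathcal{O}(1/T)$ rate matches what the paper actually uses without stating.

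The step you flag as genuinely delicate --- that the primal update descends the \emph{training} mixture via $\vd_t=\sum_k\alpha_k^t\vg_k(\vtheta_t)$, so one must prove the $\valpha$-update turns $\vd_t$ into an ascent direction for the worst-case RoI --- is precisely the step the paper does not close either: it simply states that ``as optimization progresses and $\mathcal{Q}_t$ increases due to our reweighting strategy, we eventually reach a point where $\mathcal{Q}_t>\tfrac{G^2}{2}$.'' Making this rigorous requires an additional hypothesis (e.g., that the target-task gradients lie approximately in the cone spanned by the domain gradients, with a quantitative alignment constant), which neither your sketch nor the paper supplies. So your plan is not weaker than the paper's argument on this point; it localizes the missing hypothesis more precisely, but as written both remain proof sketches rather than complete proofs.
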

Furthermore, Theorem \ref{thm:variance} posits that under similar assumptions of smoothness and bounded gradients, the variance of task performances, $\sigma_t^2 = \mathrm{Var}_{n \in [N]}(l_n(\vtheta_t))$, tends to be monotonically decrease after an initial phase $\sigma_{t+1}^2 \leq \sigma_t^2$ for $t\geq T_0$. It indicates that the algorithm actively counteracts the divergence of the task performances, which promotes more uniform progress across the task suite compared to static weighting or simple averaging. The proof 
is presented in Appendix~\ref{appd:proof_variance}.
\begin{theorem}[Monotonic Variance Reduction of Task Performance]\label{thm:variance}
Let $\sigma_t^2 = \mathrm{Var}_{n \in [N]}(l_n(\vtheta_t))$ denote the variance of task performances at iteration $t$. Let the loss functions $l_n(\boldsymbol{\theta})$ be $L$-smooth for all $n \in [N]$ and the norm of stochastic gradients be upper-bounded by $\mathcal{G}$ , and assuming the task losses are $L$-smooth and $\mu$-strongly convex, the variance decreases monotonically until reaching a minimal basin, i.e., $\sigma_{t+1}^2 \leq \sigma_t^2$ for all $t \geq T_0$ for some finite $T_0$.
\end{theorem}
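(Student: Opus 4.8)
I will track the one-step change of the task-loss variance, linearize each per-task improvement through the Rate-of-Improvement of \autoref{equ:roi}, and then use the fact that \grape's minimax updates (\autoref{equ:task-adapt}, \autoref{equ:train-reweight}) approximately equalize the RoI across tasks once the iterate has entered the regime governed by Theorem~\ref{thm:convergence}. Writing $\delta_n^{(t)} := l_n(\vtheta_t) - l_n(\vtheta_{t+1})$ and using that the variance over $n\in[N]$ is translation invariant, the substitution $l_n(\vtheta_{t+1}) = l_n(\vtheta_t) - \delta_n^{(t)}$ gives the exact identity
\begin{equation*}
\sigma_{t+1}^2 - \sigma_t^2 \;=\; \Var_{n\in[N]}\!\big(\delta_n^{(t)}\big) \;-\; 2\,\Cov_{n\in[N]}\!\big(l_n(\vtheta_t),\,\delta_n^{(t)}\big),
\end{equation*}
so it suffices to show the covariance term dominates the variance term for $t\ge T_0$. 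By $L$-smoothness, $\gamma_t\le 1/L$ and the gradient bound $\|\vd_t\|\le\mathcal{G}$, a first-order expansion gives $\delta_n^{(t)} = \gamma_t\langle\nabla_{\vtheta} l_n(\vtheta_t),\vd_t\rangle + O(\gamma_t^2 L\mathcal{G}^2)$; $\mu$-strong convexity ensures each $l_n$ has a finite minimizer and stays bounded below by some $l_n^\star>0$, so $r_n^{(t)} = \delta_n^{(t)}/l_n(\vtheta_t)$ is well defined, $\delta_n^{(t)} = r_n^{(t)} l_n(\vtheta_t)$, and $r_n^{(t)}<1$ for $\gamma_t$ small.

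\textbf{Balancing lemma (the crux).} The $\valpha$-update is online mirror descent for the outer maximization in \autoref{equ:minimax}, so by the analysis underlying Theorem~\ref{thm:convergence} the domain weights $\valpha^t$ form an $O(1/t)$-approximate solution of $\max_{\valpha}\min_n \E[r_n^{(t)}]$; simultaneously the entropic $\vz$-update telescopes into $z_n^t \propto z_n^0\exp\!\big(-\tfrac1{\mu_{\vz}}\sum_{s\le t} r_n^{(s)}\big)$, which down-weights tasks with large accumulated RoI. I will combine these into the statement: there is a finite $T_0$ and a sequence $\xi_t\downarrow 0$ with $\xi_t = o(\rho_t)$, $\rho_t := \min_n \E[r_n^{(t)}]$, such that $\E[r_n^{(t)}] = \rho_t + \eta_n^{(t)}$ with $0 \le \eta_n^{(t)} \le \xi_t$ for all $n$ and all $t\ge T_0$ — the excess $\eta_n^{(t)}$ being zero on the active set of the $\min$ and controlled on the remaining tasks by strong convexity, the $O(1/t)$ max-min suboptimality, and the $O(\mu_{\vz})$ slack of the negative-entropy regularizer. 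While the iterate is outside the Pareto basin of Theorem~\ref{thm:convergence}, \grape's max-min objective also secures $\rho_t > 0$.

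\textbf{Conclusion.} Substituting $\delta_n^{(t)} = (\rho_t+\eta_n^{(t)})\,l_n(\vtheta_t)$ into the identity above and expanding,
\begin{equation*}
\E\big[\sigma_{t+1}^2 - \sigma_t^2 \,\big|\, \vtheta_t\big] \;=\; \rho_t(\rho_t-2)\,\sigma_t^2 \;+\; 2(\rho_t-1)\,\Cov_n\!\big(l_n(\vtheta_t),\,\eta_n^{(t)} l_n(\vtheta_t)\big) \;+\; \Var_n\!\big(\eta_n^{(t)} l_n(\vtheta_t)\big) \;+\; O(\gamma_t^2).
\end{equation*}
When $0<\rho_t<1$ the leading term is $\le -\rho_t\,\sigma_t^2<0$, while the two $\eta$-terms are $O\big(\xi_t\,\sigma_t\,\max_n l_n(\vtheta_t)\big)$ by Cauchy–Schwarz and $0\le\eta_n^{(t)}\le\xi_t$, hence $o(\rho_t\sigma_t^2)$ by the balancing lemma (and the claim is immediate when $\sigma_t\to 0$). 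Absorbing the mini-batch fluctuations of the alignment estimates into $\xi_t$ via a routine martingale/concentration argument then yields $\sigma_{t+1}^2 \le \sigma_t^2$ for every $t\ge T_0$, with the gap $\asymp\rho_t\sigma_t^2$; the decrease therefore stalls exactly as $\rho_t\downarrow 0$, i.e.\ once the iterate settles into the Pareto basin of Theorem~\ref{thm:convergence}, which is the ``minimal basin'' of the statement.

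\textbf{Main obstacle.} The balancing lemma is where the real work lies. It must accommodate (i) corner solutions of the max–min, where only a strict subset of tasks attains the minimal RoI, so the excess RoI on the inactive tasks has to be shown to shrink relative to $\rho_t$ — this uses the curvature ($\mu$-strong convexity) and the local geometry of the Pareto set where the algorithm settles; (ii) the bias of the Bregman regularizers $\overline h,\underline h$, which only push $\vz$ toward the worst-case vertex up to $O(\mu_{\vz})$; and (iii) the mini-batch stochasticity of the gradient-alignment estimates, so that monotonicity holds in expectation (or with high probability) rather than pathwise, including the fact that $\rho_t\ge 0$ along the relevant iterates. A secondary point is keeping the first-order Taylor remainder and the replacement of $\vd_t$ by its conditional mean uniformly $o(\rho_t\sigma_t^2)$, which again relies on $\gamma_t\le 1/L$ and the bound $\mathcal{G}$.
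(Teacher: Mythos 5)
Your decomposition is essentially the same starting point as the paper's --- both expand $\sigma_{t+1}^2-\sigma_t^2$ into a variance-of-changes term plus a covariance term between the current loss level and its one-step change --- but the key lemma you rest it on is genuinely different. The paper works with min--max \emph{normalized} losses $\hat l_i(t)$, states an unproven ``Lemma~1'' (lower RoI $\Rightarrow$ higher task weight $\Rightarrow$ proportionally greater subsequent improvement), and from it \emph{asserts} the correlation bound $\frac1N\sum_i(\hat l_i-\bar{\hat l})(\Delta\hat l_i-\Delta\bar{\hat l})\le-\kappa\hat\sigma_t^2$, concluding $\hat\sigma_{t+1}^2\le\hat\sigma_t^2$ and transferring back to $\sigma_t^2$ via an affine-rescaling identity. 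You instead posit a ``balancing lemma'' --- that the minimax updates approximately \emph{equalize} the RoI, $\E[r_n^{(t)}]=\rho_t+\eta_n^{(t)}$ with $\eta_n^{(t)}=o(\rho_t)$ --- and derive contraction from the multiplicative shrinkage $\sigma_{t+1}^2\approx(1-\rho_t)^2\sigma_t^2$. Your route is arguably cleaner: it avoids the normalized losses entirely (the paper's Lemma~2 has a $t$-dependent proportionality constant $(b_t-a_t)^{-2}$, so $\hat\sigma_{t+1}^2<\hat\sigma_t^2$ does not by itself give $\sigma_{t+1}^2<\sigma_t^2$, a step the paper glosses over), and it makes explicit that equalized \emph{relative} improvement automatically produces the positive $\Cov_n(l_n,\delta_n^{(t)})$ the paper needs. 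The cost is that your balancing lemma is a strictly stronger structural claim about the \grape{} dynamics than the paper's negative-feedback assertion, and --- like the paper's Lemma~1 --- it is stated, motivated, and left unproven. By the standard of rigor the paper itself applies, your argument is at the same level; neither closes the crux.

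One concrete quantitative gap in your error control: from $0\le\eta_n^{(t)}\le\xi_t$ and Cauchy--Schwarz you get $|\Cov_n(l_n,\eta_n^{(t)}l_n)|\le\sigma_t\,\xi_t\max_n l_n(\vtheta_t)$, and for this to be $o(\rho_t\sigma_t^2)$ you need $\xi_t\max_n l_n(\vtheta_t)=o(\rho_t\sigma_t)$, which is strictly stronger than the $\xi_t=o(\rho_t)$ you assume whenever $\sigma_t\ll\max_n l_n(\vtheta_t)$ --- precisely the regime the theorem is about, since the variance is supposed to be shrinking. Your parenthetical ``immediate when $\sigma_t\to0$'' does not repair this, because the claim $\sigma_{t+1}^2\le\sigma_t^2$ must hold for every $t\ge T_0$, not just in the limit. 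The natural fix is to strengthen the balancing lemma to $\xi_t\lesssim c\,\rho_t\,\sigma_t/\max_n l_n(\vtheta_t)$ (plausible, since if all losses coincide there is no reason for the RoIs to differ, so the RoI spread should itself scale with the loss spread), but that would need to be argued from the update rules in \autoref{equ:task-adapt}--\autoref{equ:train-reweight} rather than assumed.
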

\vspace{-1em}
\section{Experiments}
\vspace{-1em}
To evaluate the efficacy of \grape, we conduct experiments in two distinct multi-target pretraining scenarios: 
(1) optimizing a generic English language model for diverse reasoning tasks, and 
(2) optimizing the data mixture from mainstream languages for low-resource language modeling. For all experiments, we train various scales of decoder-only transformers following \cite{vaswani2023attentionneed} from scratch. More details on architecture and hyperparameters are provided in Appendix~\ref{appd:impl}.
\vspace{-0.5em}
\subsection{Domain Reweighting for Multi-task Reasoning}\label{sec:exp-reasoning}
\vspace{-0.5em}
\textbf{Setup. } 
We consider two pretraining corpora, \texttt{ClimbLab} \citep{diao2025climbclusteringbasediterativedata} with $K$=$20$ source domains clustered by topics; and \texttt{SlimPajama} with $K$=$7$ domains classified by collection sources.
We first apply \grape to standard English language model pretraining, where the objective is to dynamically adapt the data mixture to maximize performance across a suite of reasoning benchmarks simultaneously.
We select $N$=$6$ diverse reasoning tasks spanning scientific, logical, physical, and commonsense reasoning: ARC-Easy (ARC-E), ARC-Challenge (ARC-C) \citep{clark2018thinksolvedquestionanswering}, 
\begin{wrapfigure}{r}{0.4\textwidth}
  \centering
  \vspace{-1.3em}
  \scriptsize
  \begin{subfigure}[\small 125M-ClimbLab]{
  \includegraphics[width=1.1\linewidth,clip]{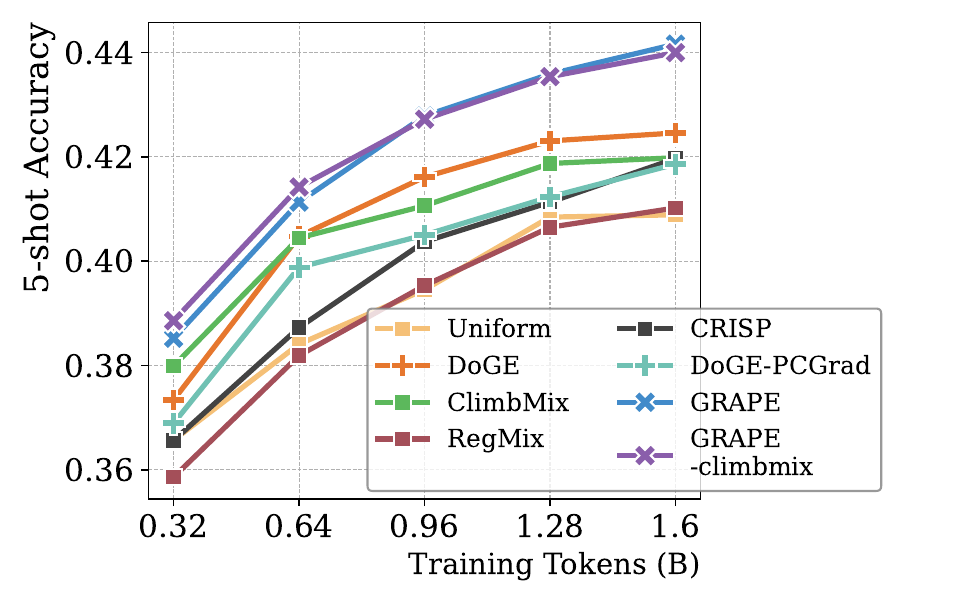}
  }\label{fig:climb-acc-125m}
  \end{subfigure} 
  \vspace{-0.8em}
  \begin{subfigure}[\small 0.7B-ClimbLab]{
  \includegraphics[width=0.95\linewidth,clip]{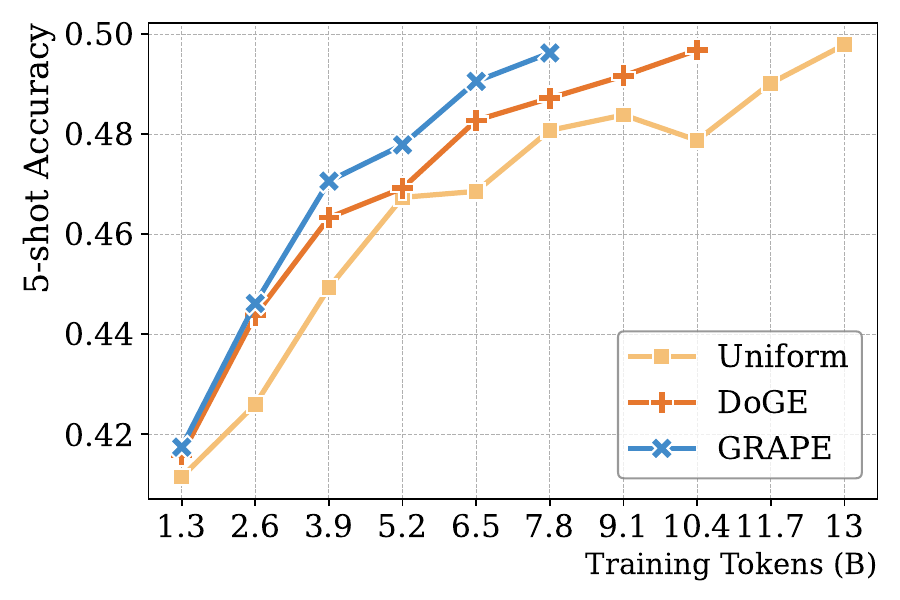}
  }\label{fig:climb-acc-07b}
  \end{subfigure} 
 \caption{\small\textbf{\grape facilitates multi-task reasoning.} For 125M models, \grape and \grape-climbmix greatly outperform five baselines; For larger 0.7B models, \grape achieves comparable scores as uniform baseline with 40\% fewer tokens.}
 \vspace{-3em}
 \label{fig:climb-acc}
\end{wrapfigure}
SciQ \citep{welbl2017crowdsourcingmultiplechoicescience}, PIQA \citep{bisk2019piqareasoningphysicalcommonsense}, LogiQA \citep{liu2020logiqachallengedatasetmachine}, and HellaSwag \citep{zellers2019hellaswagmachinereallyfinish}. For each target task $\mathcal{T}_n$, we use its standard validation set to compute the task loss $l_n(\vtheta_t)$ and the Rate-of-Improvement $r_n^{(t)}$ needed for \grape's updates during training. 
We update task weights $\vz$ every $\Delta T_\vz = 100$ steps and domain weights $\valpha$ every $\Delta T_\valpha = 100$ steps. 
Initial weights $\valpha^0$ and $\vz^0$ are set to uniform distributions. 
We use the AdamW optimizer with standard hyperparameters for LLM pretraining. 
Experiments were conducted using 4 $\times$ H100 80GB GPUs. 
Implementation details and hyperparameter settings are provided in \autoref{appd:exp_reasoning}. 
We report the results on ClimbLab in the following sections and present the results on SlimPajama in \autoref{appd:exp_slimpj}.

\textbf{Baselines. } 
On experiments with $0.7$ B scale models, we compare \grape to two baseline methods: (1) \textbf{Uniform}: training data are sampled uniformly from each source domains; (2) \textbf{\textsc{DoGE}}~\citep{fan2024dogedomainreweightinggeneralization}: domain weights are dynamically adjusted by gradient alignment with uniform task weights across $N$ target tasks. 
On small-scale ($125$M) models, we employ a comprehensive suite of baseline approaches: 
(3) \textbf{DoGE-PCGrad}: a multi-task learning extension of \textsc{DoGE}, where the target gradient are dynamically updated with a gradient surgery algorithm PCGrad~\citep{yu2020gradientsurgerymultitasklearning};
(4) \textbf{RegMix}~\citep{liu2025regmixdatamixtureregression}: predict the domain weights using regression towards the optimal average loss across all target tasks;
(5) \textbf{CRISP}~\citep{grangier2025taskadaptivepretrainedlanguagemodels}: determine the domain weight distribution by importance sampling, i.e. clustering all data samples from target tasks onto source domains;
(6) \textbf{ClimbMix}~\citep{diao2025climbclusteringbasediterativedata}: the optimized domain weights curated by iteratively regression and bootstrapping;
(7) \textbf{\textsc{GRAPE-ClimbMix}}: apply \grape with domain weights $\valpha$ initialized from ClimbMix weights. 

\textbf{\grape improves multi-task reasoning capability.} 
According to Table~\ref{tab:climb-acc} and Figure~\ref{fig:climb-acc}, \grape demonstrates consistent improvements on multi-task reasoning capabilities across different model scales, achieving superior average 5-shot reasoning accuracy and learning efficiency compared to various baseline methods.
According to Figure~\ref{fig:climb-acc-125m}, \grape and \grape-climbmix, significantly outperform a suite of baselines including \textsc{DoGE}, \textsc{DoGE-PCGrad}, ClimbMix, RegMix, CRISP, and uniform domain sampling when training small-scale ($125$M) models. 
On $0.7$B models (Figure~\ref{fig:climb-acc-125m}), both \textsc{DoGE} and \grape substantially outperform the uniform sampling baseline, demonstrating the general efficacy of adaptive domain reweighting approaches. 
Notably, \grape achieves comparable average 5-shot accuracies to the uniform baseline while utilizing approximately 40\% fewer training tokens. Furthermore, compared to \textsc{DoGE}, \grape not only exhibits an approximate 25\% acceleration in reaching similar accuracy thresholds, but also obtains more well-rounded improvements across all tasks above uniform baseline, as benefit from its distinctive task reweighting mechanism.
\begin{table*}[htbp!]
\vspace{-1em}
\caption{\small \textbf{5-shot exact match accuracies(\%) on target reasoning tasks.} The scores of 125M (resp. 0.7B) models trained on 1.6B (resp. 7.8B) tokens are reported. The best-performed scores are marked as \textbf{Bold}, and the second-best scores are \underline{Underlined}. \grape outperforms other baselines on most of target reasoning tasks.}
\vspace{0.5em}
\label{tab:climb-acc}
\small
\centering
\begin{adjustbox}{max width=0.98\textwidth}
\begin{tabular}{lcccccc||c|c}
\toprule
$125$M-ClimbLab  & ARC-C & ARC-E  & LogiQA & PIQA & SciQ & Hellaswag & Average &\# task > uniform base.\\
\midrule
Uniform         & 23.12 &  40.74 & \underline{27.96} & 58.98 &  67.50 & 27.02 & 40.89 & - \\
\textsc{DoGE}   & 24.57 &  45.33 & 25.35 & 59.52 &  72.20 & 27.77 & 42.45 & 5 \\
\textsc{RegMix} & 24.66 &  42.85 & 26.11 & 59.30 &  65.90 & 27.30 & 41.02 & 4 \\
\textsc{CRISP}  & 22.70 &  44.32 & \textbf{29.65} & 57.34 &  70.40 & 27.40 & 41.97 & 4 \\
\textsc{ClimbMix}  & 24.83 &  42.85 & 25.19 & 60.55 &  70.80 & 27.67& 41.98 & 5 \\
\textsc{DoGE-PCGrad}  & 24.66 &  42.63 & 27.19 & 58.71 &  70.50 & 27.50 & 41.86 &  4\\
\textsc{GRAPE}  & \textbf{26.11} &  \textbf{47.14} & 27.65 & \underline{61.10} &  \textbf{74.40} & \textbf{28.56} & \textbf{44.16} & 5\\
\textsc{GRAPE-ClimbMix}  & \underline{26.02} &  \underline{46.72} & 27.19 & \textbf{62.24} &  \underline{73.80} & \underline{28.01} & \underline{43.99} & 5\\
\midrule
$0.7$B-ClimbLab    & ARC-C &  ARC-E & LogiQA & PIQA & SciQ & Hellaswag & Average &\# task > uniform\\
\midrule
Uniform         & 28.07 &  52.99 & 28.57 & 62.73 &  84.00 & 32.33 & 48.11 & - \\
\textsc{DoGE}   & \textbf{29.61} &  \textbf{57.58} & 25.96 & 61.70 &  \textbf{86.40} & 31.05 & 48.72 & 3 \\
\textsc{GRAPE}  & 28.92 &  55.60 & \textbf{28.58} & \textbf{65.56} &  84.50 & \textbf{34.19} & \textbf{49.56} & 6 \\
\bottomrule
\end{tabular}
\end{adjustbox}
\vspace{-1em}
\end{table*}

\textbf{Task weights evolution reflects learning curriculum.}
The evolution of task weights ($\vz_t$) on 0.7B model, as depicted in \autoref{fig:climb-tgt-weights-07b}, reveals the dynamic learning curriculum adopted by \grape.
\begin{wrapfigure}{r}{0.55\textwidth}
    \centering
    \vspace{-1em}
    \includegraphics[width=0.58\textwidth,clip]{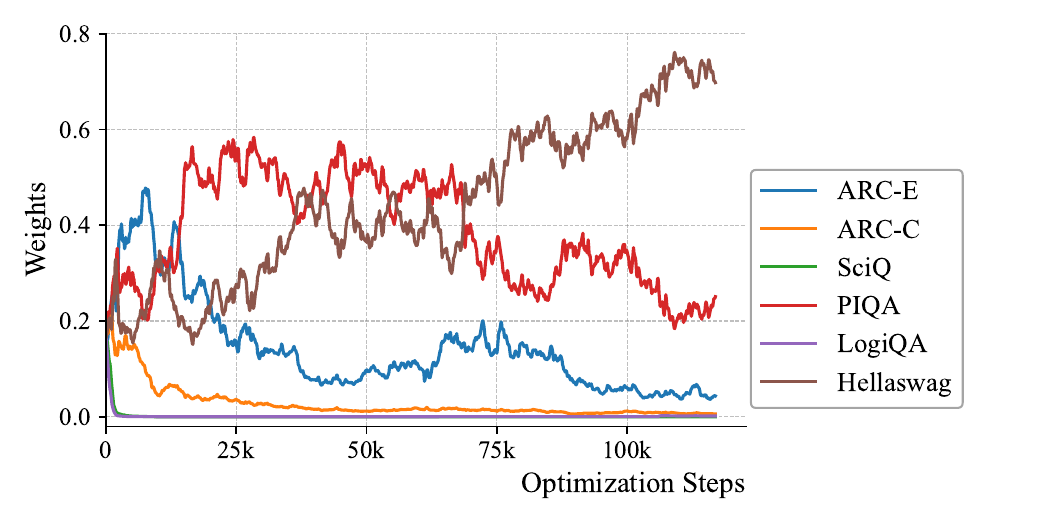}
    \vspace{-2em}
    \caption{\small\textbf{Task weight evolution of \grape.}}
    \label{fig:climb-tgt-weights-07b}
    \vspace{-1.5em}
\end{wrapfigure}
In the early stage, the reading comprehension tasks like ARC-E and ARC-C are mostly up-weighted. As training progresses, in the late stage, physical and commonsense reasoning tasks like PIQA and Hellaswag are steadily prioritized, demonstrating the emergence of a skill-wise learning curriculum that moves from foundational to more complex reasoning abilities.
Concurrently, tasks like SciQ and LogiQA consistently receive lower weights. This suggest they may either benefit sufficiently from cross-task generalization driven by the prioritized tasks or can be learnt more efficiently relative to other tasks.
This adaptive curriculum, by directing resources to evolving bottlenecks, ensures that difficult tasks are not neglected, fostering robust and balanced multi-task reasoning capabilities.
\begin{figure}[htbp!]
    \centering
    \vspace{-0.5em}
    \includegraphics[width=0.85\textwidth,clip]{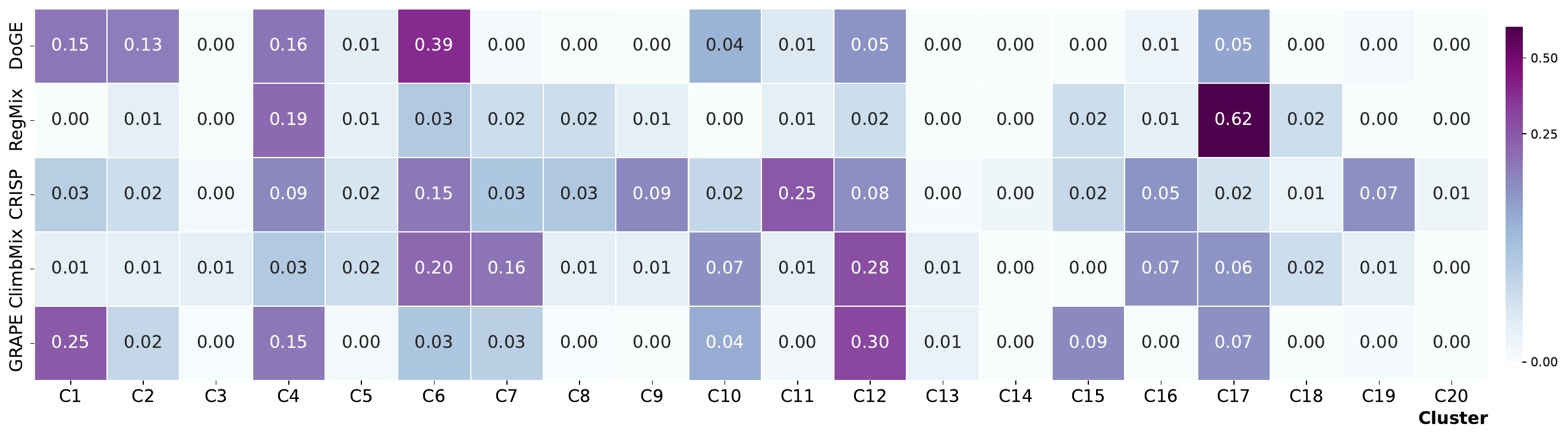}
    \vspace{-1em}
    \caption{\small\textbf{Domain weights attributions across 20 clusters in the ClimbLab dataset.}}
    \label{fig:cluster_heatmap}
    \vspace{-1.5em}
\end{figure}

\textbf{Which data domains and topics are critical for general reasoning? }
Figure~\ref{fig:cluster_heatmap} presents the domain weight distributions across different methods across 20 data clusters within the ClimbLab corpus, whose main topics are detailed in Table~\ref{tab:climb_topic}. 
\grape notably assigns high importance on Cluster 1,4,10,12, which primarily focus on broad science, mathematics and education contents. It also explains the constant low-priority on SciQ task indicated in \autoref{fig:climb-tgt-weights-07b} since the training data mixture contains substaintial scientific-QA related contents.
In contrast, other methods exhibit different topical biases: \textsc{DoGE} shows a strong preference on Cluster 6, featuring healthcare and genetic contents; \textsc{RegMix} heavily utilizes Cluster 17 which with topics on health and medical research; \textsc{CRISP} mostly favors Cluster 11, indicating a great emphasis on software and programming. Notably, \textsc{CRISP} achieves significant improvement on LogiQA task, which indicates training on code data is critical to improve the logical reasoning ability of language models.
\begin{figure}[ht!]
    \centering
    \vspace{-1.2em}
    \includegraphics[width=0.9\textwidth]{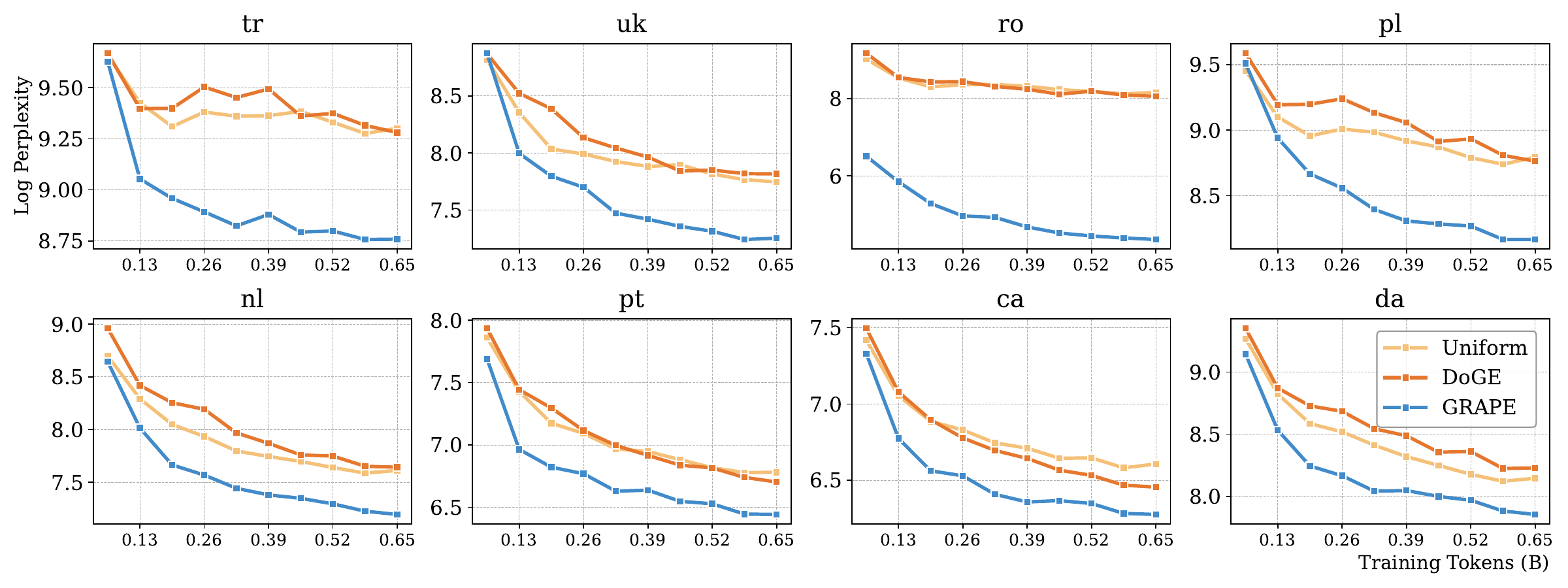}
    \vspace{-1em}
    \caption{\small \textbf{Low-resource language learning progress by Log-Perplexity.} \grape significantly outperforms \textsc{DoGE} and Uniform sampling across all target languages.}
    \vspace{-1.5em}
    \label{fig:wiki40b-ppl}
\end{figure}
\vspace{-1em}
\subsection{Language Mixture for Multi-lingual Learning}\label{sec:exp-multilingual}
\vspace{-0.5em}
\textbf{Setup.}
In this scenario, we investigate \grape's ability to optimize the language composition of the pretraining corpus from mainstream languages to enhance language modeling capabilities across multiple low-resource target languages.
The source corpus consists of data from $K=6$ languages selected from the \texttt{wiki40b} dataset \citep{guo-etal-2020-wiki}, including high-resource languages English (\texttt{en}), French (\texttt{fr}), German (\texttt{de}), Spanish (\texttt{es}), Russian (\texttt{ru}) and Italian (\texttt{it}). Each language constitutes a domain $\mathcal{D}_i$, and \grape adapts the sampling weights $\valpha \in \Delta^{k}$ over these source languages. 
We select $N$=$8$ low-resource languages as target tasks, distinct from the source languages: Turkish (\texttt{tr}), Danish (\texttt{da}), Catalan (\texttt{ca}), Polish (\texttt{po}), Romanian (\texttt{ro}),
\begin{wrapfigure}{r}{0.48\textwidth}
    \centering
    \small
    \vspace{-2em}
  \begin{subfigure}[\small Train Domain Weights]{\includegraphics[width=0.5\textwidth,clip]{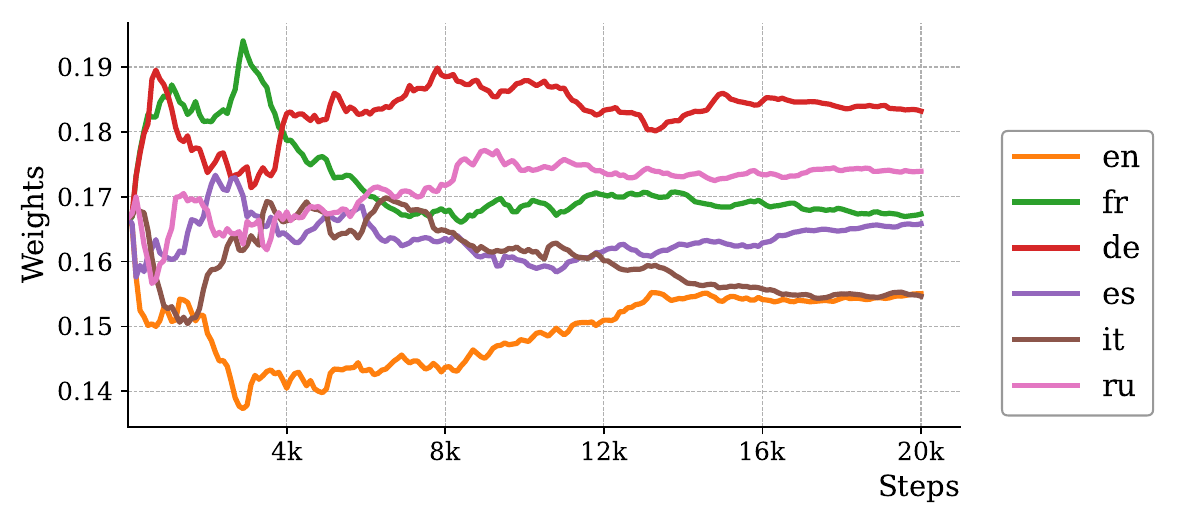}
  }
  \end{subfigure}
  \vspace{-0.5em}
  \begin{subfigure}[\small Task Weights]{\includegraphics[width=0.5\textwidth,clip]{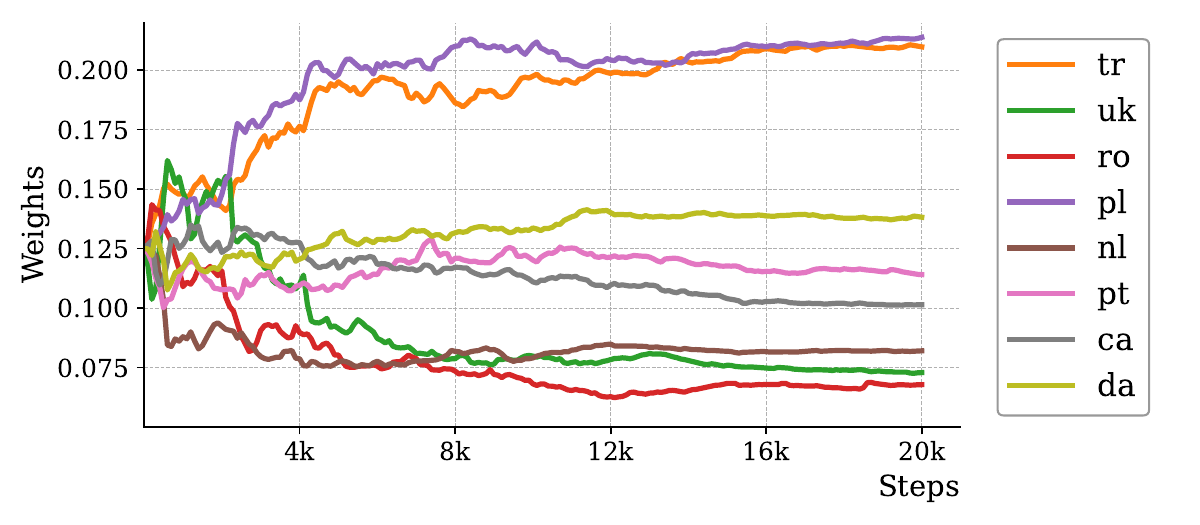}
  }
  \end{subfigure}
  \vspace{-0.5em}
 \caption{\small\textbf{Weights evolution on multilingual pretraining for low-resource language modeling.} }
 \vspace{-1.5em}\label{fig:wiki40b-train}
\end{wrapfigure}
Portuguese (\texttt{pt}), Dutch (\texttt{nl}), and Ukrian (\texttt{uk}). 
Performance is measured by the language modeling loss, i.e. the log-perplexity (log-PPL), on held-out test sets for each target language. Lower log-PPL indicates better performance. 

\textbf{\grape Enhances Multilingual Adaptation.}
Figure~\ref{fig:wiki40b-ppl} illustrates the test log-perplexity on each target low-resource language. 
\grape consistently outperforms both uniform sampling and \textsc{DoGE} baselines across all languages, effectively accelerating the low-resource language modeling. Specifically, \grape accelerates the low-resource language modeling by \textit{no less than 60\%} in terms of log-perplexity scores, achieving significantly lower final perplexity within the same training budget. 
While \citet{fan2024dogedomainreweightinggeneralization} showed \textsc{DoGE} greatly improve single-target language learning, its efficacy diminishes when confronted with multiple target languages simultaneously, where it struggles to deliver competitive performance with uniform task weights.
This outcome underscores the sub-optimality of average task weighting strategies for multi-task learning, especially when tasks exhibit conflicting characteristics. 
Notably, according to Figure~\ref{fig:wiki40b-8T}, the offline reweighting algorithm such as RegMix and CRISP exhibit a strongly biased performance across various target languages: while RegMix slightly facilitate the learning on most of targets languages including Catalan, Portuguese, Ukrainian and Polish, it sacrifices the performance on Dutch and Danish; the purely embedding-based method CRISP only accelerates the learning on Turkish while sabotaging all the other languages.

\textbf{Weight trajectory reveal linguistic relations.}
%To elucidate the adaptive mechanism of \grape during multilingual pretraining, we analyze the temporal evolution of task weights ($\vz_t$) assigned to target languages (\autoref{fig:wiki40b-tgt}) and domain weights ($\valpha_t$) assigned to source languages (\autoref{fig:wiki40b-train}). 
The dynamic interplay between task weights ($\vz_t$) for target languages (Figure~\ref{fig:wiki40b-train}, a) and domain weights ($\valpha_t$) for source languages (Figure~\ref{fig:wiki40b-train}, b) offers valuable insights into \grape's adaptive learning strategy and its implicit discovery of inter-lingual relationships. 
For instance, the initial prioritization of target languages Ukrainian (uk; East Slavic) and Romanian (ro; Romance) corresponds with an increased weighting of linguistically related source languages, namely Russian (ru; East Slavic) and Italian (it; Romance). Subsequently, the sustained high task weight for Polish (pl; Balto-Slavic) aligns with the continued high domain weight of Russian, from the same broad linguistic family.
For Turkish (tr), a Turkic language without close phylogenetic relatives within the predominantly Indo-European source domains, its persistent learning challenge is evident from its high task weight. 
Nevertheless, \grape facilitates its learning by strategically up-weighting source languages like German (de) and French (fr), presumably due to their rich linguistic feature sets or similarity in syntax structures.
\vspace{-1em}
\section{Discussion and  Limitations}\label{sec:discussion}
\vspace{-0.5em}
\textbf{Progress measurements for Group DRO. }
We evaluate the impact of different step-wise progress metrics on the performance of Group DRO within the \grape framework. 
In addition to the primary Rate-of-Improvement (ROI) metric (\autoref{equ:roi}), we investigate two alternatives: \textit{Gap-of-Improvement} (\textit{GOI}) and \textit{EMA-Rate-of-Improvement} (\textit{ROI-ema}).
The step-wise improvmenet of task $\mathcal{T}_n$ at step $t$ are assessed as:
\begin{equation}\label{equ:roi-ema}
    GOI_{n}^{(t)} := l_n(\vtheta_t) - l_n(\vtheta_{t+1}), \quad
    ROI\text{-}ema_{n}^{(t)} := \frac{l_n(\vtheta_t) - l_n(\vtheta_{t+1})}{l^t_{ema,n}}, \qquad \forall n \in [N]
\end{equation}
The exponential moving average loss $l_{ema,n}^t$ is updated as: $l_{ema,n}^t = \beta \cdot l_{ema,n}^{t-1} + (1-\beta)\cdot l_n(\vtheta_t)$, where the hyperparameter $\beta$ is set to $0.7$ in our experiments. 
Substituting these alternative progress metrics for $r_n$ in the minimax objective \autoref{equ:minimax} yields modified update rules according to Equation~\ref{equ:grape-gap} and \ref{equ:grape-ema}. The core optimization principle for adjusting task weights ($\vz$) and domain weights ($\valpha$) remains, but the specific gradient terms within the exponents change. More details are provided in Appendix~\ref{appd:DRO-metric}.
\begin{wrapfigure}{r}{0.65\textwidth}
    \centering
    \vspace{-0.5em}
    \includegraphics[width=0.65\textwidth]{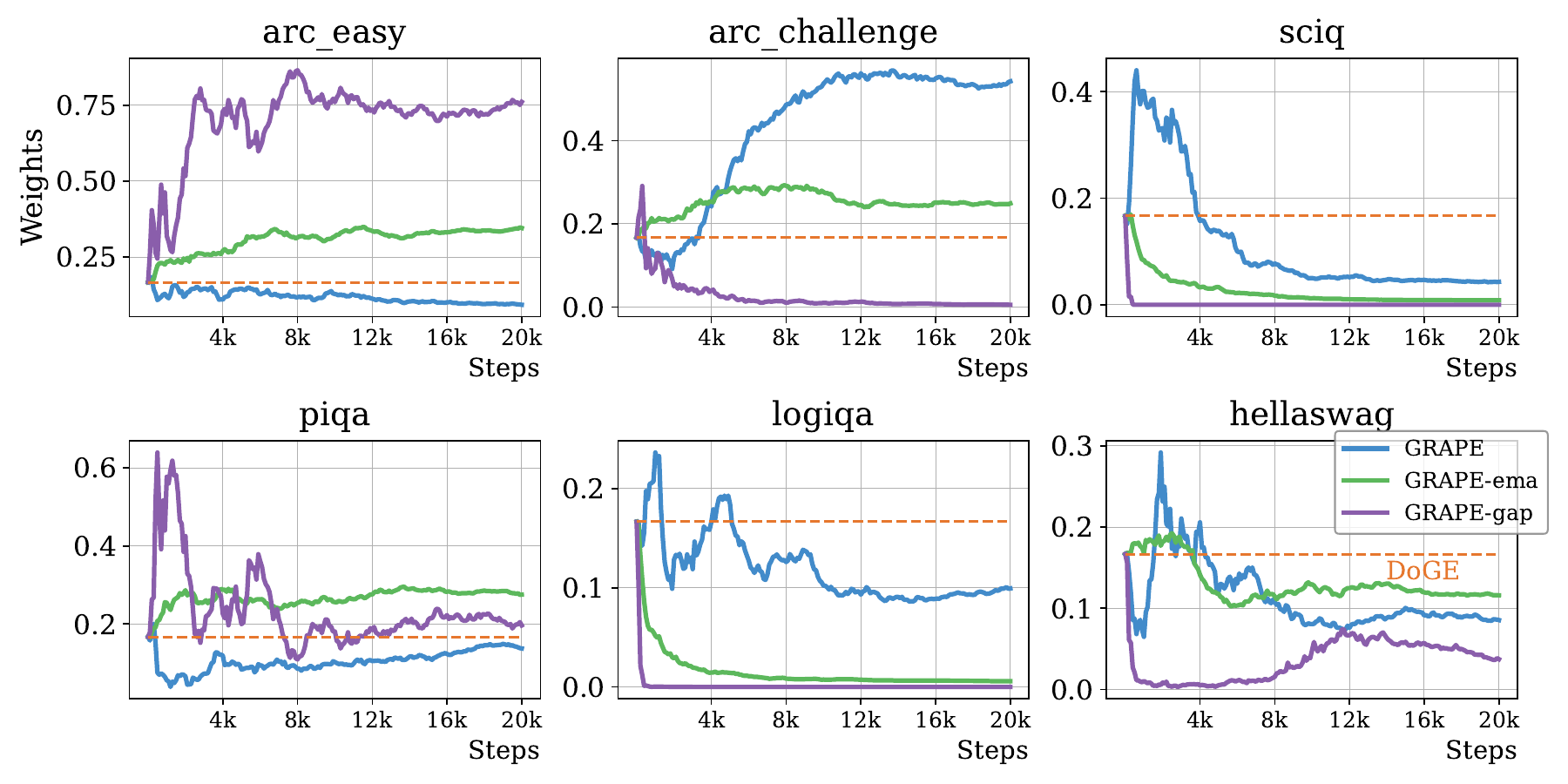}
    \vspace{-2em}
    \caption{\small \textbf{Task weights evolution on 6 reasoning tasks.}}
    \label{fig:ablation}
    \vspace{-1.5em}
\end{wrapfigure}

Results on the six reasoning tasks (\autoref{fig:ablation}) indicate that using \textit{GOI} (\grape-gap) prioritizes intrinsically easier tasks. 
This is attributed to the fact that tasks with larger loss values can exhibit larger absolute improvements ($l_n(\vtheta_t) - l_n(\vtheta_{t+1})$) while their relative progress is slow.
The \textit{ROI} metric, by normalizing improvement with the current loss $l_i(\vtheta_t)$, effectively mitigates this bias, offering a fair assessment of task learning progress less biased to easy ones.
In comparison, the \textit{ROI}-ema metric, which normalizes the absolute improvement by an exponential moving average of past losses, introduces a historical dependency. While this can smooth out noisy step-wise improvements, our experiments suggest it may also lead to slower adaptation of task weights since the smoothed historical loss might not accurately reflect the current learning state. 
Comprehensive results across all 12 evaluated task combinations are provided in Appendix~\ref{appd:ablation}, further substantiating these observations.

\textbf{\grape outperforms baselines on diverse sets of target tasks.}
We further assess the robustness and adaptability of \grape on 12 diverse sets of target tasks with varying degrees of diversity and complexity. The task configurations and detailed results are presented in \autoref{appd:ablation}.
According to \autoref{fig:ablation-t7}, to support both code comprehension (Kodcode) and mathematical reasoning (GSM8K) alongside commonsense reasoning task (Hellaswag), \grape effectively balance code-specific sources (e.g. GitHub, Stackexchange) and technical web text (e.g. CommonCrawl and C4) for mathematical reasoning with broader corpora. 
With high task diversity on 8 tasks \autoref{fig:ablation-t8} where commonsense reasoning tasks are predominant, \grape largely upweights of general knowledge sources like the Book domain to cater to the majority need, while maintaining a smaller yet significant allocation from domain-specific sources (e.g., GitHub, StackExchange).
The comprehensive results in \autoref{tab:ablations} and Appendix~\ref{appd:ablation} demonstrate that \grape maintains strong average performance on diverse task combinations with various complexities, outperforming baselines that struggle to cater to disparate learning objectives with fixed data mixtures and uniform task priorities.
\begin{table}[ht!]
    \centering
    \vspace{-0.5em}
    \caption{\small \textbf{Average test log-perplexity ($\downarrow$) on 12 task combinations.} \grape and the variants outperform uniform and DoGE across all task sets; on 7 out of 12 tasks, \grape outperforms other two DRO variants.}
    \resizebox{0.95\textwidth}{!}{
    \small
    \begin{tabular}{l | l | l| l | l | l | l | l | l | l | l | l | l} 
        \toprule
        Method & T1 & T2 & T3 & T4 & T5 
                      & T6              & T7              & T8              & T9              & T10              & T11           & T12 \\
        \midrule 
        Uniform       & $4.30$ & $4.01$ & $4.11$ & $3.75$ & $3.85$ 
                      & $3.62$ & $3.20$ & $3.90$ & $4.33$ &  $4.62$ & $4.81$ & $4.53$ \\
        \textsc{DoGE} & $4.18$ & $4.94$ & $4.00$ & $3.66$ & $3.70$ 
                      & $3.63$ & $3.12$ & $3.83$ & $4.13$ &  $4.44$ & $4.57$ & $4.33$ \\
        \midrule 
        \grape        & $\textbf{3.53}$ & $\textbf{3.61}$ & $3.70$ & $3.39$ & $\textbf{3.36}$ 
                      & $\textbf{3.03}$ & $\textbf{2.93}$ & $\textbf{3.45}$ & $\textbf{3.85}$ &  $3.94$ & $3.93$ & $3.70$ \\
        \grape-ema    & $3.68$ & $3.69$ & $3.73$ & $\textbf{3.37}$ & $3.37$ 
                      & $3.09$ & $2.96$ & $3.53$ & $3.88$ &  $\textbf{3.91}$ & $3.90$ & $3.73$ \\
        \grape-gap    & $3.58$ & $3.68$ & $\textbf{3.68}$ & $3.42$ & $\textbf{3.36}$ 
                      & $3.05$ & $3.11$ & $3.51$ & $3.88$ &  $3.86$ & $\textbf{3.85}$ & $\textbf{3.59}$ \\
        \midrule 
    \end{tabular}
    }
    \label{tab:ablations}
    \vspace{-1em}
\end{table}

\textbf{Hyperparameter tuning. }
The performance and efficiency of \grape is dependent on its hyperparameters, including the regularization coefficients $\mu_{\valpha}$ and $\mu_{\vz}$, as well as the update frequencies $\Delta T_{\valpha}$ and $\Delta T_{\vz}$. 
Due to the high resource demands associated with LLM pretraining, an exhaustive hyperparameter sweep was beyond the scope of the current work. 
We posit that the performance of \grape can be further improved through extensive hyperparameter tuning, tailored to specific datasets and task configurations. 
How to determine a theoretical near-optimal step-sizes of domain weights and task weights updates is a direction for future work.

\textbf{Impact of training data quality and domain granularity. }
Our investigations reveal that the quality of the pretraining corpus and the granularity of its domain partitions critically influence the efficacy of domain reweighting algorithms, including \grape, \textsc{DoGE}, and offline methods such as \textsc{Regmix} and \textsc{CRISP}.
On ClimbLab with higher-quality data and fine-grained semantic-based clustering, the domain weights exhibit more precise and impactful adjustments in response to the task curriculum compared to the coarser, source-defined domains of SlimPajama, where the benefits of reweighting can be less distinct.
Notably, the offline reweighting algorithm such as RegMix and CRISP are prone to over-concentrated on some particular domains, which leads to biased performance in multi-task learning.
These findings underscore the importance of meticulously partitioned data domains for the benefits of adaptive data mixing strategies, suggesting a potential direction for future research.

\textbf{Towards better understandings of scaling effects on learning curriculum.} The efficacy of domain reweighting are observed to vary significantly with model scale, besides the domain granularity of the training corpus.
The 0.7B model tended to develop a highly concentrated focus on a single challenging task in later training stages, whereas the 125M model maintained a more distributed prioritization across several tasks (\ref{fig:climb-tgt-scale}). 
This suggests that the larger-scale model might be more sensitive to the conflicts between targets. 
Understanding these scaling effects, how the ideal curriculum changes with model size, and how its efficacy is bounded by data quality and partitioning, is crucial to developing more robust and universally effective adaptive pretraining strategies. 

\textbf{Towards a sample-level DRO framework for finer-grained reasoning. }
The current formulation of \grape applies Group Distributed Robust Optimization (DRO) at the task level, prioritizing data from entire target tasks that exhibit slower learning progress. 
A promising avenue for future research is the extension of this framework to sample-level DRO, where individual data instances (e.g., specific questions or examples) would be dynamically identified and assigned higher importance. 
By directing attention to the \textit{hardest} samples, it allows the model to focus intensely on correcting specific errors in the mid- or post-training stages, or during interactions with evolving environment.
\vspace{-1em}
\section{Conclusion}
\vspace{-1em}
This paper introduced \grape, a novel multi-target domain reweighting algorithm designed to calibrate pretraining data mixture for robust performance across diverse target tasks. 
By dynamically adjusting domain and task weights through a Group DRO minimax framework, \grape effectively prioritizes tasks demonstrating the slowest improvement. 
Empirical results on challenging reasoning tasks and multilingual modeling showcase \grape's significant advantages over existing baselines, highlighting the efficacy of its adaptive curriculum for multi-target learning.

\newpage
\bibliographystyle{unsrtnat}
\bibliography{references}

%%%%%%%%%%%%%%%%%%%%%%%%%%%%%%%%%%%%%%%%%%%%%%%%%%%%%%%%%%%%
\newpage
\appendix

\tableofcontents

\clearpage
\section{Related Work}
\paragraph{Data Reweighting for LLM Pretraining}

Pretraining aims to create general-purpose models, requiring training on massive datasets. Data selection methods can be used to determine the optimal dataset composition according to various objective functions. They operate at different granularities with domain-level data selection methods allowing for less fine-grained control than token-level and sample-level methods but offering more tractable optimization that scales better to the massive datasets typical for LLMs. Learned approaches for domain reweighting either determine fixed domain weights \citep{xie2023doremioptimizingdatamixtures, fan2024dogedomainreweightinggeneralization} or dynamically adjust the weights during training of the final model \citep{fan2024dynamicgradientalignmentonline, chen2023skillitdatadrivenskillsframework, wang2020learningcombineknowledgeaggregation, albalak2023efficientonlinedatamixing}.
Fixed-weight approaches rely on proxy models which require significant upfront compute and assume transferability of domain weights across model and training data scales. 
Notably, RegMix \citep{liu2025regmixdatamixtureregression} trains hundreds of proxy models under tight computational budgets to fit a regressor predicting the optimal mixture for a held-out validation domain, then employs the top‐ranked mixture during large‐scale training; it relies on a single static surrogate objective (Pile-CC validation loss) that may not generalize to varied downstream tasks.
CLIMB \citep{diao2025climbclusteringbasediterativedata} similarly uses regression but first partitions the entire pretraining corpus into 20 semantically coherent “domains” via unsupervised clustering of document embeddings and then runs a more efficient iterative mixture search to optimize their weights. DoGE \citep{fan2024dogedomainreweightinggeneralization} uses a small proxy model to learn static domain weights based on bilevel optimization and gradient alignment, upweighting domains whose gradients most closely align with the chosen target domain. DGA \citep{fan2024dynamicgradientalignmentonline} extends this approach to the online scenario by periodically updating weights based on gradient alignment on the large model. 

Domain reweighting is in general a two-step process, where we first define an objective that captures model performance, and secondly optimize domain weights according to that objective. Existing methods consider only single-objective scenarios that optimize validation loss of a single target as a proxy for downstream task performance. They lack mechanisms to balance improvements across multiple target tasks with competing objectives. 
Our approach provides a method to balance multiple tasks during pretraining, building on the gradient alignment score idea and using validation loss of downstream tasks as a surrogate to downstream task performance. 
%Our method leverages perplexity over a balanced mix of downstream tasks as our surrogate and introduce a mechanism to jointly optimize domain weights across multiple, potentially competing objectives during pretraining.

\paragraph{Gradient Surgery for Multi-task Learning.} The multi-task learning (MTL) problem has been intensively explored in literature \citep{yu2020gradientsurgerymultitasklearning,liu2024conflictaversegradientdescentmultitask,chen2018gradnormgradientnormalizationadaptive,navon2022multitasklearningbargaininggame}, while most of the proposed methods requires sufficient amount of training data directly from the target tasks. 
Given $N \geq 2 $ different tasks associated with loss function $l_i(\vtheta)$, the goal is to optimize model parameters $\vtheta$ that perform well across all target tasks. A standard objective for MTL is minimizing the average loss over all tasks.
%: $\displaystyle \vtheta^* = \argmin_{\vtheta} L, \text{where }  {L \triangleq \frac{1}{N}} \sum_{n = 1}^N{l_n(\vtheta_t)} $
However, optimizing this averaged loss objective often lead to suboptimal and biased multi-tasking performance because of tasks with conflicting gradients or dominant gradients with large magnitudes \citep{yu2020gradientsurgerymultitasklearning,liu2024conflictaversegradientdescentmultitask}. 
Existing methods aim to avoid gradient conflict by gradient surgery, where the target gradients are linearly combined with adaptive weights \cite{chen2018gradnormgradientnormalizationadaptive, yu2020gradientsurgerymultitasklearning}.  
For instance, PCGrad \citep{yu2020gradientsurgerymultitasklearning} heuristically removes inter-task gradient conflicts via projections at cost $\mathcal{O}\bigl(|\vtheta| \, N^2\bigr)$.  CAGrad \citep{liu2024conflictaversegradientdescentmultitask} strikes a balance between worst-task improvement and average loss reduction by solving a constrained optimization problem that requires $\mathcal{O}(N^3)$ operations. MGDA \citep{desideri2012multiple} frames MTL as a multi-objective optimization and solves a quadratic programming problem, guaranteeing Pareto-optimal updates. NASH-MTL  \citep{navon2022multitasklearningbargaininggame} offers equivalent Pareto guarantees, by treating gradient conflicts as a bargaining game that maximizes the product of task improvements (sum of log utility functions). 
All these methods share a common drawback : substantial memory requirements to store task gradients, $O\bigl(|\vtheta|\,N\bigr)$, and at least $O\bigl(|\vtheta| \, N^2\bigr)$ for the optimization step. These computational and memory costs become prohibitive for large-scale models and numerous tasks. 
FAMO \citep{liu2023famofastadaptivemultitask} attempts to address these efficiency concerns. It dynamically weights task losses using $\mathcal{O}(1)$ space and time per iteration by trying to enforce equal loss decrease rate among tasks. We apply this idea in the domain reweighting setting to develop a multi-task-adaptive domain reweighting method.

\section{Derivation of \grape}\label{apdx:derivation}
We start with the minimax problem from Equation \eqref{equ:minimax}:

\begin{align*}
\max_{\boldsymbol{\alpha}\in \Delta^{k}} \min_{\mathbf{z}\in \Delta^{N}} \gamma_t \sum_{j=1}^k \alpha_j \sum_{i=1}^N z_i \langle \nabla_{\boldsymbol{\theta}} \log l_i(\boldsymbol{\theta}_t), \mathbf{g}_j(\boldsymbol{\theta}_t) \rangle - h(\boldsymbol{\alpha}) + r(\mathbf{z})
\end{align*}

With the regularization terms defined as:

\begin{align*}
h(\boldsymbol{\alpha}) &:= \mu_{\boldsymbol{\alpha}} D_\Psi(\boldsymbol{\alpha} \| \boldsymbol{\alpha}^{(t-1)}) \\
r(\mathbf{z}) &:= \mu_{\mathbf{z}} D_\Psi(\mathbf{z} \| \mathbf{z}^{(t-1)})
\end{align*}

Where $D_\Psi$ is the Bregman divergence with $\Psi(\mathbf{b})=\sum_i b_i \log(b_i)$:

\begin{align*}
D_\Psi(\mathbf{a} \| \mathbf{b}) = \sum_i a_i \log\frac{a_i}{b_i} - \sum_i a_i + \sum_i b_i = \sum_i a_i \log\frac{a_i}{b_i}
\end{align*}

where the last simplification follows because $\sum_i a_i = \sum_i b_i = 1$ for distributions.

For the inner minimization over $\mathbf{z}$, we form the Lagrangian:

$$\mathcal{L}(\mathbf{z}, \lambda) = \gamma_t \sum_{j=1}^k \alpha_j^{(t)} \sum_{i=1}^N z_i \langle \nabla_{\boldsymbol{\theta}} \log l_i(\boldsymbol{\theta}_t), \mathbf{g}_j(\boldsymbol{\theta}_t) \rangle + \mu_{\mathbf{z}} \sum_{i=1}^N z_i \log\frac{z_i}{z_i^{(t-1)}} + \lambda\left(\sum_{i=1}^N z_i - 1\right)$$

Taking the partial derivative with respect to $z_i$ and setting to zero:

$$\frac{\partial \mathcal{L}}{\partial z_i} = \gamma_t \sum_{j=1}^k \alpha_j^{(t)} \langle \nabla_{\boldsymbol{\theta}} \log l_i(\boldsymbol{\theta}_t), \mathbf{g}_j(\boldsymbol{\theta}_t) \rangle + \mu_{\mathbf{z}}\left(1 + \log\frac{z_i}{z_i^{(t-1)}}\right) + \lambda = 0$$

Solving for $z_i$:

$$z_i = z_i^{(t-1)} \exp\left(-\frac{\gamma_t}{\mu_{\mathbf{z}}} \sum_{j=1}^k \alpha_j^{(t)} \langle \nabla_{\boldsymbol{\theta}} \log l_i(\boldsymbol{\theta}_t), \mathbf{g}_j(\boldsymbol{\theta}_t) \rangle - \frac{\lambda + \mu_{\mathbf{z}}}{\mu_{\mathbf{z}}}\right)$$

Let $\mathbf{d}_t = \sum_{j=1}^k \alpha_j^{(t)} \mathbf{g}_j(\boldsymbol{\theta}_t)$ be the update direction. Then:

$$z_i = z_i^{(t-1)} \exp\left(-\frac{\gamma_t}{\mu_{\mathbf{z}}} \langle \nabla_{\boldsymbol{\theta}} \log l_i(\boldsymbol{\theta}_t), \mathbf{d}_t \rangle - \frac{\lambda + \mu_{\mathbf{z}}}{\mu_{\mathbf{z}}}\right)$$

Using the constraint $\sum_{i=1}^N z_i = 1$ to solve for $\lambda$:

$$\sum_{i=1}^N z_i^{(t-1)} \exp\left(-\frac{\gamma_t}{\mu_{\mathbf{z}}} \langle \nabla_{\boldsymbol{\theta}} \log l_i(\boldsymbol{\theta}_t), \mathbf{d}_t \rangle - \frac{\lambda + \mu_{\mathbf{z}}}{\mu_{\mathbf{z}}}\right) = 1$$

Let $Z_{\mathbf{z}} = \sum_{i=1}^N z_i^{(t-1)} \exp\left(-\frac{\gamma_t}{\mu_{\mathbf{z}}} \langle \nabla_{\boldsymbol{\theta}} \log l_i(\boldsymbol{\theta}_t), \mathbf{d}_t \rangle\right)$, then $\exp\left(-\frac{\lambda + \mu_{\mathbf{z}}}{\mu_{\mathbf{z}}}\right) = \frac{1}{Z_{\mathbf{z}}}$.

Substituting back, we get the update rule for task weights:

$$z_i^{(t)} = \frac{z_i^{(t-1)} \exp\left(-\frac{\gamma_t}{\mu_{\mathbf{z}}} \langle \nabla_{\boldsymbol{\theta}} \log l_i(\boldsymbol{\theta}_t), \mathbf{d}_t \rangle\right)}{Z_{\mathbf{z}}}$$

Now, for the outer maximization over $\boldsymbol{\alpha}$, we substitute the optimal $\mathbf{z}^{(t)}$ back into the original objective function. Let $f_{ij} = \langle \nabla_{\boldsymbol{\theta}} \log l_i(\boldsymbol{\theta}_t), \mathbf{g}_j(\boldsymbol{\theta}_t) \rangle$.

The resulting optimization problem becomes:
$$\max_{\boldsymbol{\alpha}\in \Delta^{k}} \gamma_t \sum_{j=1}^k \alpha_j \sum_{i=1}^N z_i^{(t)} f_{ij} - \mu_{\boldsymbol{\alpha}} \sum_{j=1}^k \alpha_j \log\frac{\alpha_j}{\alpha_j^{(t-1)}}$$

We form the Lagrangian for this maximization problem:
$$\mathcal{L}(\boldsymbol{\alpha}, \nu) = \gamma_t \sum_{j=1}^k \alpha_j \sum_{i=1}^N z_i^{(t)} f_{ij} - \mu_{\boldsymbol{\alpha}} \sum_{j=1}^k \alpha_j \log\frac{\alpha_j}{\alpha_j^{(t-1)}} + \nu\left(\sum_{j=1}^k \alpha_j - 1\right)$$

Taking the partial derivative with respect to $\alpha_j$ and setting to zero:
$$\frac{\partial \mathcal{L}}{\partial \alpha_j} = \gamma_t \sum_{i=1}^N z_i^{(t)} f_{ij} - \mu_{\boldsymbol{\alpha}}\left(1 + \log\frac{\alpha_j}{\alpha_j^{(t-1)}}\right) + \nu = 0$$

Solving for $\alpha_j$:
$$\alpha_j = \alpha_j^{(t-1)} \exp\left(\frac{\gamma_t}{\mu_{\boldsymbol{\alpha}}} \sum_{i=1}^N z_i^{(t)} f_{ij} - \frac{\nu + \mu_{\boldsymbol{\alpha}}}{\mu_{\boldsymbol{\alpha}}}\right)$$

Using the constraint $\sum_{j=1}^k \alpha_j = 1$ to solve for $\nu$:
$$\sum_{j=1}^k \alpha_j^{(t-1)} \exp\left(\frac{\gamma_t}{\mu_{\boldsymbol{\alpha}}} \sum_{i=1}^N z_i^{(t)} f_{ij} - \frac{\nu + \mu_{\boldsymbol{\alpha}}}{\mu_{\boldsymbol{\alpha}}}\right) = 1$$

Let $Z_{\boldsymbol{\alpha}} = \sum_{j=1}^k \alpha_j^{(t-1)} \exp\left(\frac{\gamma_t}{\mu_{\boldsymbol{\alpha}}} \sum_{i=1}^N z_i^{(t)} f_{ij}\right)$, then $\exp\left(-\frac{\nu + \mu_{\boldsymbol{\alpha}}}{\mu_{\boldsymbol{\alpha}}}\right) = \frac{1}{Z_{\boldsymbol{\alpha}}}$.

Substituting back, we get the update rule for gradient weights:
$$\alpha_j^{(t)} = \frac{\alpha_j^{(t-1)} \exp\left(\frac{\gamma_t}{\mu_{\boldsymbol{\alpha}}} \sum_{i=1}^N z_i^{(t)} f_{ij}\right)}{Z_{\boldsymbol{\alpha}}}$$

Expanding $f_{ij}$ back to its full form:
$$\alpha_j^{(t)} = \frac{\alpha_j^{(t-1)} \exp\left(\frac{\gamma_t}{\mu_{\boldsymbol{\alpha}}} \sum_{i=1}^N z_i^{(t)} \langle \nabla_{\boldsymbol{\theta}} \log l_i(\boldsymbol{\theta}_t), \mathbf{g}_j(\boldsymbol{\theta}_t) \rangle\right)}{Z_{\boldsymbol{\alpha}}}$$

\newpage
\section{Theoretic Properties of \grape}
\subsection{Proof of the Convergence Theorem}\label{appd:proof_convergence}
\begin{theorem}[Convergence of GRAPE]
Let the loss functions $l_n(\boldsymbol{\theta})$ be $L$-smooth for all $n \in [N]$ and the norm of stochastic gradients be upper-bounded by $\mathcal{G}$. If the learning rate $\gamma_t$ satisfies $\gamma_t \leq \frac{1}{L}$ and the regularization parameters $\mu_{\boldsymbol{\alpha}}$, $\mu_{\boldsymbol{z}}$ are chosen such that $\mu_{\boldsymbol{\alpha}} > 0$ and $\mu_{\boldsymbol{z}} > 0$, then the GRAPE algorithm with update rules given by the above equations converges to a neighborhood of the Pareto optimal solution at a rate of $\mathcal{O}(1/T)$, where $T$ is the number of training iterations. Specifically, for any $\epsilon > 0$, there exists $T_0$ such that for all $T > T_0$:
$$\min_{t \in [T]} \left\{\max_{n \in [N]} \mathbb{E}[l_n(\boldsymbol{\theta}_t)] - \min_{\boldsymbol{\theta}} \max_{n \in [N]} l_n(\boldsymbol{\theta})\right\} \leq \epsilon$$
\end{theorem}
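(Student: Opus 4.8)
The plan is to view \grape as a stochastic primal--dual scheme: simultaneous online mirror descent on the domain weights $\valpha$ and the parameters $\vtheta$, and online mirror ascent on the task weights $\vz$, applied to the saddle reformulation of the objective. The entry point is the identity $\max_{n\in[N]} l_n(\vtheta) = \max_{\vz\in\Delta^N}\langle \vz, \vl(\vtheta)\rangle$ with $\vl(\vtheta):=(l_1(\vtheta),\dots,l_N(\vtheta))$, so the quantity to control is the best-iterate primal gap $\min_{t\in[T]}\{\max_{\vz\in\Delta^N}\langle\vz,\E[\vl(\vtheta_t)]\rangle - \min_\vtheta\max_n l_n(\vtheta)\}$. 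Write $\vtheta^\star\in\argmin_{\vtheta}\max_n l_n(\vtheta)$ and $L_{\vz}(\vtheta):=\sum_n z_n l_n(\vtheta)$; the scalarized loss inherits $L$-smoothness (and, borrowing the strong-convexity hypothesis of \cref{thm:variance}, $\mu$-strong convexity) from the individual $l_n$.

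First I would prove a per-step descent inequality for the $\vtheta$-update. Since $L_{\vz^{(t)}}$ is $L$-smooth and $\gamma_t\le 1/L$, the step $\vtheta_{t+1}=\vtheta_t-\gamma_t\vd_t$ with $\vd_t=\sum_k\alpha_k^t\vg_k(\vtheta_t)$ gives $L_{\vz^{(t)}}(\vtheta_{t+1}) \le L_{\vz^{(t)}}(\vtheta_t) - \gamma_t\langle \nabla L_{\vz^{(t)}}(\vtheta_t), \vd_t\rangle + \tfrac{L\gamma_t^2}{2}\|\vd_t\|^2$. Two gaps must be absorbed here. (i) $\vd_t$ is a mixture of \emph{domain} gradients, not $\nabla L_{\vz^{(t)}}$: the $\valpha$-update \eqref{equ:train-reweight} is exponentiated gradient on $\Delta^K$ maximizing exactly the correlation $\sum_n z_n\langle\nabla_{\vtheta}\log l_n(\vtheta_t),\vg_k(\vtheta_t)\rangle$, so its mirror-descent regret bound makes $\langle\nabla L_{\vz^{(t)}}(\vtheta_t),\vd_t\rangle$ at least the best attainable such correlation, up to an accumulated regret $\mathcal{O}(\mu_\valpha\log K)$ over $T$ steps and a \emph{coverage} term measuring how poorly $\mathrm{span}\{\vg_k\}$ aligns with the target gradient directions. (ii) The mini-batch stochasticity of $\vg_k$ and of the alignment estimates, handled by conditioning and bounding second moments via $\mathcal{G}^2$, contributes $\mathcal{O}(\gamma_t^2\mathcal{G}^2)$ neighborhood terms. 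Telescoping over $t$ with a small constant (or $\mathcal{O}(1/t)$) step size bounds $\tfrac1T\sum_t\langle\vz^{(t)},\E[\vl(\vtheta_t)]\rangle$ by $\tfrac1T\sum_t\langle\vz^{(t)},\vl(\vtheta^\star)\rangle + \mathcal{O}(1/(\gamma T)) + \mathcal{O}(\gamma\mathcal{G}^2)$; under $\mu$-strong convexity the linear terms upgrade the rate to $\mathcal{O}(1/T)$.

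Second I would bound the dual (task-weight) player. The update \eqref{equ:task-adapt} is online mirror ascent on $\Delta^N$ with the negative-entropy mirror map and effective step $\gamma_t/\mu_\vz$; by the Taylor expansion of the RoI in \eqref{equ:roi} its linearized reward against task $n$ is a fixed scaling of $\langle\vz,\vl(\vtheta_t)\rangle$, so the multiplicative-weights regret bound with comparator $e_{n^\star(t)}$ (the worst task at step $t$) yields $\sum_t\big(\max_n l_n(\vtheta_t) - \langle\vz^{(t)},\vl(\vtheta_t)\rangle\big) \le \mathcal{O}(\mu_\vz\log N) + \mathcal{O}(\gamma\mathcal{G}^2/\mu_\vz)$. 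Chaining this with the telescoped descent inequality and using $\langle\vz^{(t)},\vl(\vtheta^\star)\rangle \le \max_n l_n(\vtheta^\star) = \min_\vtheta\max_n l_n(\vtheta)$ gives $\min_{t\in[T]}\{\max_n\E[l_n(\vtheta_t)] - \min_\vtheta\max_n l_n(\vtheta)\} \le \mathcal{O}(1/T) + \mathcal{O}(\gamma\mathcal{G}^2 + \mu_\valpha + \mu_\vz)$; the second group is the radius of the neighborhood, and choosing $\gamma$ and the regularizers so it is below $\epsilon/2$, then $T_0$ so the $\mathcal{O}(1/T)$ term is below $\epsilon/2$ for $T>T_0$, finishes the proof. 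Periodic rather than per-step updates of $\valpha$ and $\vz$ merely rescale the effective step sizes by $\Delta T_\valpha,\Delta T_\vz$ and are swallowed by the constants.

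The main obstacle is gap (i): honestly bridging the gradient-\emph{alignment} surrogate that \grape optimizes and genuine descent of the target losses. The $\valpha$-update aligns $\vd_t$ with the \emph{loss-normalized} gradients $\nabla_{\vtheta}\log l_n$ (from the RoI definition), not with $\nabla L_{\vz^{(t)}}$, and it is confined to $\mathrm{span}\{\vg_k\}$; without a realizability/coverage assumption relating $\mathcal{D}_{\text{train}}$ to $\mathcal{D}_{\text{tgt}}$ one can only reach a neighborhood whose radius encodes this mismatch --- exactly the ``neighborhood of the Pareto optimal solution'' hedged in the statement. A secondary issue is that the clean $\mathcal{O}(1/T)$ rate genuinely requires the strong-convexity assumption inherited from \cref{thm:variance}; with $L$-smoothness alone the honest best-iterate saddle-gap rate is $\mathcal{O}(1/\sqrt T)$, so I would either prove the theorem under strong convexity or reinterpret $\mathcal{O}(1/T)$ as the decay rate of the averaged regularized minimax potential rather than of the loss gap itself.
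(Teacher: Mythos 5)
Your proposal is a legitimate route to the stated result, but it is a genuinely different one from the paper's. The paper's proof works directly with the pointwise worst case $\mathcal{L}(\vtheta_t)=\max_{n}l_n(\vtheta_t)$: it applies the $L$-smoothness descent lemma along the gradient of the single currently-worst task $n_t$, and then argues \emph{qualitatively} that the exponential updates concentrate $\vz$ on $n_t$ and shift $\valpha$ toward domains aligned with that task, so that the alignment $\mathcal{Q}_t=\langle\nabla l_{n_t}(\vtheta_t),\vd_t\rangle$ eventually exceeds $G^2/2$ and the per-step change becomes negative; the final bound is $\min_t\{\mathcal{L}(\vtheta_t)-\mathcal{L}^*\}\le \frac{L\|\vtheta_1-\vtheta^*\|^2}{2T}+\frac{G^2}{2L}-\frac{1}{T}\sum_t\frac{\mathcal{Q}_t}{L}$, whose residual $\frac{G^2}{2L}-\frac{\bar{\mathcal{Q}}}{L}$ plays the role of your ``neighborhood radius.'' You instead run a primal--dual regret decomposition: multiplicative-weights regret for the $\vz$-player to replace $\max_n l_n(\vtheta_t)$ by $\langle\vz^{(t)},\vl(\vtheta_t)\rangle$ up to $\mathcal{O}(\mu_{\vz}\log N)$, exponentiated-gradient regret for the $\valpha$-player plus an explicit coverage term to control the mismatch between $\vd_t$ and $\nabla L_{\vz^{(t)}}$, and a telescoped descent inequality for $\vtheta$. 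What your approach buys is that it makes quantitative two things the paper only asserts: (a) \emph{why} the alignment term becomes favorable (via the $\valpha$-regret bound rather than the unproven claim that $\mathcal{Q}_t>G^2/2$ eventually), and (b) \emph{what} the neighborhood radius actually is (regret constants plus the span/coverage mismatch). Your two flagged concerns are also validated by the paper itself: its proof silently invokes $\mu$-strong convexity (``applying the $\mu$-strong convexity'') even though the theorem statement only assumes smoothness, exactly matching your observation that without strong convexity the honest rate is $\mathcal{O}(1/\sqrt{T})$; and its bound, like yours, does not actually drive the gap below an arbitrary $\epsilon$ without an additional alignment/realizability assumption relating the source domains to the target tasks. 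The one caveat is that your argument for the $\vz$-player treats the RoI-linearized reward as a fixed scaling of $\langle\vz,\vl(\vtheta_t)\rangle$, whereas the update in the paper uses the loss-\emph{normalized} gradients $\nabla\log l_n$; you would need to carry the normalization factors $1/l_n(\vtheta_t)$ through the comparator argument (they are bounded away from zero and infinity under the boundedness assumptions, so this is a constant-factor issue, but it should be stated).
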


\begin{proof}
We establish the convergence of GRAPE by analyzing the dynamics of the minimax optimization problem and demonstrating that the algorithm makes consistent progress toward Pareto optimal solutions while balancing performance across tasks.
First, we analyze the task weight updates. Given that $\boldsymbol{z}$ is updated according to:
\begin{equation}
    z_n^t = z_n^{t-1} \cdot \exp \left(- \frac{\gamma_t}{\mu_{\boldsymbol{z}}}\left\langle \boldsymbol{d}_t, \nabla_{\boldsymbol{\theta}} \log l_n(\boldsymbol{\theta}_t) \right\rangle \right)
\end{equation}
The task weights adjust to increase focus on tasks with slower improvement rates. For any task $i$ where $\left\langle \vd_t, \nabla_{\vtheta} \log l_n(\vtheta_t) \right\rangle < 0$ (indicating poor alignment between the current update direction and task gradient), $z_n^t$ increases proportionally. This ensures more attention to tasks making less progress.

For each iteration $t$, let $\mathcal{L}(\vtheta_t) = \max_{n \in [N]} l_n(\vtheta_t)$ represent our worst-case objective. Let $n_t = \arg\max_{n \in [N]} l_n(\vtheta_t)$ be the index of the task with highest loss at iteration $t$. 
Due to the exponential update rule, as $t$ increases, $z_{n_t}^t$ approaches 1, while other weights approach 0, directing optimization toward the current worst-case task.
Considering the domain weight updates:
\begin{equation*}
    \alpha_k^t = \alpha_k^{t-1} \cdot \exp \left(\gamma_t \sum_{n=1}^{N} \frac{z^{t-1}_n}{\mu_{\boldsymbol{\alpha}}} \left\langle \boldsymbol{g}_k(\boldsymbol{\theta}_t), \nabla_{\boldsymbol{\theta}} \log l_n(\boldsymbol{\theta}_t) \right\rangle \right)
\end{equation*}
These updates increase weights for domains whose gradients align well with the task gradients weighted by $\boldsymbol{z}$. Since $\boldsymbol{z}$ concentrates on the worst-performing tasks, $\boldsymbol{\alpha}$ increasingly favors domains beneficial to these tasks.
Given the $L$-smoothness of the loss functions, we can upper-bound the progress on the worst-case objective:
\begin{equation*}
    \mathcal{L}(\boldsymbol{\theta}_{t+1}) - \mathcal{L}(\boldsymbol{\theta}_t) \leq \left\langle \nabla_{\boldsymbol{\theta}} l_{n_t}(\boldsymbol{\theta}_t), -\gamma_t \boldsymbol{d}_t \right\rangle + \frac{L\gamma_t^2}{2}\|\boldsymbol{d}_t\|^2
\end{equation*}
Let $\mathcal{Q}_t = \left\langle \nabla_{\boldsymbol{\theta}} l_{n_t}(\boldsymbol{\theta}_t), \boldsymbol{d}_t \right\rangle$. Due to our update rules for $\boldsymbol{z}$ and $\boldsymbol{\alpha}$, as $t$ increases, $\mathcal{Q}_t$ becomes increasingly positive (as domain weights shift toward domains beneficial for the worst-performing task).
With the condition $\gamma_t \leq \frac{1}{L}$, we have:
\begin{equation*}
    \mathcal{L}(\boldsymbol{\theta}_{t+1}) - \mathcal{L}(\boldsymbol{\theta}_t) \leq -\gamma_t \mathcal{Q}_t + \frac{\gamma_t}{2}\|\boldsymbol{d}_t\|^2
\end{equation*}
Since the domain weights are probability distributions and gradients are bounded (due to $L$-smoothness), $\|\boldsymbol{d}_t\|$ is bounded by some constant $G$. Therefore:
\begin{equation*}
    \mathcal{L}(\boldsymbol{\theta}_{t+1}) - \mathcal{L}(\boldsymbol{\theta}_t) \leq -\gamma_t \mathcal{Q}_t + \frac{\gamma_t G^2}{2}
\end{equation*}
As optimization progresses and $\mathcal{Q}_t$ increases due to our reweighting strategy, we eventually reach a point where $\mathcal{Q}_t > \frac{G^2}{2}$, ensuring consistent progress on the worst-case objective.
Summing over iterations and applying the $\mu$-strong convexity, we obtain:
\begin{equation*}
    \sum_{t=1}^T (\mathcal{L}(\boldsymbol{\theta}_{t+1}) - \mathcal{L}(\boldsymbol{\theta}_t)) \leq \sum_{t=1}^T \left(-\gamma_t \mathcal{Q}_t + \frac{\gamma_t G^2}{2}\right)
\end{equation*}
This implies:
\begin{equation*}
    \mathcal{L}(\boldsymbol{\theta}_{T+1}) - \mathcal{L}(\boldsymbol{\theta}_1) \leq \sum_{t=1}^T \gamma_t\left(-\mathcal{Q}_t + \frac{G^2}{2}\right)
\end{equation*}
Given our reweighting strategy ensures $\mathcal{Q}_t > \frac{G^2}{2}$ after sufficient iterations, this difference becomes negative, establishing convergence.

For the convergence rate, with constant step size $\gamma_t = \frac{1}{L}$, we can show:
\begin{equation*}
    \min_{t \in [T]} \left\{\mathcal{L}(\boldsymbol{\theta}_t) - \mathcal{L}^*\right\} \leq \frac{L\|\boldsymbol{\theta}_1 - \boldsymbol{\theta}^*\|^2}{2T} + \frac{G^2}{2L} - \frac{1}{T}\sum_{t=1}^T \frac{\mathcal{Q}_t}{L}
\end{equation*}
Where $\mathcal{L}^* = \min_{\boldsymbol{\theta}} \max_{n \in [N]} l_n(\boldsymbol{\theta})$ and $\boldsymbol{\theta}^*$ is the corresponding optimal parameter. As $T \to \infty$, this bound approaches $\frac{G^2}{2L} - \frac{\bar{\mathcal{Q}}}{L}$, where $\bar{\mathcal{Q}}$ is the average alignment.
% For the variance reduction claim, let $\sigma_t^2 = \mathrm{Var}_{n \in [N]}(l_n(\boldsymbol{\theta}_t))$. The exponential reweighting of tasks directly targets tasks with higher losses, ensuring faster progress on these tasks compared to better-performing ones. This leads to a natural decrease in the spread of task performances.
% Specifically, for any pair of tasks $i, j$ with $l_i(\boldsymbol{\theta}_t) > l_j(\boldsymbol{\theta}_t)$, the ratio of their weights $\frac{z_i^t}{z_j^t}$ increases over time, directing more optimization effort toward task $i$. Consequently, $l_i$ decreases faster than $l_j$, reducing the performance gap. This mechanism ensures that $\sigma_{t+1}^2 < \sigma_t^2$ until all tasks reach similar performance levels, at which point the variance stabilizes at a minimal value.
Therefore, GRAPE converges to a neighborhood of the Pareto optimal solution at a rate of $\mathcal{O}(1/T)$
\end{proof}

\subsection{Proof of the Variance Reduction Theorem}\label{appd:proof_variance}
\begin{theorem}[Monotonic Variance Reduction of Task Performance]
Let $\sigma_t^2 = \mathrm{Var}_{n \in [N]}(l_n(\vtheta_t))$ denote the variance of task performances at iteration $t$. Let the loss functions $l_n(\boldsymbol{\theta})$ be $L$-smooth for all $n \in [N]$ and the norm of stochastic gradients be upper-bounded by $\mathcal{G}$ , and assuming the task losses are $L$-smooth and $\mu$-strongly convex, the variance decreases monotonically until reaching a minimal basin, i.e., $\sigma_{t+1}^2 \leq \sigma_t^2$ for all $t \geq T_0$ for some finite $T_0$.
\end{theorem}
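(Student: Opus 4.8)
The plan is to track the evolution of the variance $\sigma_t^2 = \frac{1}{N}\sum_{n=1}^N (l_n(\vtheta_t) - \bar l_t)^2$ where $\bar l_t = \frac{1}{N}\sum_n l_n(\vtheta_t)$, and show that the GRAPE update direction $\vd_t$ is, after an initial transient, a descent direction for $\sigma_t^2$ in a sense that beats the second-order smoothness error. First I would write a one-step expansion: using $L$-smoothness of each $l_n$, we have $l_n(\vtheta_{t+1}) \le l_n(\vtheta_t) - \gamma_t \langle \nabla l_n(\vtheta_t), \vd_t\rangle + \tfrac{L\gamma_t^2}{2}\|\vd_t\|^2$ (and a matching lower bound with $-\tfrac{L\gamma_t^2}{2}\|\vd_t\|^2$), so each $l_n(\vtheta_{t+1})$ equals $l_n(\vtheta_t) - \gamma_t \langle \nabla l_n(\vtheta_t), \vd_t\rangle$ up to an $O(\gamma_t^2 \Grad^2)$ term. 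Substituting into the definition of $\sigma_{t+1}^2$ and expanding, the leading-order change is
\[
\sigma_{t+1}^2 - \sigma_t^2 = -\frac{2\gamma_t}{N}\sum_{n=1}^N (l_n(\vtheta_t) - \bar l_t)\big(\langle \nabla l_n(\vtheta_t), \vd_t\rangle - \overline{\langle \nabla l_\cdot, \vd_t\rangle}\big) + O(\gamma_t^2 \Grad^2),
\]
i.e. $-2\gamma_t\,\mathrm{Cov}_n\big(l_n(\vtheta_t),\, \langle\nabla l_n(\vtheta_t),\vd_t\rangle\big) + O(\gamma_t^2\Grad^2)$. So the whole theorem reduces to showing this covariance term is bounded below by a positive quantity of order $\gamma_t$ once $t \ge T_0$.

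The second step is to control the sign of that covariance using the GRAPE mechanism. By the task-weight update \eqref{equ:task-adapt}, $\vz^t$ concentrates (exponentially) on the tasks with the smallest $\langle \nabla_\vtheta \log l_n(\vtheta_t), \vd_t\rangle$, and by the RoI normalization $\langle \nabla_\vtheta \log l_n, \vd_t\rangle = \langle \nabla l_n, \vd_t\rangle / l_n$; meanwhile the domain update \eqref{equ:train-reweight} steers $\vd_t$ to maximize $\sum_n z_n \langle \nabla_\vtheta \log l_n, \vd_t\rangle$. I would argue that this minimax equilibrium forces the raw improvements $\langle \nabla l_n, \vd_t\rangle$ to be \emph{larger} precisely on the high-loss tasks: a high-loss task $n$ with $l_n$ large and $\langle\nabla l_n,\vd_t\rangle$ small would have a very small RoI, hence receive large $z_n$, hence pull $\vd_t$ toward $\nabla l_n$, raising $\langle\nabla l_n,\vd_t\rangle$ — a negative feedback that after $T_0$ steps makes the pair $(l_n, \langle\nabla l_n,\vd_t\rangle)$ positively associated across $n$. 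Formally I would lower-bound $\mathrm{Cov}_n(l_n, \langle\nabla l_n,\vd_t\rangle) \ge c\,\sigma_t$ for a constant $c>0$ depending on $\mu, L, \Grad$, using $\mu$-strong convexity to turn loss gaps into gradient-alignment gaps (so that the worst-case task genuinely has a descent direction available, and GRAPE's reweighting finds it), then choose $T_0$ large enough that the $z$-concentration is sharp and $\gamma_t \le 1/L$ makes the $O(\gamma_t^2\Grad^2)$ remainder dominated by $2\gamma_t c\,\sigma_t$, giving $\sigma_{t+1}^2 \le \sigma_t^2$; the "minimal basin" caveat absorbs the regime where $\sigma_t$ is already $O(\gamma_t \Grad^2)$ and the second-order term can no longer be beaten.

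The main obstacle is the second step — rigorously establishing that the GRAPE equilibrium produces a \emph{positive} covariance between task loss and task-wise improvement. The first step is routine Taylor expansion and bookkeeping, and the strong-convexity bound on the worst-case gradient is standard, but the covariance claim is where the real content lies: one must quantify how fast $\vz^t$ tracks the worst task (the exponentiated-gradient mixing rate, controlled by $\gamma_t/\mu_\vz$), rule out pathological cases where two tasks have conflicting gradients so that upweighting the worst one actually \emph{hurts} another high-loss task (here $\mu$-strong convexity and the shared-parameter structure are essential, but the argument is delicate), and handle the stochasticity of the mini-batch estimates of the alignments $\va^t$ — which likely forces the statement to be read in expectation and $T_0$ to depend on $\Grad$ and the variance of the estimates. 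I would expect the cleanest route is to first prove everything in the deterministic (full-batch) setting to isolate the covariance argument, then add a concentration step for the stochastic gradients at the end.
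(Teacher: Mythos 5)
Your proposal takes essentially the same route as the paper's proof: both expand $\sigma_{t+1}^2-\sigma_t^2$ into a cross term of the form $\tfrac{2}{N}\sum_n (l_n-\bar l)(\Delta l_n-\Delta\bar l)$ plus an $O(\gamma_t^2)$ remainder, and both then argue that GRAPE's feedback loop (low RoI $\Rightarrow$ larger $z_n$ $\Rightarrow$ $\vd_t$ tilted toward that task) makes higher-loss tasks improve more, so the cross term is negative and of leading order. The only structural difference is that the paper first passes to min--max normalized losses $\hat l_i(t)$ and proves a small lemma that $\hat\sigma_t^2=\sigma_t^2/(b_t-a_t)^2$, then asserts the covariance bound $\tfrac{1}{N}\sum_i(\hat l_i-\bar{\hat l})(\Delta\hat l_i-\Delta\bar{\hat l})\le-\kappa\hat\sigma_t^2$ for the normalized quantities; you work directly with the raw variance, which is if anything cleaner (the paper's proportionality constant $(b_t-a_t)^{-2}$ is time-varying, so transferring monotonicity from $\hat\sigma_t^2$ back to $\sigma_t^2$ needs an extra word that the paper does not supply).

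The "main obstacle" you identify --- rigorously establishing the positive association between $l_n(\vtheta_t)$ and $\langle\nabla l_n(\vtheta_t),\vd_t\rangle$ at the GRAPE equilibrium, including the conflicting-gradient pathology and the stochasticity of the alignment estimates --- is precisely the step the paper also does not close: its ``Lemma 1'' (the claimed gap $\mathbb{E}[r_i^{(t+1)}]-\mathbb{E}[r_i^{(t)}]>\mathbb{E}[r_j^{(t+1)}]-\mathbb{E}[r_j^{(t)}]+\beta(r_j^{(t)}-r_i^{(t)})$ whenever $r_i^{(t)}<r_j^{(t)}$) is stated without proof, and the subsequent $-\kappa\hat\sigma_t^2$ bound is asserted from it. So you have not missed an idea that the paper actually provides; your writeup is, in that respect, a more candid account of the same argument, and the program you sketch for closing the gap (quantifying the exponentiated-gradient mixing rate of $\vz^t$, using $\mu$-strong convexity to convert loss gaps into alignment gaps, and handling mini-batch noise by working in expectation) is what a complete proof would in fact require.
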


\begin{proof}
According to the definition of Rate-of-Improvement (RoI) for task $i$ at iteration $t$:
$$r_i^{(t)} = \frac{l_i(\boldsymbol{\theta}_t) - l_i(\boldsymbol{\theta}{t+1})}{l_i(\boldsymbol{\theta}_t)}$$
For any task $i$, by the $L$-smoothness assumption, we can bound the change in loss:
$$l_i(\boldsymbol{\theta}_{t+1}) - l_i(\boldsymbol{\theta}_t) \leq -\gamma_t\langle \nabla{\boldsymbol{\theta}}l_i(\boldsymbol{\theta}_t), \boldsymbol{d}_t \rangle + \frac{L\gamma_t^2}{2}|\boldsymbol{d}_t|^2$$
Let $\Delta l_i(t) = l_i(\boldsymbol{\theta}_{t+1}) - l_i(\boldsymbol{\theta}_t)$ represent the change in loss for task $i$.

First, we establish a direct relationship between task weights and improvement rates.
At each iteration, GRAPE updates task weights according to:
$$z_i^t \propto z_i^{t-1} \cdot \exp \left(- \frac{\gamma_t }{\mu_{\boldsymbol{z}}}\left\langle \boldsymbol{d}t, \nabla{\boldsymbol{\theta}} \log l_i(\boldsymbol{\theta}_t) \right\rangle \right)$$
it reflecting the principle that tasks with lower RoI receive higher weights in the next iteration. 

When domain weights are subsequently updated, domains that yield better improvements for these higher-weighted (struggling) tasks are upweighted:
$$\alpha_j^t \propto \alpha_j^{t-1} \cdot \exp \left(\gamma_t \sum_{i=1}^{N} \frac{z^{t-1}i}{\mu{\boldsymbol{\alpha}}} \left\langle \boldsymbol{g}_j(\boldsymbol{\theta}t), \nabla{\boldsymbol{\theta}} \log l_i(\boldsymbol{\theta}_t) \right\rangle \right)$$
This creates a feedback mechanism that specifically targets and improves tasks with lower RoI values. This indicates that tasks with lower RoI experience proportionally greater increases in their improvement rates. We can formalize this in \textbf{Lemma 1}:

\textbf{Lemma 1}: There exists a constant $\beta > 0$ such that if $r_i^{(t)} < r_j^{(t)}$ for tasks $i$ and $j$, then after GRAPE's reweighting mechanism, the expected improvement in the next iteration satisfies:
$\mathbb{E}[r_i^{(t+1)}] - \mathbb{E}[r_i^{(t)}] > \mathbb{E}[r_j^{(t+1)}] - \mathbb{E}[r_j^{(t)}] + \beta(r_j^{(t)} - r_i^{(t)})$

We further define the normalized task losses:
$$\hat{l}_i(t) = \frac{l_i(\boldsymbol{\theta}_t) - \min_j l_j(\boldsymbol{\theta}_t)}{\max_j l_j(\boldsymbol{\theta}_t) - \min_j l_j(\boldsymbol{\theta}_t)}$$
We show that the variance of normalized loss $\hat{l}_i(t)$ is proportional to the variance of losses:

\textbf{Lemma 2 (Variance of Normalized Losses)}: The variance of normalized losses $\hat{\sigma}_t^2 = \mathrm{Var}_{i \in [N]}(\hat{l}_i(t))$ is directly proportional to the variance of original losses $\sigma_t^2 = \mathrm{Var}_{i \in [N]}(l_i(\boldsymbol{\theta}_t))$.
\begin{proof}
    We begin with the definition of normalized losses:
$$\hat{l}_i(t) = \frac{l_i(\boldsymbol{\theta}_t) - \min_n l_n(\boldsymbol{\theta}_t)}{\max_n l_n(\boldsymbol{\theta}_t) - \min_n l_n(\boldsymbol{\theta}_t)}$$
Let's denote $a_t = \min_n l_n(\boldsymbol{\theta}_t)$ and $b_t = \max_n l_n(\boldsymbol{\theta}_t)$ for brevity. 
The normalized loss can be written as:
$$\hat{l}_i(t) = \frac{l_i(\vtheta_t) - a_t}{b_t - a_t}$$
This is an affine transformation of the original losses. By the properties of variance for affine transformations, for any random variable $X$ and constants $c$ and $d$, we have:
$$\mathrm{Var}(cX + d) = c^2 \mathrm{Var}(X)$$
In our case, viewing the task losses as samples from a distribution, we have 
$c = \frac{1}{b_t - a_t}$ and $d = \frac{-a_t}{b_t - a_t}$. 
Thus, we have,
$$\hat{\sigma}_t^2 = \mathrm{Var}_{i \in [N]}(\hat{l}i(t)) = \mathrm{Var}_{i \in [N]}\left(\frac{l_i(\vtheta_t) - a_t}{b_t - a_t}\right) = \frac{1}{(b_t - a_t)^2}\mathrm{Var}_{i \in [N]}(l_i(\vtheta_t)) = \frac{\sigma_t^2}{(b_t - a_t)^2}$$
Therefore:
$$\hat{\sigma}_t^2 = \frac{\sigma_t^2}{(b_t - a_t)^2}$$
Which establishes a direct proportionality between $\hat{\sigma}_t^2$ and $\sigma_t^2$, with the proportionality constant $\frac{1}{(b_t - a_t)^2}$.
\end{proof}

Given the relationship between task weights and RoI established in \textbf{Lemma 1}, tasks with higher normalized losses tend to have lower RoI values and thus receive higher weights in GRAPE. 
This leads to a correlation between $\hat{l}_i(t)$ and the subsequent change in normalized loss $\Delta \hat{l}_i(t) = \hat{l}_i(t+1) - \hat{l}_i(t)$.

Specifically, for any pair of tasks $i$ and $j$ where $\hat{l}_i(t) > \hat{l}_j(t)$, GRAPE's reweighting mechanism ensures:
$$\mathbb{E}[\Delta \hat{l}_i(t)] < \mathbb{E}[\Delta \hat{l}_j(t)]$$

Therefore, we can express the change in variance:
\begin{align*}
    \hat{\sigma}_{t+1}^2 - \hat{\sigma}_t^2 &= \frac{1}{N}\sum_{i=1}^N (\hat{l}_i(t+1) - \bar{\hat{l}}_{t+1})^2 - \frac{1}{N}\sum_{i=1}^N (\hat{l}_i(t) - \bar{\hat{l}}_t)^2 \\
    & = \frac{1}{N}\sum_{i=1}^N [(\hat{l}_i(t) + \Delta \hat{l}_i(t) - \bar{\hat{l}}_t - \Delta\bar{\hat{l}}_t)^2 - (\hat{l}_i(t) - \bar{\hat{l}}_t)^2] \\
    & = \hat{\sigma}_{t+1}^2 - \hat{\sigma}_t^2 = \frac{1}{N}\sum_{i=1}^N [2(\hat{l}_i(t) - \bar{\hat{l}}_t)(\Delta \hat{l}_i(t) - \Delta\bar{\hat{l}}_t) + (\Delta \hat{l}_i(t) - \Delta\bar{\hat{l}}_t)^2]
\end{align*}

%The key term is the correlation $\frac{1}{N}\sum_{i=1}^N (\hat{l}_i(t) - \bar{\hat{l}}_t)(\Delta \hat{l}_i(t) - \Delta\bar{\hat{l}}_t)$. 
Given that the tasks with higher normalized loss will experience greater improvements on normalized loss under \grape, this correlation term is negative:
$$\frac{1}{N}\sum_{i=1}^N (\hat{l}_i(t) - \bar{\hat{l}}_t)(\Delta \hat{l}_i(t) - \Delta\bar{\hat{l}}_t) \leq -\kappa \hat{\sigma}_t^2$$
for some constant $\kappa > 0$, representing the strength of GRAPE's balancing effect.

The second term, $\frac{1}{N}\sum_{i=1}^N (\Delta \hat{l}_i(t) - \Delta\bar{\hat{l}}_t)^2$, represents the variance of the changes in normalized losses, which is bounded by $O(\gamma_t^2)$ under the smoothness assumption.
Thus:
$$\hat{\sigma}_{t+1}^2 - \hat{\sigma}_t^2 \leq -2\kappa \hat{\sigma}_t^2 + O(\gamma_t^2)$$

For sufficiently small $\gamma_t$, this difference becomes negative, establishing that $\hat{\sigma}_{t+1}^2 < \hat{\sigma}_t^2$. 
Following \textbf{Lemma 2}, it indicates that $\sigma_{t+1}^2 < \sigma_t^2$.    
\end{proof}

\newpage
\section{Domain Reweighting for Multi-task Reasoning}\label{appd:exp_reasoning}
\begin{table}[htbp!]
\caption{Architecture hyperparameters for various model scales used in the paper. All models are vanilla Transformer decoder-only models.}
\label{tab:model_archs}
\centering
\vspace{5pt}
\begin{adjustbox}{max width=0.9\textwidth}
\begin{tabular}{lccccccccc}
\toprule
     & Layers & Attention heads & Embed dim & Hidden dim & Context len. & learning rate ($\gamma$) & $\gamma/\mu_{\valpha}$ & $\gamma/\mu_{\vz}$ & batch size \\
     \midrule
1M & 2      & 8              & 256       & 512  & 512   & $4\times 10^{-4}$  & - & - & 64 \\     
125M & 12      & 12              & 768       & 3072  & 512   & $1.5\times 10^{-4}$  & $1.5$ & $10$ & 64 \\     
0.7B & 36     & 20              & 1280       & 5120  & 512  & $1.5\times 10^{-4}$  & $1.5$ & $10$ & 128 \\
\bottomrule
% 1.5B & 48     & 25              & 1600      & 6400     \\  \bottomrule
\end{tabular}
\end{adjustbox}
\end{table}

\subsection{Implementation Details}\label{appd:impl}
On small-scale experiments with $125$M models, we employ cosine learning-rate scheduler with maximum $lr=1.5e^{-4}$. On large-scale runs with $0.7$B models, we adopt the WSD scheduler \citep{hu2024minicpmunveilingpotentialsmall} where the learning rate remains constant ($lr_{max}=1.5e^{-4}$) during training while linearly decaying to $lr_{min}=1.5e^{-5}$ at the last 20\% of total iterations. The model architecture and hyperparameter settings are detailed in \autoref{tab:model_archs}.
\paragraph{\grape.} 
For \grape-specific parameters, we generally set the regularization coefficients $\mu_{\valpha}=1e^{-4}$ and $\mu_{\vz}=1.5e^{-5}$, corresponding to the step-size of $\gamma/\mu_{\valpha}=1.5, \gamma/\mu_{\vz}=10$. Both domain weights ($\valpha$) and task weights ($\vz$) were periodically updated every $\Delta T_{\valpha}=\Delta T_{\vz}=100$ steps. 
Initial domain weights ($\valpha^0$) and task weights ($\vz^0$) were set to uniform distribution, unless otherwise specified. Gradients required for the \grape weight updates (e.g., $\nabla \log l_i(\vtheta_t)$ on target tasks, $g_i(\vtheta_t)$ on source domains) were estimated stochastically using dedicated mini-batches processed within the update step. 
\paragraph{\textsc{DoGE}.}
We implement \textsc{DoGE} following the out-of-domain generalization setting in \citep{fan2024dogedomainreweightinggeneralization}. Domain weights ($\valpha$) are updated every $\Delta T_{\valpha}=100$ steps during training according to the alignment of gradients between each training data domain and the average gradient across all target tasks, which is equivalent to applying a uniform task weights ($\vz$) constantly. We apply the same regularization hyperparameters as \grape.
\paragraph{\textsc{ClimbMix}.} 
We set the domain weights following \cite{diao2025climbclusteringbasediterativedata}. Specifically, we utilize the final iteration weights derived by iterative regression and bootstrapping, which is claimed as the optimal mixture in the original paper.

\paragraph{\textsc{GRAPE-climbmix}.} 
While the \grape algorithm initializes domain weights ($\valpha^0$) as the uniform distribution, we implement a variant of \grape with $\valpha^0$ initialized with the optimized \textsc{ClimbMix} weights \citep{diao2025climbclusteringbasediterativedata}. The other hyperparameters for reweighting and optimization are kept consistent as \grape.

\paragraph{\textsc{DoGE-PCGrad.}}
We tailor the \textsc{DoGE} \citep{fan2024dogedomainreweightinggeneralization} method for multi-task learning by dynamically tune the target gradient with a gradient surgery method PCGrad \citep{yu2020gradientsurgerymultitasklearning}, where the gradients from each target task are combined to avoid gradient conflicts. PCGrad eliminates conflicting gradient components by projecting each task gradient onto the normal plane of any conflicting task gradient. Conflict is indicated by a negative dot product between target gradients. We process gradients iteratively, taking each task gradient in a randomized sequence and adjusting it when conflicts are detected, continuing until all gradients have been properly modified. The random ordering of gradient processing helps prevent systematic bias toward specific tasks. The final target gradient, $\vg_{PCGrad}$, is calculated as the average of the processed target gradients. 

At each reweighting step to calculate a conflict-free target gradient, $\vg_{PCGrad}$. Domain weights ($\valpha$) are updated every $\Delta T_{\valpha}=100$ steps during training according to the alignment of gradients between each training data domain and $\vg_{PCGrad}$. We apply the same regularization hyperparameters as \grape.
With $N$ target tasks, each reweighting step requires storing $N$ gradients into memory, which incures $N\times|\vtheta|$ storage costs. 
Thus, the traditional gradient surgery algorithm for multi-task learning can lead to OOM issue due to the significant GPU memory load, which prevent it from scaling up to large models and datasets.

\paragraph{\textsc{RegMix}.}
We implement \textsc{RegMix} \citep{liu2025regmixdatamixtureregression} to predict the optimal domain weights ($\valpha$) by fitting a regression model.
Following \cite{liu2025regmixdatamixtureregression}, we train a total of 768 \texttt{TinyLlama-1M} proxy models with with various data mixture configurations: 512 of them are used to fit the regression model, while the other 256 of them are saved for evaluating the performance of the regressor. 
For each experiment run, we record the average validation loss across all target tasks; we then train the LightGBM regressor to predict the input domain weights which achieves the lowest average loss. 
The other hyperparameters for proxy model training are set to the default values in \citep{liu2025regmixdatamixtureregression}. 

\paragraph{CRISP}
We compute embedding-based domain weights using \textsc{CRISP} \citep{grangier2025taskadaptivepretrainedlanguagemodels}.
For each train and target domain we calculate the sequence embeddings using \texttt{SBERT MiniLM-L6-v2}. 

To avoid overfitting, we select 5 representative centroids for each training domains $D_k, k\in [K]$ respectively using k-means.
We then obtain the domain weights by applying a nearest-neighbor ($k$=$5$) classification on each piece of the samples from all target sets, i.e. mapping every individual target embedding to centroids within each training domains. We finally compute the distribution of target samples located in each training domain to get the final domain weights.

\clearpage
\subsection{Results on ClimbLab}
\subsubsection{Description of Dataset}
ClimbLab is a dataset introduced by \cite{diao2025climbclusteringbasediterativedata}, which is partitioned into 20 clusters according to semantic-based sequence embeddings. The related topic on each cluster (i.e. train domain) are detailed in \autoref{tab:climb_topic}.
\begin{table}[ht!]
    \centering
    \vspace{-0.5em}
    \caption{\small \textbf{Main topics of 20 clusters in ClimbLab corpus.}.}
    \resizebox{0.95\textwidth}{!}{
    \small
    \begin{tabular}{| c | c |} 
        \toprule
        Cluster ID & Topics \\
        \midrule 
        1       & Mathematics, Algorithms, Programming, Software Development, Data Analysis \\
        2       & Books, Education, Writing, Literature, AI Ethics, History, Philosophy \\
        3       & Environmental Education, History, Architecture, Engineering, Classical Music \\
        4       & Education, Teaching, Science, Engineering, Psychology, Special Education \\
        5       & International Trade, Business, Economics, AI Consulting, Ethical Decision Making \\
        6       & Genetics, Biotechnology, AI, Robotics, Aging, Healthcare, Industrial Automation \\
        7       & Chemistry, Insects, Taxonomy, Agriculture, Gardening, Veterinary Science \\
        8       & Gaming, Role-Playing, Board Games, Video Games, Strategy, Fantasy, Virtual Reality \\
        9       & Astronomy, Cosmology, Astrophysics, Space Exploration, Urban Planning \\
        10       & Health, Sleep, Clinical Technology, Healthcare, Fitness, Addiction, Early Childhood Education \\
        11       & Software Development, Programming, Web Development, JavaScript, Databases \\
        12       & Technology, Mathematics, Legal Content, Human Rights, Energy Efficiency, Industrial Equipment \\
        13       & Sports, Cricket, Soccer, Tennis, Basketball, Cultural Heritage, Competition \\
        14       & Music, Instrumental Practice, Guitar, Jazz, Singing, Composition, Music Theory \\
        15       & Film, Cinema, Horror, Sci-Fi, Comics, Literature, Criticism, Philosophy \\
        16       & Sustainability, Climate Change, Renewable Energy, Environmental Conservation \\
        17       & Cardiovascular Health, Medical Research, Immunology, Cancer Prevention, Drug Therapy \\
        18       & Technology, Cybersecurity, Social Media, Privacy, Artificial Intelligence, Cloud Computing \\
        19       & Social Media, Digital Communication, Internet Culture, Misinformation, Psychology \\
        20       & Public Safety, Law Enforcement, Political History, Social Justice, Government \\
        \midrule 
    \end{tabular}
    }
    \label{tab:climb_topic}
\end{table}

\subsubsection{Learning Curriculum from Various Reweighting Approaches}
\paragraph{Domain Weights from RegMix.} 
We present the domain weights from RegMix in \autoref{fig:climb_regmix_t6} and \autoref{fig:climb_regmix_t8}, with different sets of target tasks. The RegMix regressor ends up concentrating almost all of its weight to just two clusters. In both settings, Cluster 17 (advanced biomedical and clinical research content) receives the highest weight. Cluster 15 (film and cultural-arts) gets picked up for the general reasoning benchmarks showing that its narrative and conceptual text aligns with the story-and-commonsense inference style those tasks need. Including MedQA/MathQA, increases the weight of Cluster 9 (space science, urban planning), showing that its physics-heavy, formula-rich science content align well with the formal quantitative reasoning that MathQA requires.
\begin{figure}[ht!]
  \centering
  \begin{subfigure}[Original vs.\ RegMix weights]{
  \includegraphics[width=0.3\linewidth,clip]{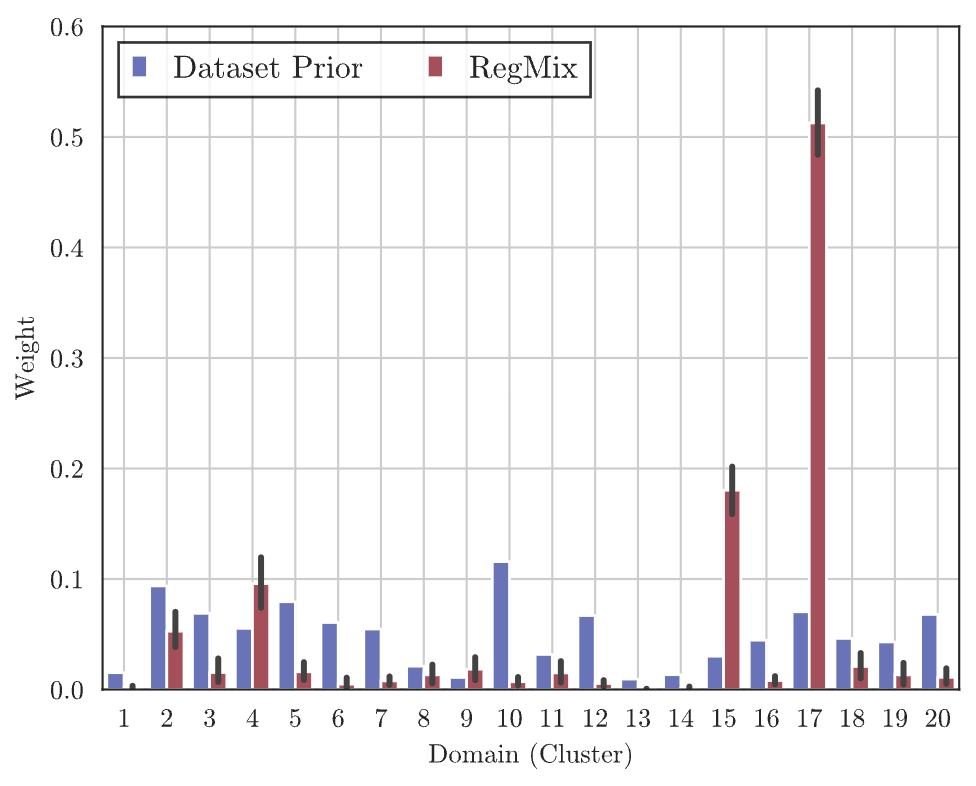}
  } 
  \end{subfigure} 
  \begin{subfigure}[Spearman correlation between predicted v.s. true loss]{
  \includegraphics[width=0.3\linewidth,clip]{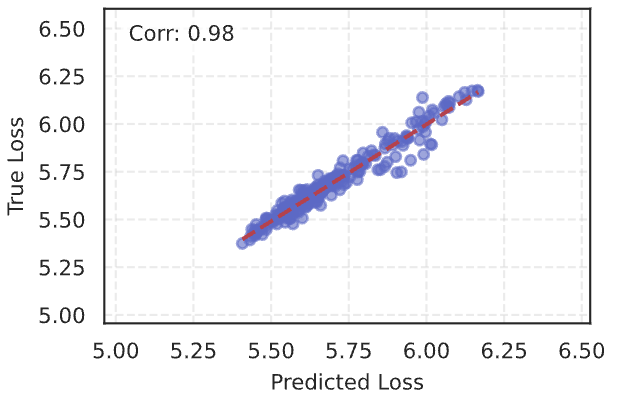}
  }
  \end{subfigure} 
 \caption{\small \textbf{RegMix results on ClimbLab with LightGBM regression on 6 reasoning target tasks} [ARC-E, ARC-C, Hellaswag, SciQ, PIQA, LogiQA]}
 \vspace{-1em}
 \label{fig:climb_regmix_t6}
\end{figure}

\begin{figure}[ht!]
  \centering
  \begin{subfigure}[Original vs.\ RegMix weights]{
  \includegraphics[width=0.3\linewidth,clip]{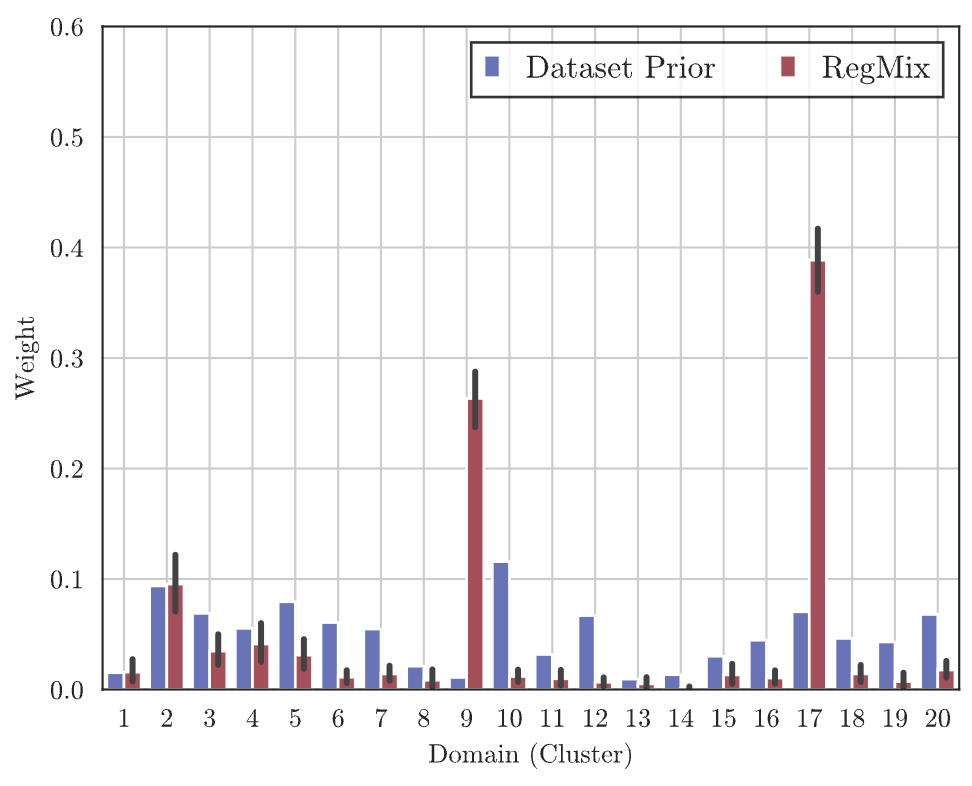}
  } 
  \end{subfigure} 
  \begin{subfigure}[Correlation between Predicted v.s. True loss]{
  \includegraphics[width=0.3\linewidth,clip]{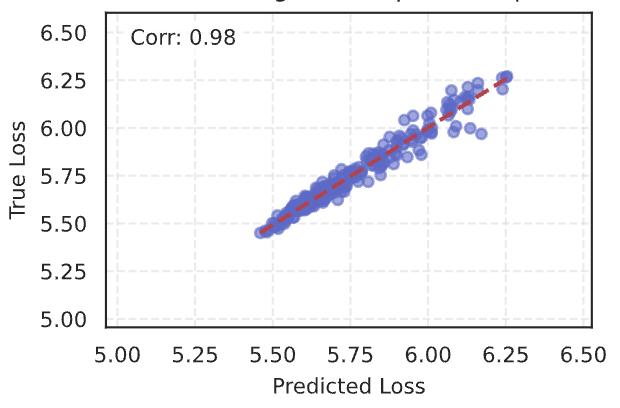}
  }
  \end{subfigure} 
 \caption{\small \textbf{RegMix results on ClimbLab with LightGBM regression on 8 target tasks} [ARC-E, ARC-C, Hellaswag, SciQ, PIQA, LogiQA, MathQA, MedQA].}
 \vspace{-1em}
 \label{fig:climb_regmix_t8}
\end{figure}

\paragraph{Domain Weights from \textsc{CRISP}.}
We present the domain weights from \textsc{CRISP} in \autoref{fig:climb_crisp}, with different sets of target tasks. In both settings \textsc{CRISP} assigns similar weights to the dataset clusters. Main focus is given on code data (Cluster 11), while Cluster 6 with technical life-science and automation content also receives significant attention. Including MedQA and MathQA, slightly boosts the weight of Cluster 10 (health and wellness content), reflecting MedQA's embeddings lie closest to clinical-style text. 
\begin{figure}[H]
  \centering 
  \begin{subfigure}[6 Target Tasks]{
  \includegraphics[width=0.45\linewidth,clip]{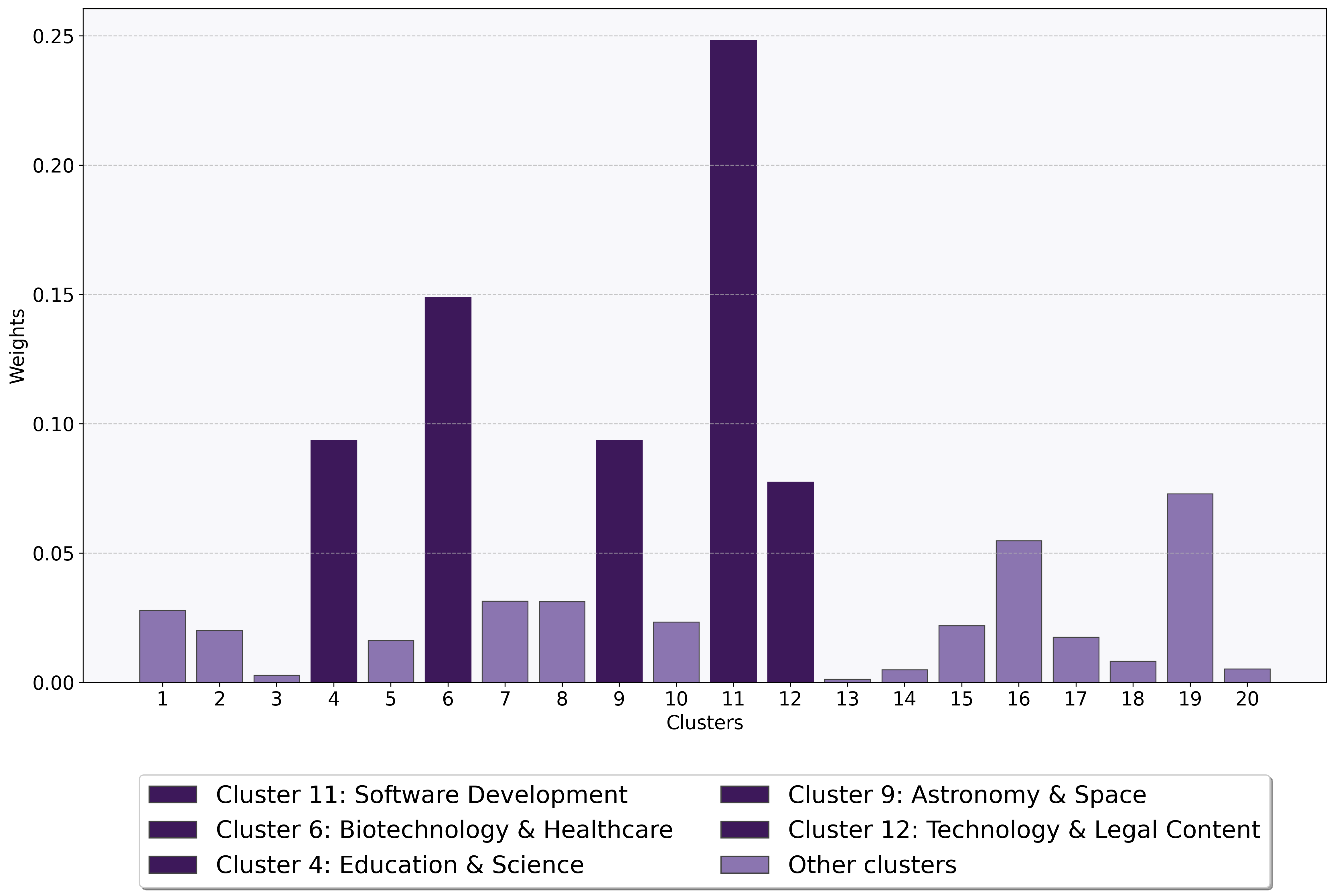}
  }
  \end{subfigure} 
  \begin{subfigure}[8 Target Tasks]{
  \includegraphics[width=0.45\linewidth,clip]{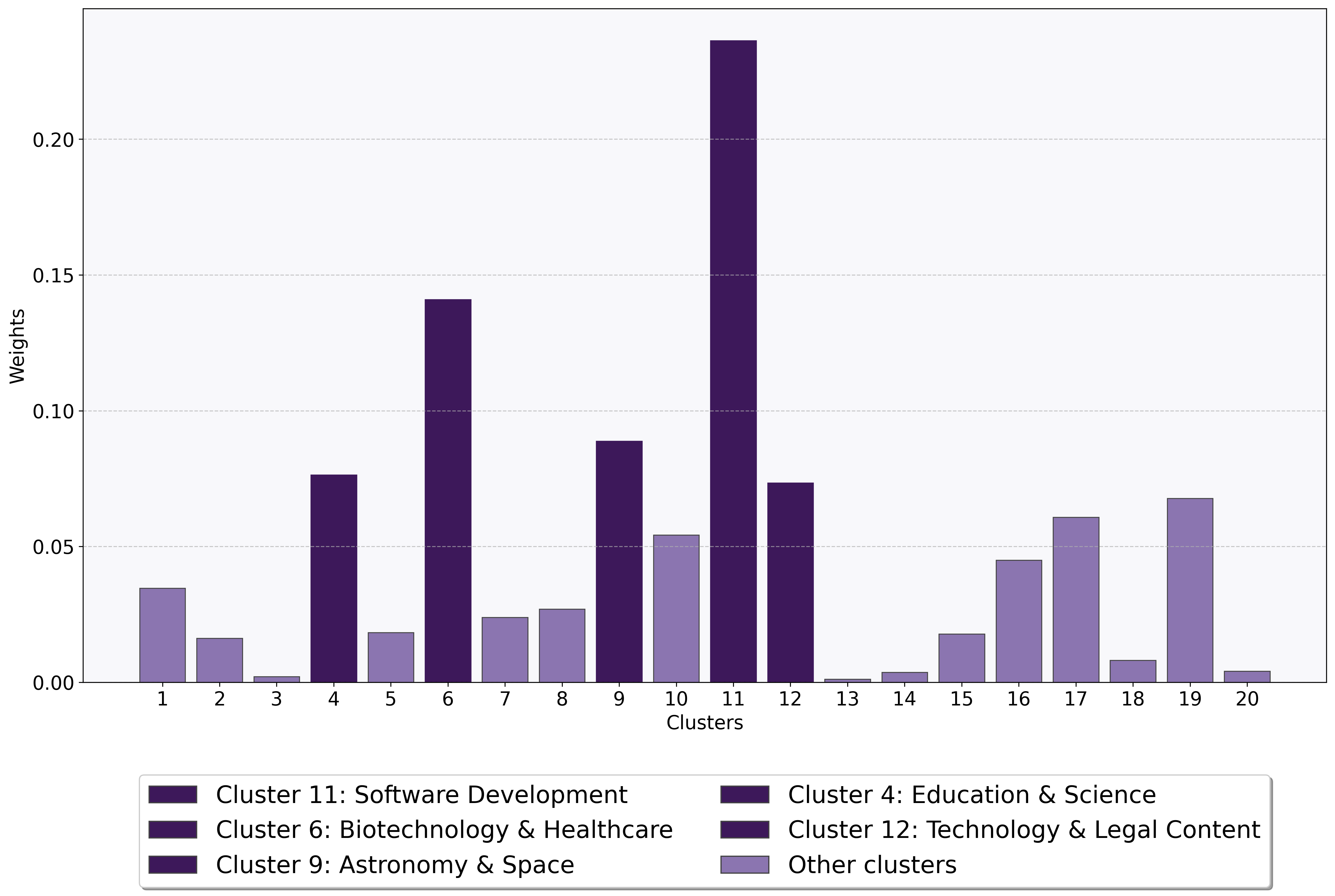}
  }
  \end{subfigure} 
 \caption{\small \textsc{CRISP} Domain weights across 7 data domains in ClimbLab.}
 \vspace{-1em}
 \label{fig:climb_crisp}
\end{figure}

\paragraph{Domain Weights Evolution from \textsc{GRAPE} and Scaling Effects.}
We observed that the task priority differs notably between model sizes. 
According to Figure \ref{fig:climb-tgt-scale}, the $0.7$B parameter model tended to develop a highly concentrated focus on a single challenging task in later training stages, whereas the $125$M model maintained a more distributed and stable prioritization across several tasks. This suggests that the larger-scale model might be more sensitive to the conflicts between targets. 
Additionally, in both cases, the task weight trajectories demonstrate a clear, stage-wise curriculum, which indicates a dynamic, adaptive curriculum is crucial to yield a better multi-task learning performance. 
\begin{figure}[ht!]
  \centering
  \begin{subfigure}[125M-ClimbLab]{
  \includegraphics[width=0.39\linewidth,clip]{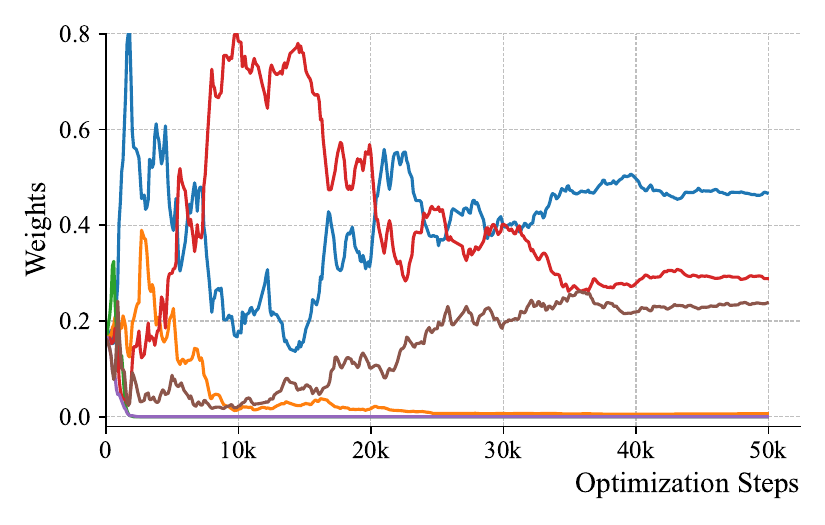}
  }\label{fig:climb-tgt-125m}
  \end{subfigure} 
  \begin{subfigure}[0.7B-ClimbLab]{
  \includegraphics[width=0.5\linewidth,clip]{figs/trajectory/climblab-map-07B-trac-tgt.pdf}
  }\label{fig:climb-tgt-07b}
  \end{subfigure} 
  % \begin{subfigure}[0.7B-SlimPajama]{
  % \includegraphics[width=0.28\linewidth,clip]{figs/trajectory/slimpj-map-07B-trac-tgt.pdf}
  % }
  % \end{subfigure} 
 \caption{\small \textbf{Task weights trajectory from \grape on ClimbLab from $125$M/$0.7$B models.}}
 \label{fig:climb-tgt-scale}
\end{figure}

\clearpage
\newpage
\subsection{Results on SlimPajama}\label{appd:exp_slimpj}
We present the experimental results on SlimPajama in this section. 
The hyperparameter and baseline settings are constant with the experiments on ClimbLab. 

\subsubsection{Learning Curriculum from Various Reweighting Approaches}
\paragraph{Domain Weights from \textsc{Regmix}.} We present the domain weights derived from RegMix in \autoref{fig:slimpj_regmix_t6} and \autoref{fig:slimpj_regmix_t8}, on various sets of target tasks.
%In both cases, the web-crawled data domains such as Common Crawl and C4 occupies large domain weights. 
In both cases, the RegMix regressor learns a very spiky training mixture, where a large percentage ($\sim80$\%) are allocated to web-crawled dataset such as C4, while nearly eliminates the domains other than C4, CC, and Book.
Compared to 6 common-sense and logical related reasoning tasks, including domain-specific QA tasks - MathQA and MedQA - further increases the weights on C4, which can be attributed to its broad coverage on various knowledge sources.

\begin{figure}[ht!]
  \centering
  \begin{subfigure}[Original vs.\ RegMix weights]{
  \includegraphics[width=0.4\linewidth,clip]{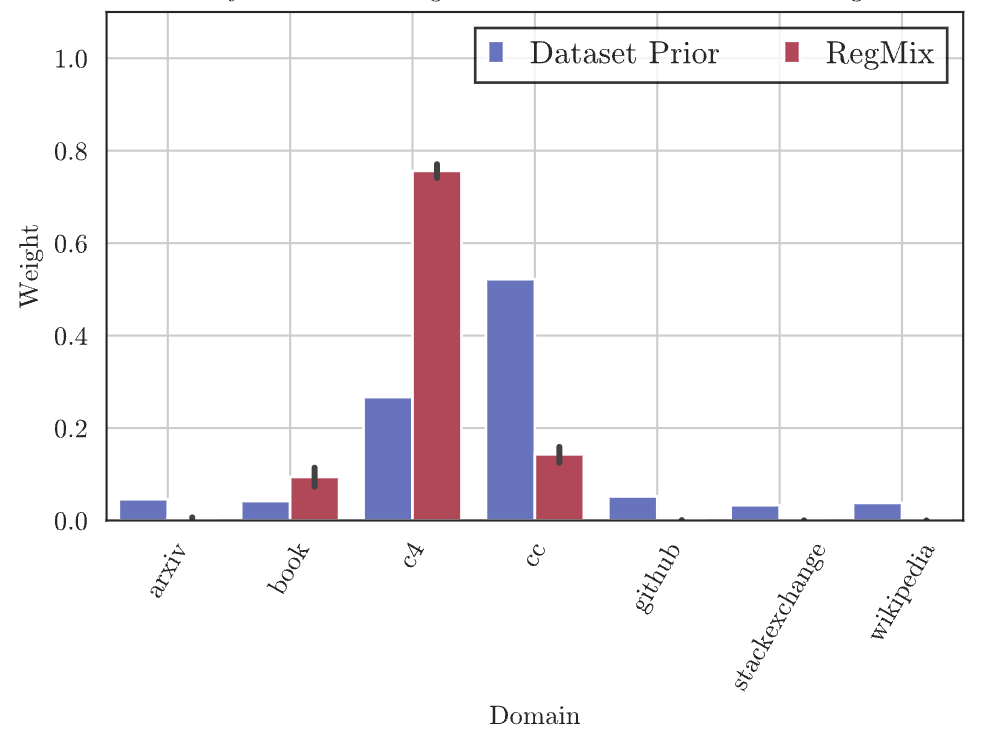}
  } 
  \end{subfigure} 
  \begin{subfigure}[Predicted vs.\ true loss]{
  \includegraphics[width=0.35\linewidth,clip]{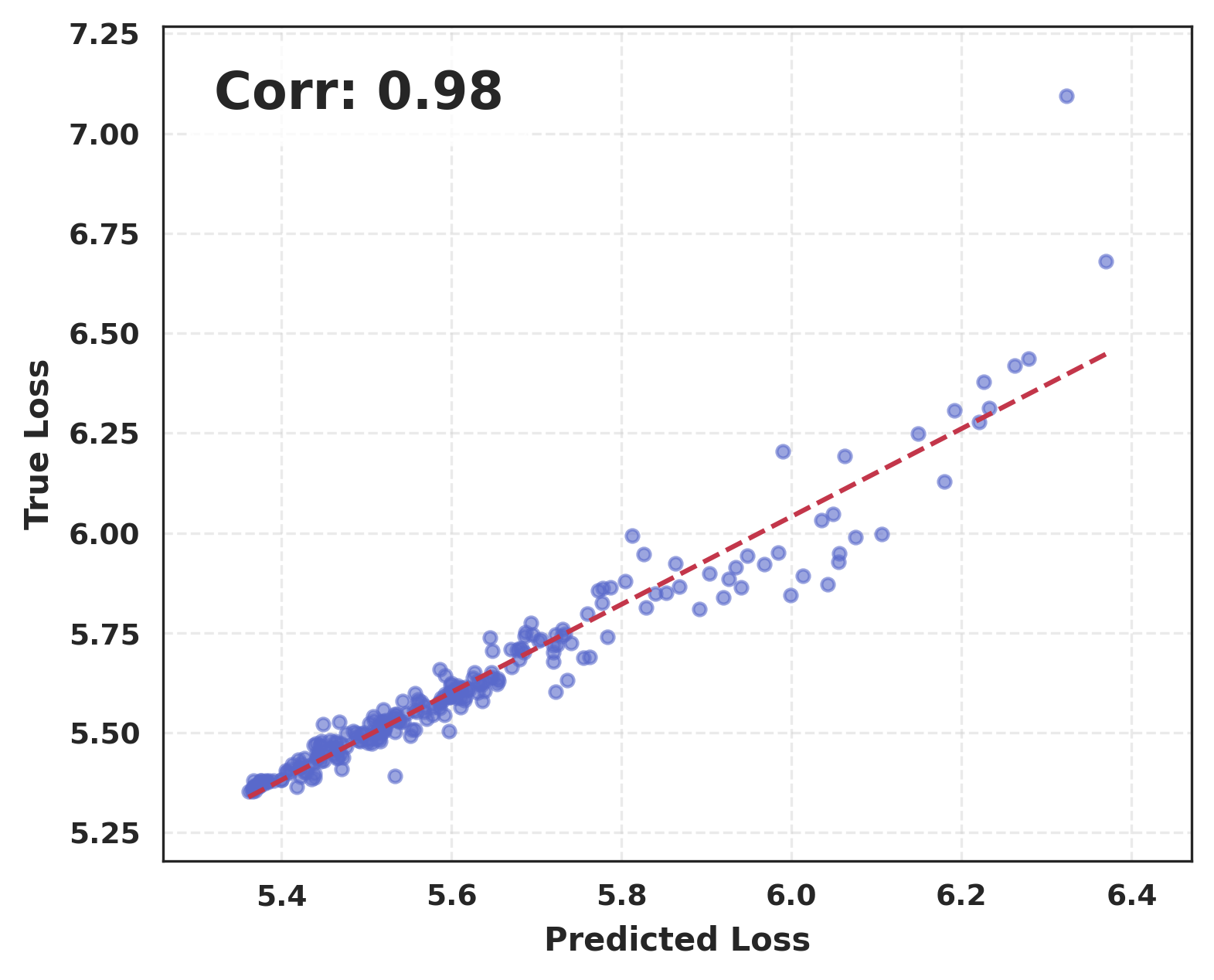}
  }
  \end{subfigure} 
 \caption{\small \textbf{RegMix results on SlimPajama with 6 target tasks} [ARC-E, ARC-C, Hellaswag, SciQ, PIQA, LogiQA]}
 \vspace{-1em}
 \label{fig:slimpj_regmix_t6}
\end{figure}

\begin{figure}[htbp!]
  \centering
  \begin{subfigure}[Original vs.\ RegMix weights]{
  \includegraphics[width=0.4\linewidth,clip]{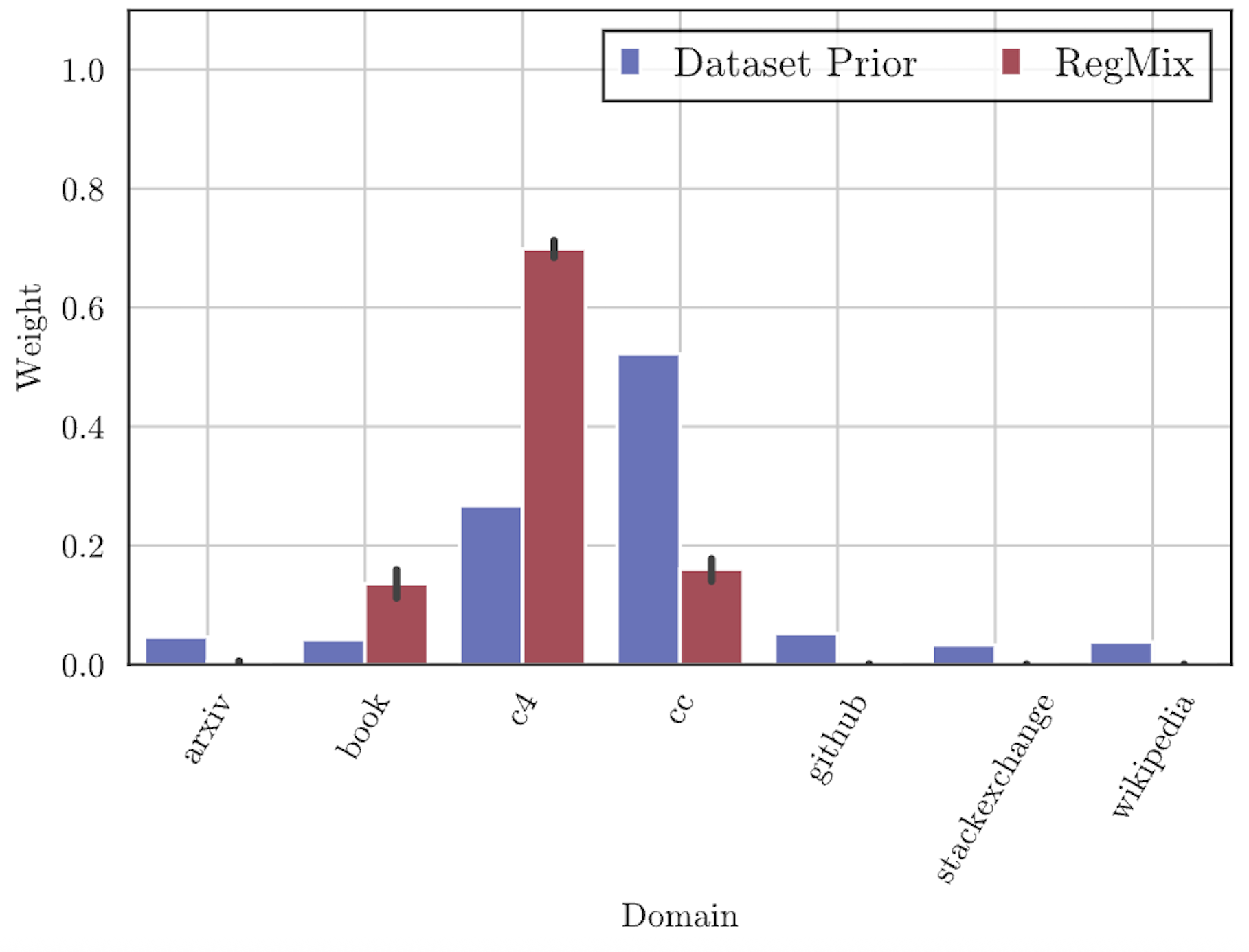}
  } 
  \end{subfigure} 
  \begin{subfigure}[Predicted vs.\ true loss]{
  \includegraphics[width=0.35\linewidth,clip]{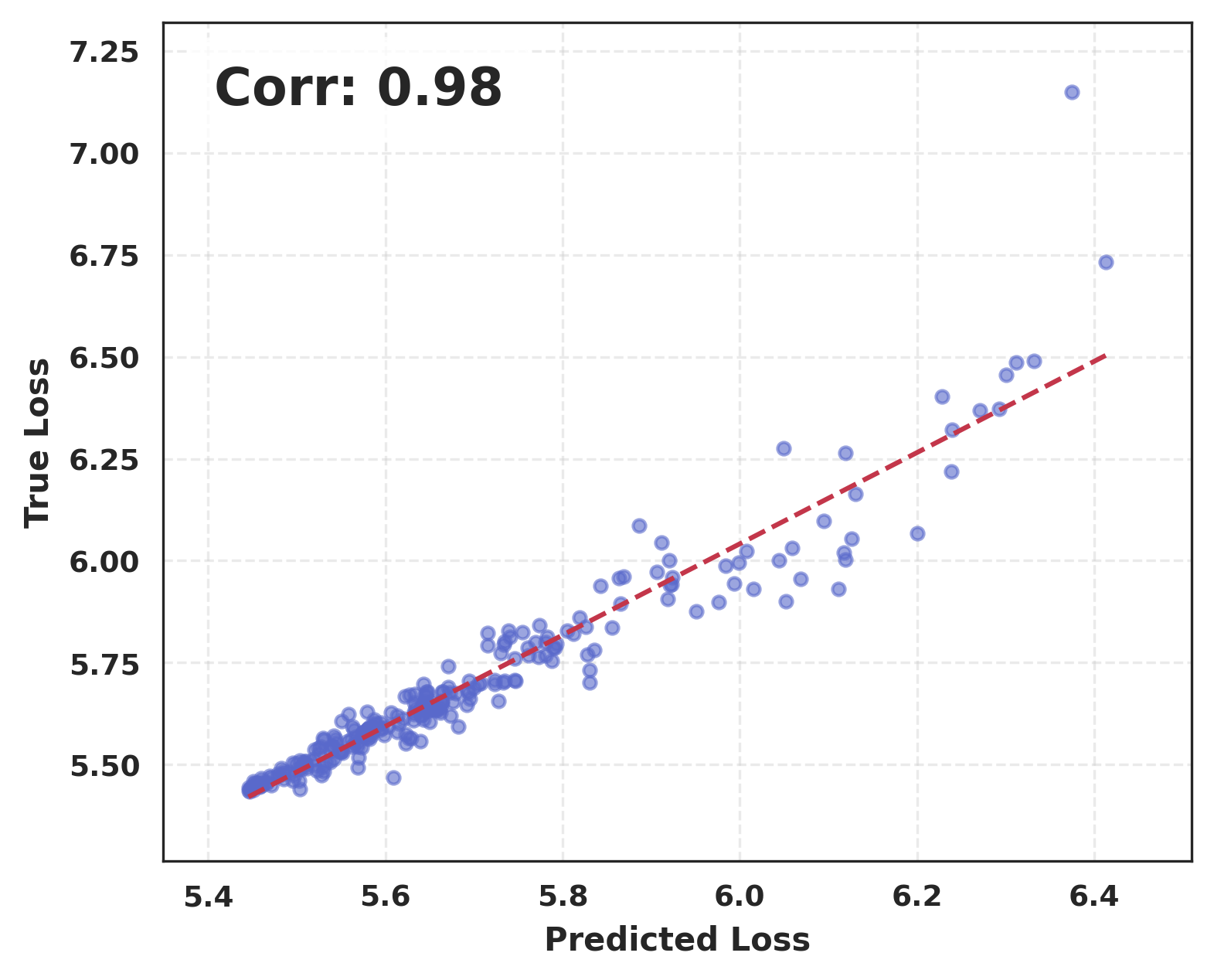}
  }
  \end{subfigure} 
 \caption{\small \textbf{RegMix results on SlimPajama with 8 target tasks} [ARC-E, ARC-C, Hellaswag, SciQ, PIQA, LogiQA, MathQA, MedQA]}
 \vspace{-1em}
 \label{fig:slimpj_regmix_t8}
\end{figure}

\textbf{Domain Weights from \textsc{CRISP}.} We present the domain weights derived from CRISP in \autoref{fig:slimpj_crisp} on various sets of target tasks. 
\textsc{CRISP} assigns most of the weight to Github, followed by StackExchange. All other domains contribute minimally. A plausible rationale behind the CRISP weight distribution is that short Q\&A tasks (ARC-E/C, SciQ, PIQA) embed closest to the concise question-answer format of StackExchange, while multi-step reasoning puzzles (LogiQA) and formula-heavy problems (MathQA) align more with GitHub’s structured code snippets, and MedQA’s longer clinical passages slightly with Book and arXiv. Including MedQA and MathQA brings clinical and formulaic embeddings that shift a bit of weight back toward Book and arXiv.
\begin{figure}[ht!]
  \centering
  \begin{subfigure}[6 Target Tasks]{
  \includegraphics[width=0.4\linewidth,clip]{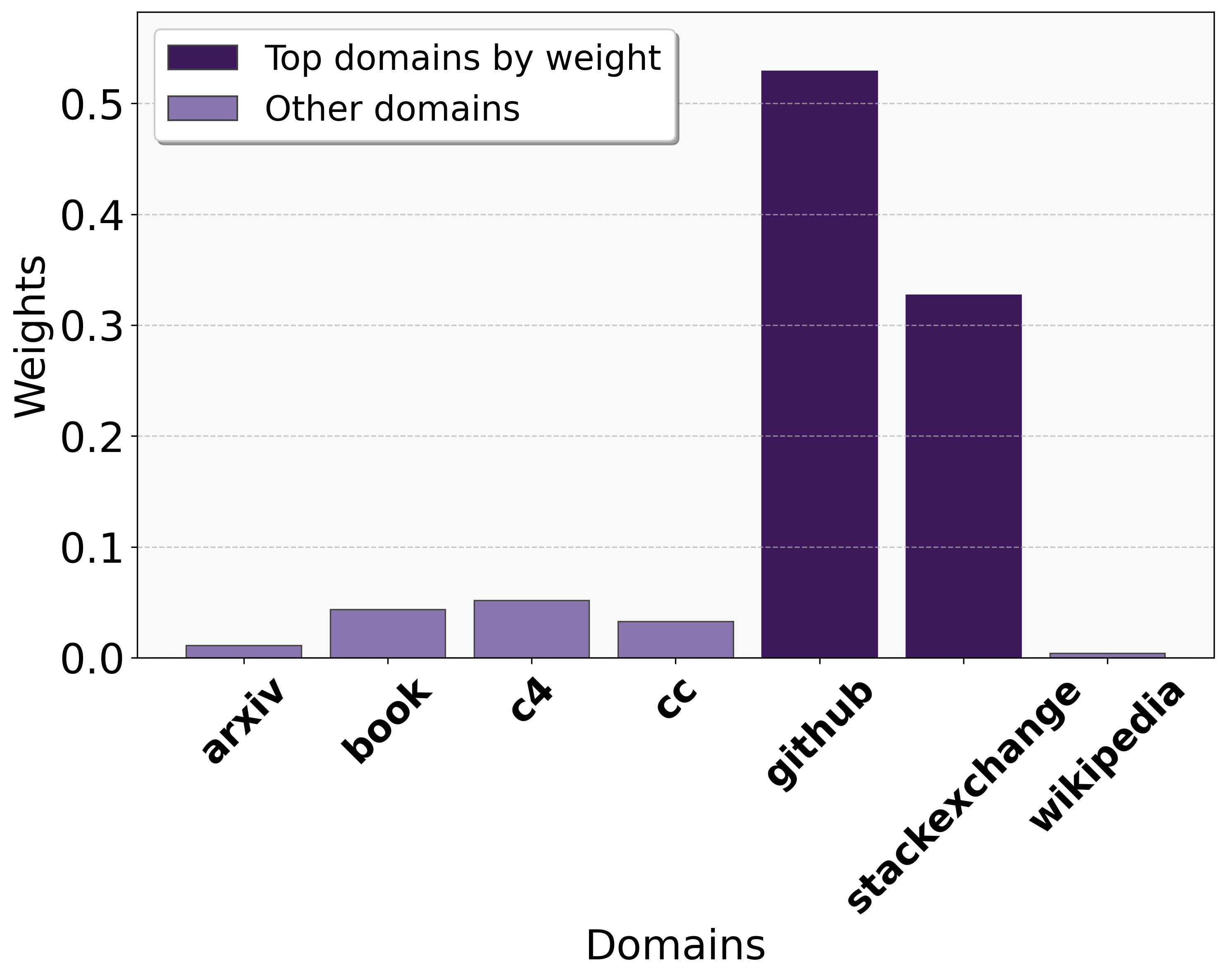}
  } 
  \end{subfigure}
  \begin{subfigure}[8 Target Tasks]{
  \includegraphics[width=0.4\linewidth,clip]{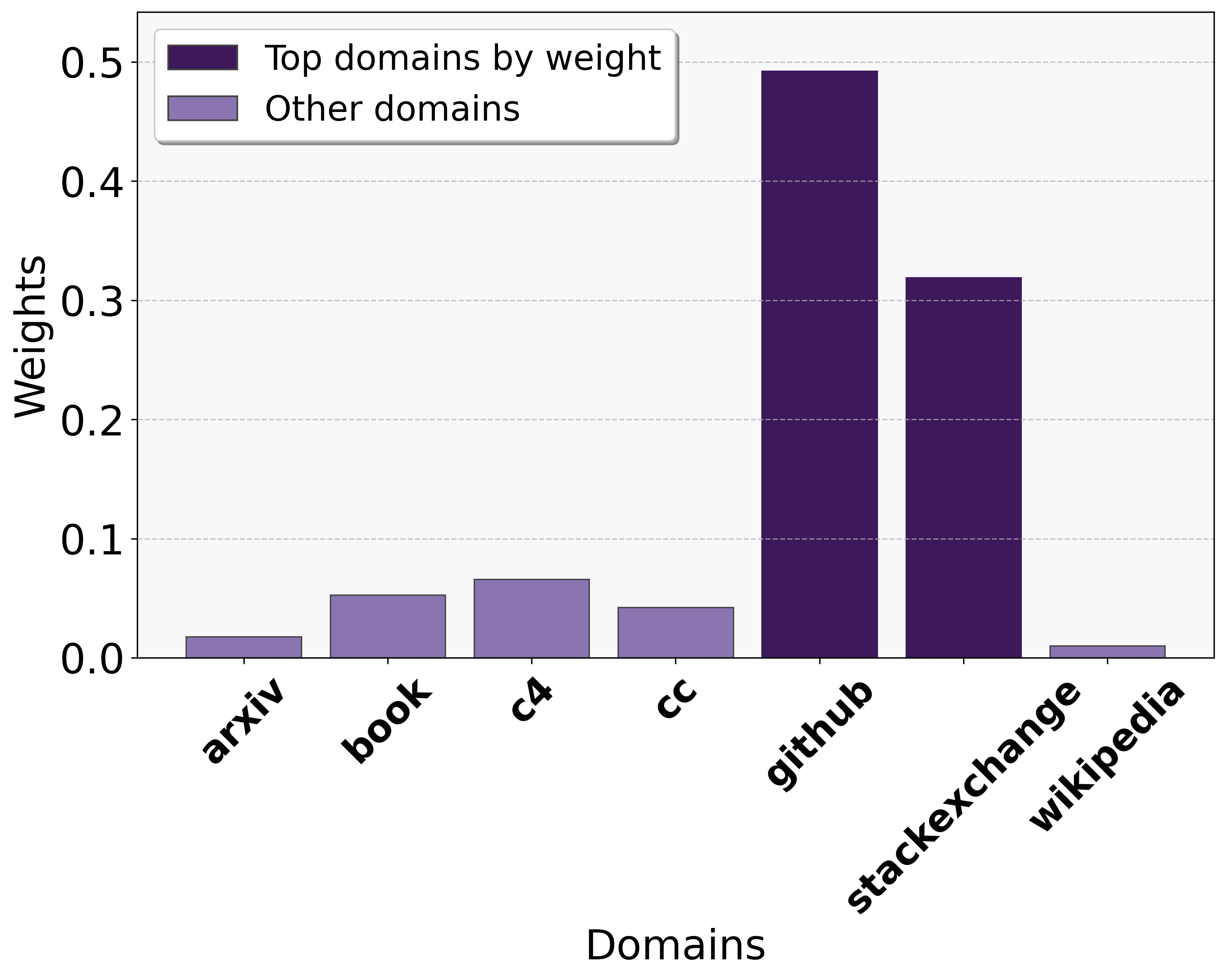}
  } 
  \end{subfigure} 
 \caption{\small \textsc{CRISP} Domain weights across 7 data domains in SlimPajama.}
 \vspace{-1em}
 \label{fig:slimpj_crisp}
\end{figure}

\newpage
\subsubsection{Results on Various Task Configurations}
\paragraph{Reweighting with 6 Target Tasks: ARC-E, ARC-C, Hellaswag, SciQ, PIQA, LogiQA.}
We present the full results on multi-task reasoning experiment, where the data mixture from SlimPajama dataset are adapted towards the optimal performance on 6 target tasks: ARC-E, ARC-C, Hellaswag, SciQ, PIQA, LogiQA. 
The Log-perplexity and 5-shot accuracy scores on each target reasoning tasks are presented in Figure~\ref{fig:slimpj-ppl-6T}, Table~\ref{tab:slimpj-ppl-6T} and Table~\ref{tab:slimpj-acc-6T}, where \grape consistently achieves better (lower) perplexity compared to the other 5 baselines. In contrast, CRISP, as an purely embedding-based reweighting algorithm, leads to significantly worse results on all benchmarks, in terms of perplexity and 5-shot accuracy scores.
\begin{figure}[ht!]
    \centering
    \includegraphics[width=0.8\linewidth]{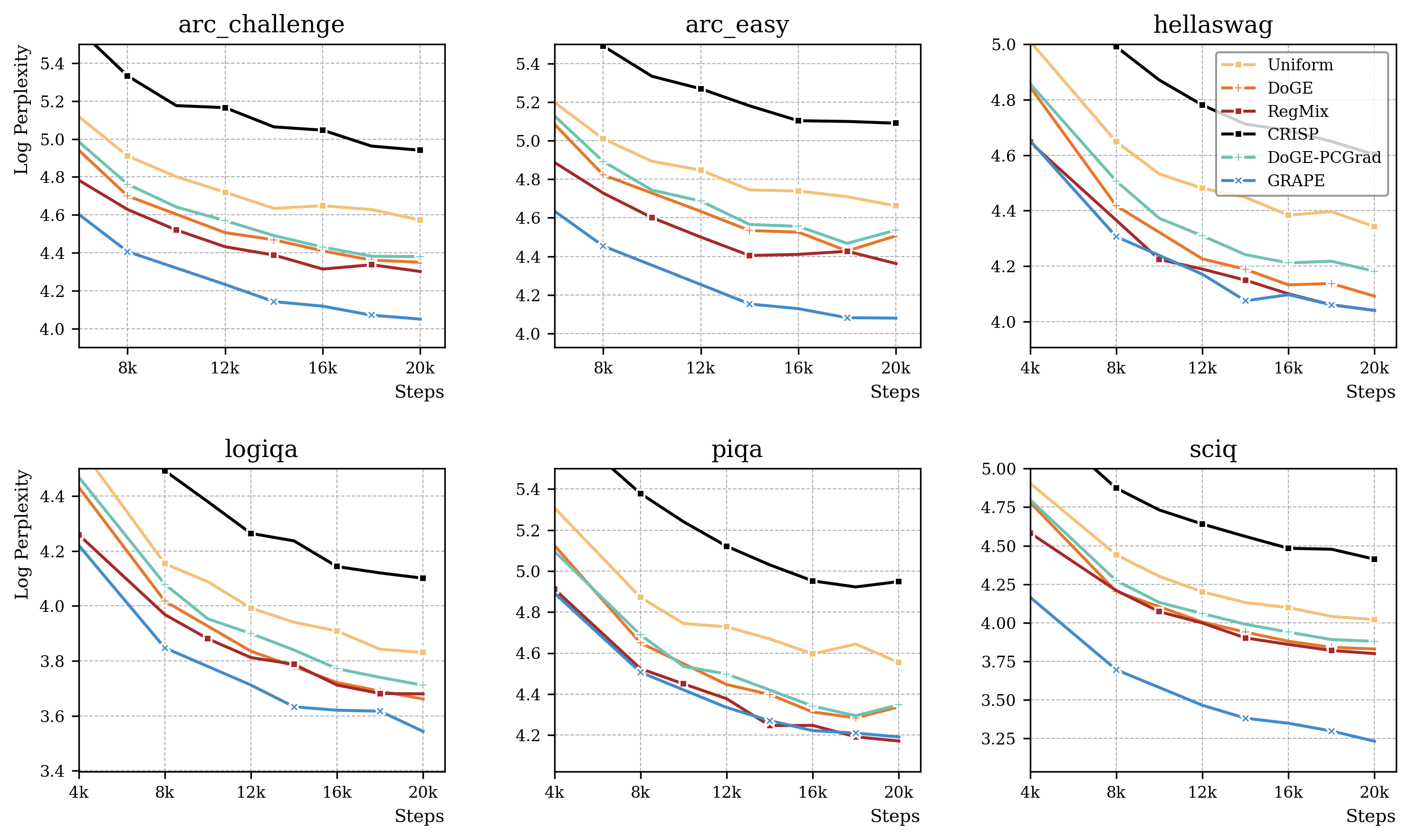}
    \caption{Log-Perplexities on 6 target reasoning tasks} 
    \label{fig:slimpj-ppl-6T}
\end{figure}

\begin{table*}[ht!]
\caption{\small \textbf{Log perplexity scores ($\downarrow$)}. Models (125M) are trained on 0.66B training tokens. The best-performed scores are marked as \textbf{Bold}, and the second-best scores are \underline{Underlined}.}
\vspace{0.5em}
\label{tab:slimpj-ppl-6T}
\small
\centering
\begin{adjustbox}{max width=\textwidth}
\begin{tabular}{l|cccccc||cc}
\toprule
Method & ARC-C & ARC-E & HellaSwag & LogiQA & PIQA & SciQ & Average & Worst \\
\midrule
Uniform & 4.83 & 4.92 & 4.30 & 4.77 & 4.08 & 4.56 & 4.58 & 4.92 \\
\textsc{Crisp} & 4.94 & 5.09 & 4.41 & 4.95 & 4.10 & 4.60 & 4.68 & 5.09 \\
\textsc{RegMix} & \underline{4.29} & 4.36 & 4.05 & \textbf{3.68} & 4.16 & \textbf{3.81}& \underline{4.06} & \underline{4.36} \\
%\textsc{RegMix-CC} & 4.33 & 4.43 & 4.19 & \textbf{3.66} & 4.61 & \textbf{3.77} & 4.17 & 4.61 \\
\textsc{DoGE} & 4.35 & \underline{4.51} & \underline{3.83} & 4.34 & 3.66 & 4.09 & 4.13 & 4.51 \\
\textsc{DoGE+PCGrad} & 4.38 & 4.54 & 3.88 & 4.35 & \underline{3.71} & 4.18 & 4.17 & 4.54 \\
\textsc{Grape} & \textbf{4.05} & \textbf{4.08} & \textbf{3.23} & \underline{4.19} & \textbf{3.54} & \underline{4.04} & \textbf{3.86} & \textbf{4.19} \\
% \textsc{Grape-Gap} & \textbf{4.03} & \textbf{4.05} & \textbf{3.21} & 4.21 & 3.65 & 4.11 & 3.88 & 4.21 \\
% \textsc{Grape-Ema} & 4.06 & 4.09 & 3.23 & 4.21 & 3.60 & 4.10 & 3.88 & 4.21 \\
\bottomrule
\end{tabular}
\end{adjustbox}
\end{table*}

\begin{table*}[ht!]
\caption{\small \textbf{5-shot accuracies(\%) on target reasoning tasks($\uparrow$)}. $125$M (resp. $0.7$B) models are trained on $0.66$B (resp. $6.5$B) tokens. The best-performed scores are marked as \textbf{Bold}, and the second-best scores are \underline{Underlined}.}
\vspace{0.5em}
\label{tab:slimpj-acc-6T}
\small
\centering
\begin{adjustbox}{max width=0.9\textwidth}
\begin{tabular}{l|cccccc||c}
\toprule
$125$M & ARC-C & ARC-E & HellaSwag & LogiQA & PIQA & SciQ & Average \\
\midrule 
Uniform         & 21.50 & 32.45 & 26.44 & 26.42 & 55.50 & 52.40 & 35.78 \\
\textsc{CRISP}  & 21.08 & 30.89 & 26.58 & \underline{27.80} & 54.62 & 53.80 & 35.80 \\
\textsc{RegMix} & 20.56 & \underline{32.79} & \textbf{26.83} & 24.42 & \textbf{58.92} & 52.50 & 36.00 \\
%\textsc{RegMix-CC} & 20.39 & 33.63 & 26.52 & 27.65 & 56.37 & 56.4 & 36.83 \\
\textsc{DoGE}   & \underline{21.67} & 32.66 & 26.05 & \textbf{30.11} & 57.18 & \underline{56.60} & \underline{37.38} \\
\textsc{DoGE-PCGrad}     & 21.33 & 32.20 & 25.96 & 27.19 & \underline{58.60} & 53.50 & 36.46 \\
\textsc{GRAPE}  & \textbf{21.76} & \textbf{34.09} & \underline{26.69} & 26.27 & \underline{58.60} & \textbf{58.50} & \textbf{37.65} \\
\midrule
$0.7$B    & ARC-C &  ARC-E & Hellaswag & PIQA & SciQ & LogiQA & Average\\
\midrule
Uniform         & 21.84 &  37.84 & 29.04 & 62.13 &  \underline{69.60} & \textbf{29.34} & 41.63    \\
\textsc{DoGE}   & \underline{22.87} &  \underline{40.61} & \underline{31.16} & \underline{63.82} &  \textbf{69.80} & 24.88 & \underline{42.19}      \\
\textsc{GRAPE}  & \textbf{23.46} &  \textbf{42.17} & \textbf{31.90} & \textbf{64.42} &  68.70 & \underline{27.50} & \textbf{42.92}    \\
\bottomrule
\end{tabular}
\end{adjustbox}
\end{table*}

\newpage
\paragraph{Reweighting with 8 Target Tasks: ARC-E, ARC-C, Hellaswag, SciQ, PIQA, LogiQA, MathQA, MedQA.}
We present the full results on multi-task reasoning experiment, where the data mixture from SlimPajama dataset are adapted towards the optimal performance on 8 target tasks: ARC-E, ARC-C, Hellaswag, SciQ, PIQA, LogiQA, MathQA, MedQA, covering a board topics and knowledge domains. 
The Log-perplexity and 5-shot accuracy scores on each target reasoning tasks are presented in Figure~\ref{fig:slimpj-ppl-8T}, Table~\ref{tab:slimpj-ppl-8T} and Table~\ref{tab:slimpj-acc-8T}, where \grape consistently outperforms the other 5 baselines on most of benchmarks (6 out of 8), while achieving comparable performance as RegMix and DoGE-PCGrad on Hellaswag and PIQA. Notably, \grape improves the perplexity scores on MedQA by a large margin above all the other baselines, indicating its distinct adaptability to unique target tasks with domain-specific skill and knowledge.
In contrast, CRISP demonstrates marginal improvements compared to uniform sampling baseline, which demonstrates its weakness in the multi-task learning setting.
\begin{figure}[ht!]
    \centering
    \includegraphics[width=0.9\linewidth]{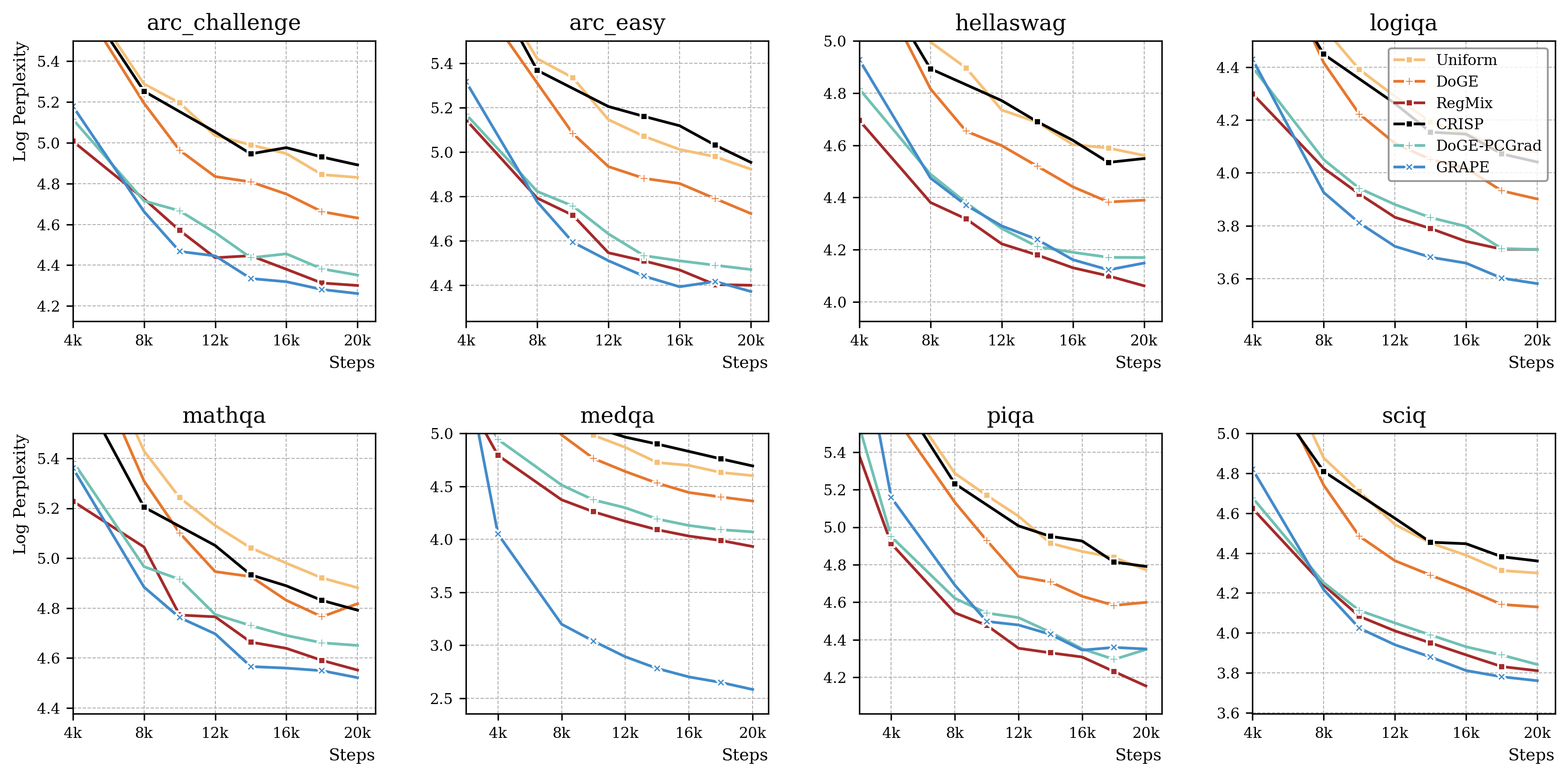}
    \caption{Log-Perplexities on 8 target datasets}
    \label{fig:slimpj-ppl-8T}
\end{figure}

\begin{table*}[ht!]
\caption{\small \textbf{5-shot accuracies(\%) on target tasks.} The best-performed scores are marked as \textbf{Bold}, and the second-best scores are \underline{Underlined}.}
\vspace{0.5em}
\label{tab:slimpj-acc-8T}
\small
\centering
\begin{adjustbox}{max width=\textwidth}
\begin{tabular}{l|cccccccc||c}
\toprule
Method & ARC-C & ARC-E & HellaSwag & LogiQA & PIQA & SciQ & MathQA & MedQA & Average \\
\midrule
Uniform         & 21.50 & \underline{32.45} & \underline{26.44} & 26.42 & 55.50 & 52.40 & 20.97 & 24.59 & 32.53 \\
\textsc{CRISP}  & \textbf{22.01} & 30.56 & 26.24 & \underline{27.19} & 55.88 & 51.10 & 19.97 & 21.21 & 29.27 \\
\textsc{RegMix} & \underline{22.00} & 32.40 & \textbf{26.80} & 25.50 & \textbf{59.70} & \underline{56.00} & 20.70 & 21.40 & \underline{33.10} \\
%\textsc{RegMix-CC} & 20.39 & 33.63 & 26.52 & 27.65 & 56.37 & 56.4 & 20.2 & 21.76 & 32.87 \\
\textsc{DoGE}   & 21.59 & 31.57 & 26.32 & 25.96 & 56.31 & 48.60 & 19.50 & \textbf{27.34} & 32.14 \\
\textsc{DoGE+PCGrad}     & 21.16 & 32.28 & 26.17 & 26.11 & 57.62 & 52.2 & 20.60 & 21.52 & 32.21 \\
\textsc{GRAPE}  & 21.76 & \textbf{33.08} & 26.20 & \textbf{27.34} & \underline{57.70} & \textbf{57.94} & \textbf{20.74} & \underline{27.18} & \textbf{33.99} \\
\bottomrule
\end{tabular}
\end{adjustbox}
\end{table*}

\begin{table*}[ht!]
\caption{\small  \textbf{Log perplexity scores on 8 target datasets ($\downarrow$)}. Models (125M) are trained on 0.66B training tokens. The best-performed scores are marked as \textbf{Bold}, and the second-best scores are \underline{Underlined}.}
\vspace{0.5em}
\label{tab:slimpj-ppl-8T}
\small
\centering
\begin{adjustbox}{max width=\textwidth}
\begin{tabular}{l|cccccccc||cc}
\toprule
Method & ARC-C & ARC-E & HellaSwag & LogiQA & PIQA & SciQ & MathQA & MedQA & Average & Worst\\
\midrule
Uniform & 4.83 & 4.92 & 4.56 & 4.08 & 4.77 & 4.30 & 4.88 & 4.60 & 4.62 & 4.92 \\
\textsc{Crisp} & 4.89 & 4.95 & 4.55 & 4.04 & 4.79 & 4.36 & 4.79 & 4.69 & 4.63 & 4.95 \\
\textsc{RegMix} & \underline{4.3}& \underline{4.4} & \textbf{4.06}& 3.71& \textbf{4.15}& 3.81& \underline{4.55}& \underline{3.93}& \underline{4.11}& \underline{4.55}\\
%\textsc{RegMix-CC} & 4.33 & 4.43 & 4.19 & \underline{3.66} & 4.61 & \underline{3.77} &4.63 & 3.94 & 4.19 & 4.63 \\
\textsc{DoGE+PCGrad} & 4.35 & 4.47 & 4.17 & 3.71 & \underline{4.35} & 3.84 & 4.65 & 4.07 & 4.20 & 4.65 \\
\textsc{DoGE} & 4.63 & 4.72 & 4.39 & 3.90 & 4.60 & 4.13 & 4.82 & 4.36 & 4.44 & 4.82 \\
\textsc{Grape} & \textbf{4.26} & \textbf{4.37} & \underline{4.15} & \textbf{3.58} & \underline{4.35} & \textbf{3.76} & \textbf{4.52} & \textbf{2.58} & \textbf{3.95} & \textbf{4.52} \\
\bottomrule
\end{tabular}
\end{adjustbox}
\end{table*}

\newpage
\subsection{Results on Wiki-40b}
We present detailed results on the multilingual pretraining experiments on Wiki-40B dataset in the following sections. Specifically, we experiment with two various target configurations: (1) target at 4 low-resource languages: Catalan, Danish, Romanian, Ukrainian; (2) target at 8 low-resource languages: Catalan, Danish, Romanian, Ukrainian, Polish, Portuguese, Turkish, Dutch, i.e. the setting we present in the main paper (\S~\ref{sec:exp-multilingual}).

\subsubsection{Language Mixture from Various Reweighting Algorithms}
\paragraph{Domain Weights from RegMix.} 
We present the optimized domain weights distributions from RegMix in Figure~\ref{fig:wiki40b_regmix_t1} and \ref{fig:wiki40b_regmix_t2}. Specifically, with 4 target languages, RegMix upweights German (de) and Spanish (es) while reducing the weights assigned on French (fr) and Italian (it). With 8 target languages, RegMix significantly increases the domain weights on Russian (ru), while further decreases the proportion of English (en).

\begin{figure}[htbp!]
  \centering
  \begin{subfigure}[Original vs.\ RegMix weights]{
  \includegraphics[width=0.4\linewidth,clip]{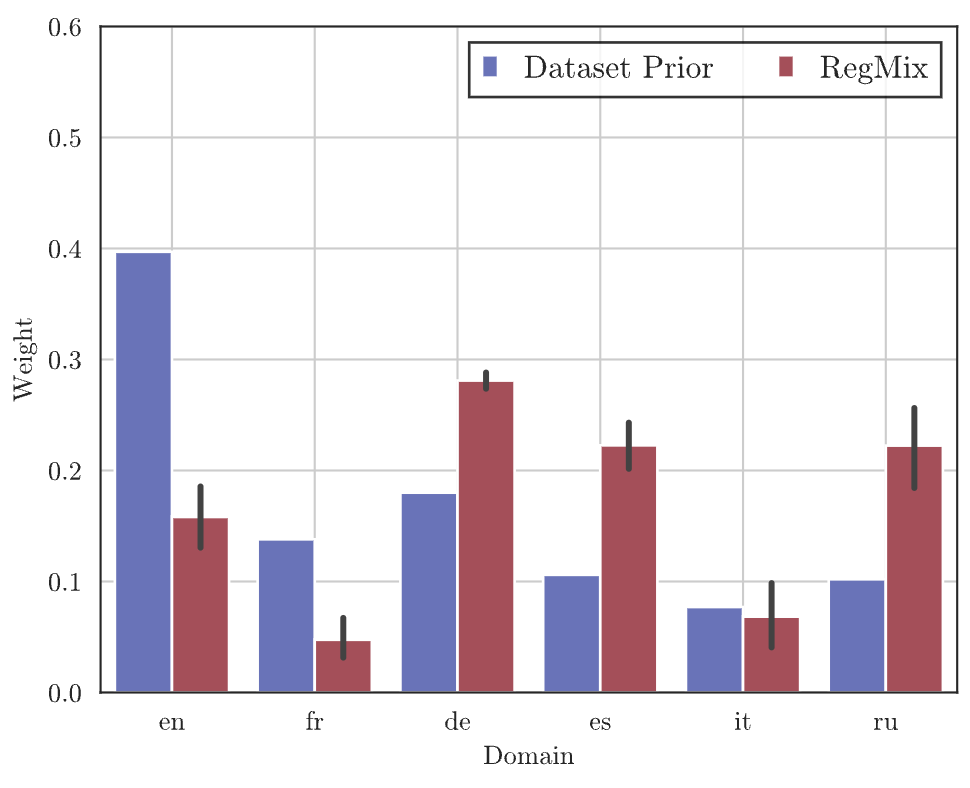}
  } 
  \end{subfigure} 
  \begin{subfigure}[Spearman correlation between predicted v.s. true loss]{
  \includegraphics[width=0.4\linewidth,clip]{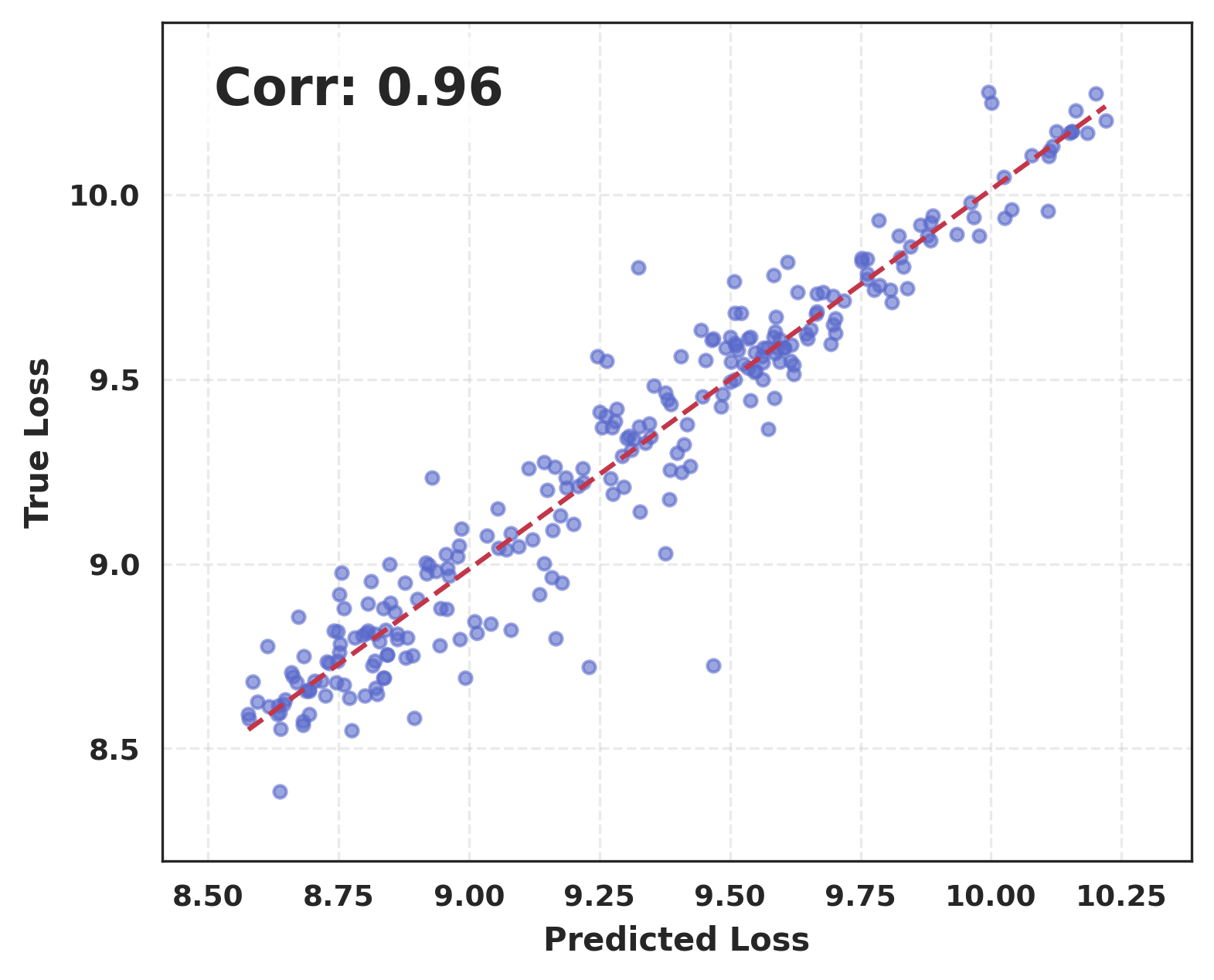}
  }
  \end{subfigure} 
 \caption{\small \textbf{RegMix results on \texttt{wiki-40b} with LightGBM regression on 4 target languages} [Catalan, Danish, Romanian, Ukrainian]}
 \vspace{-1em}
 \label{fig:wiki40b_regmix_t1}
\end{figure}

\begin{figure}[H]
  \centering
  \begin{subfigure}[Original vs.\ RegMix weights]{
  \includegraphics[width=0.4\linewidth,clip]{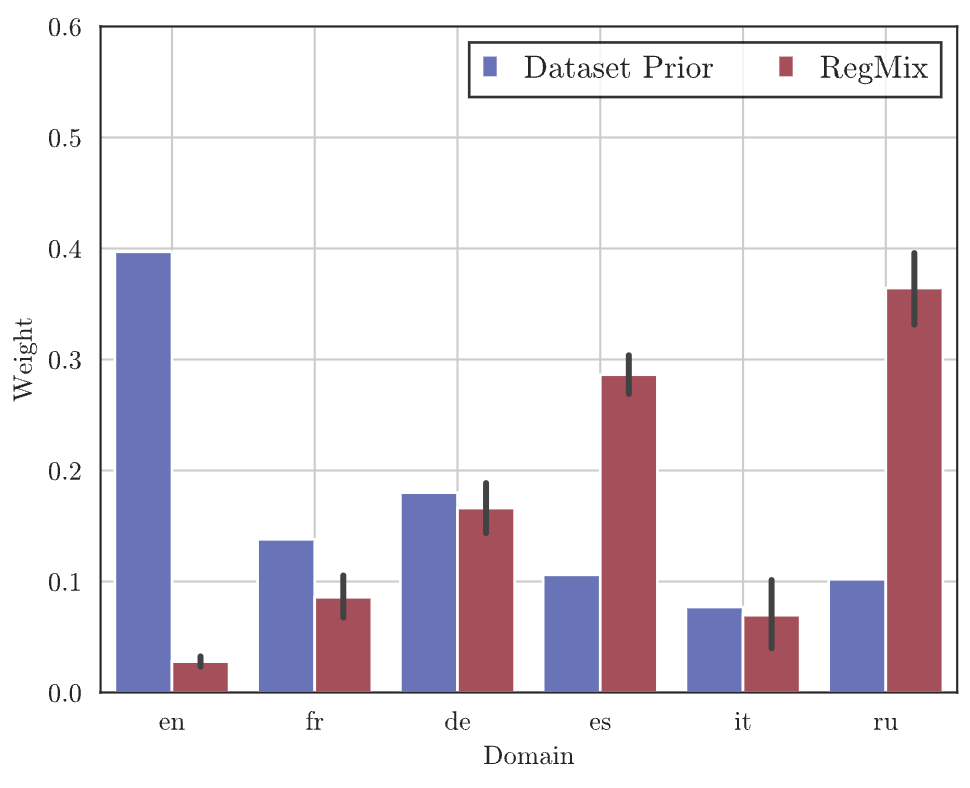}
  } 
  \end{subfigure} 
  \begin{subfigure}[Spearman correlation between predicted v.s. true loss]{
  \includegraphics[width=0.4\linewidth,clip]{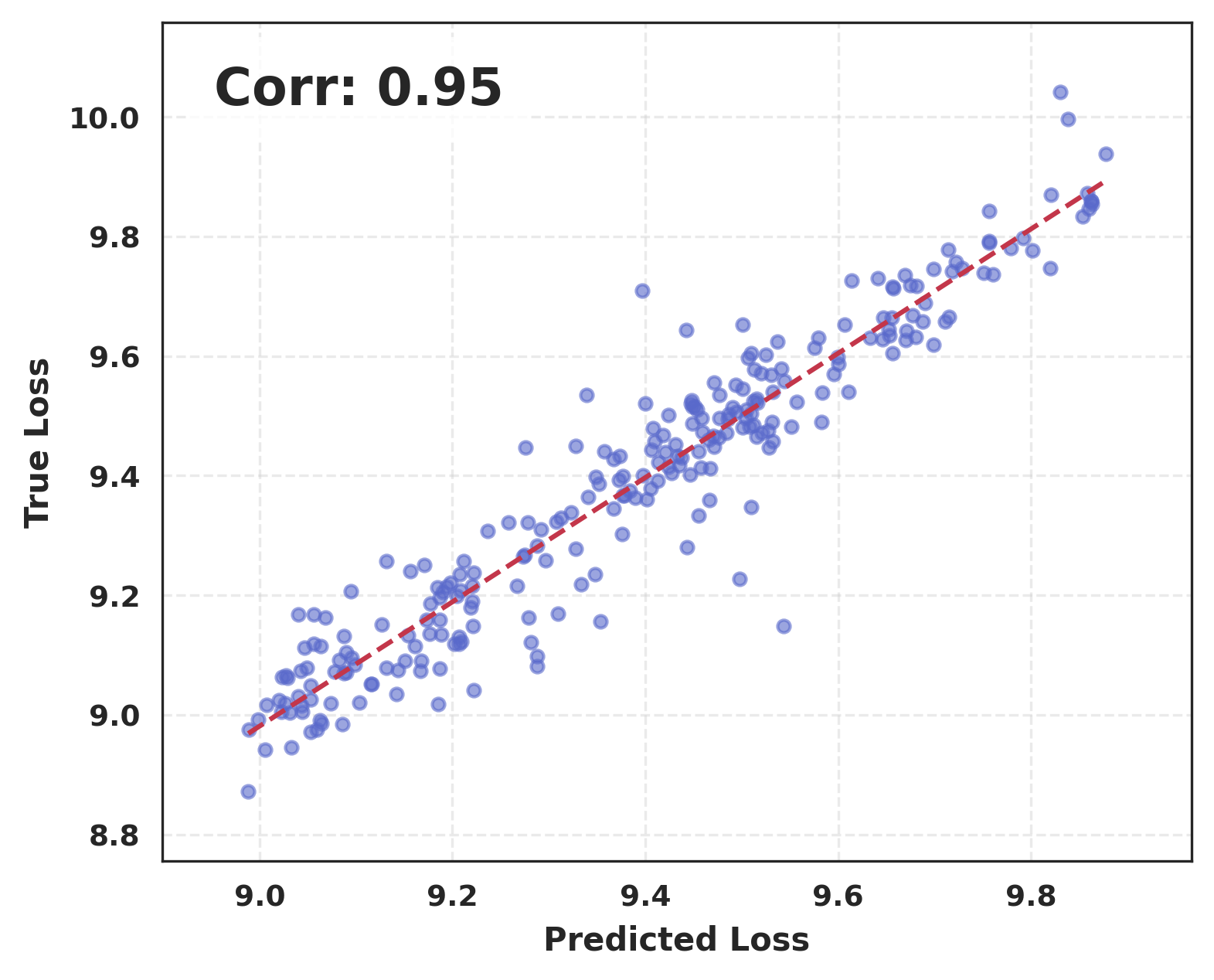}
  }
  \end{subfigure} 
 \caption{\small \textbf{RegMix results on \texttt{wiki-40b} with LightGBM regression on 8 target languages} [Catalan, Danish, Romanian, Ukrainian, Polish, Portuguese, Turkish, Dutch]}
 \vspace{-1em}
 \label{fig:wiki40b_regmix_t2}
\end{figure}

\paragraph{Domain Weights from CRISP.}
We present the optimized domain weights distributions from CRISP in Figure~\ref{fig:wiki40b_crisp}, with various task configurations. Notably, CRISP yields very similar domain weights across 6 languages, where English (en) and Italian (it) are mostly upweighted, while they are proved not helpful for multi-language learning according to Figure~\ref{fig:wiki40b-6T} and \ref{fig:wiki40b-8T}. 
\begin{figure}[ht!]
  \centering
  \begin{subfigure}[4 target languages]{
  \includegraphics[width=0.3\textwidth]{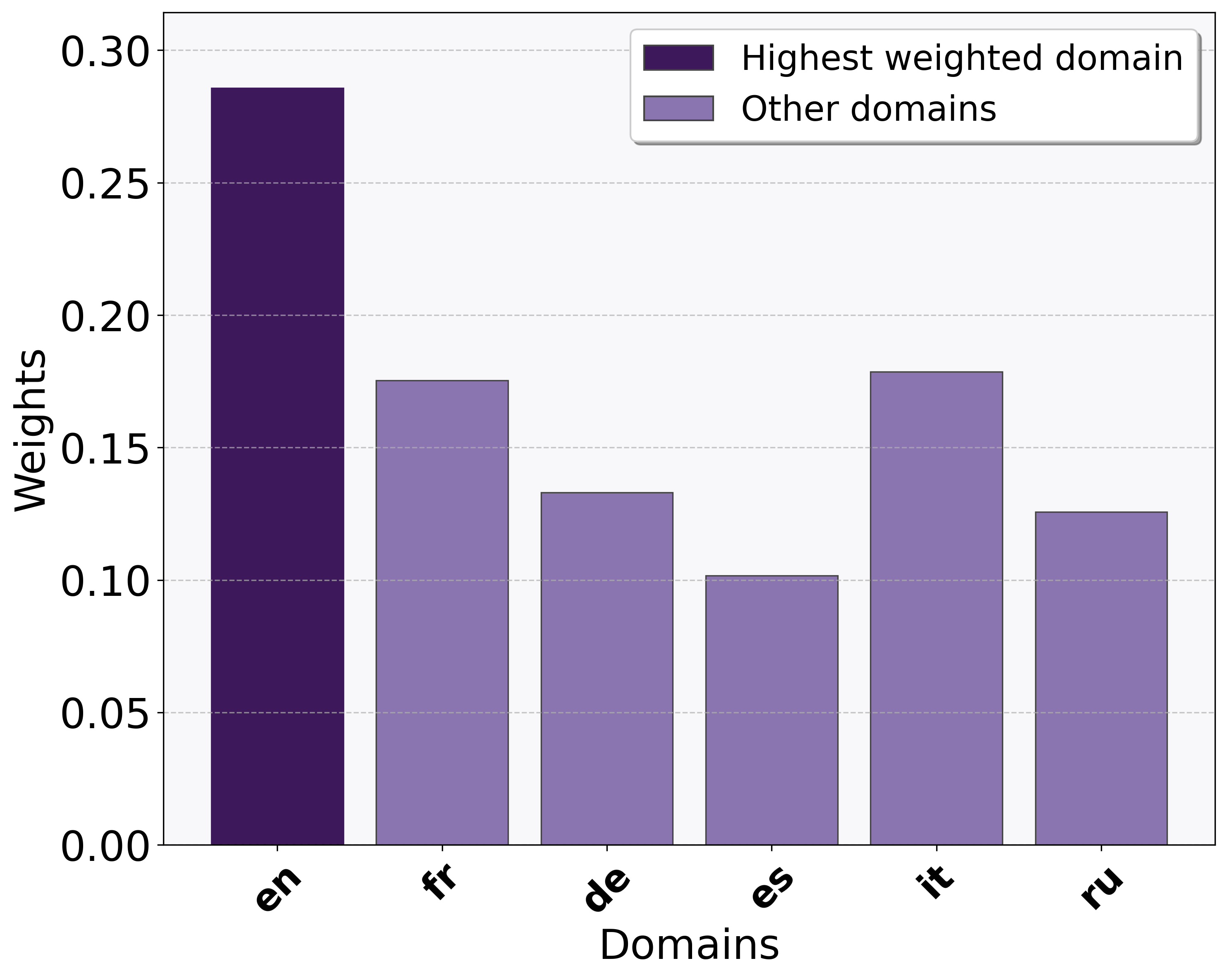}
  }
\end{subfigure}
\begin{subfigure}[8 target languages]{
    \includegraphics[width=0.3\textwidth]{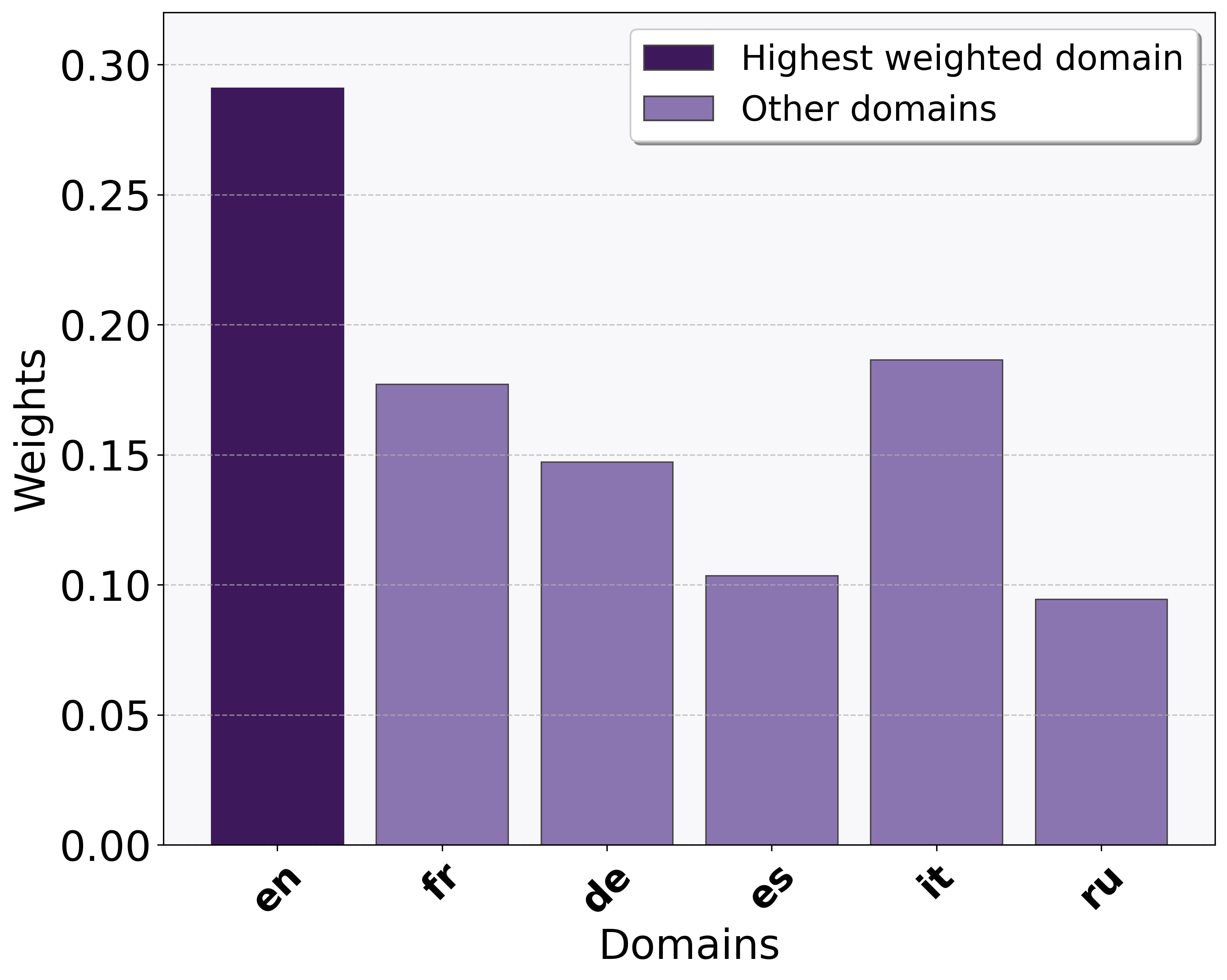}
}
\end{subfigure}
 \caption{\small \textsc{CRISP} Domain Weigths across 6 high-resource languages in \texttt{wiki-40b}}
 \vspace{-1em}
 \label{fig:wiki40b_crisp}
\end{figure}

\clearpage
\newpage
\subsubsection{Results on Various Task Configurations}
\paragraph{Reweighting with 4 target languages: Catalan, Danish, Romanian, Ukrainian}
Figure~\ref{fig:wiki40b-6T} present the full results on the multilingual pretraining experiments, where the training langauge mixture are adapted to 4 target low-resource languages: Catalan, Danish, Romanian, Ukrainian.
\grape significantly outperforms all the other baseline methods across all target languages. Notably, none of the other reweighting approaches can effectively facilitate the learning on Romanian, while only \grape unleashes the learning capability on it. 
\begin{figure}[ht!]
    \centering
    \includegraphics[width=0.6\linewidth]{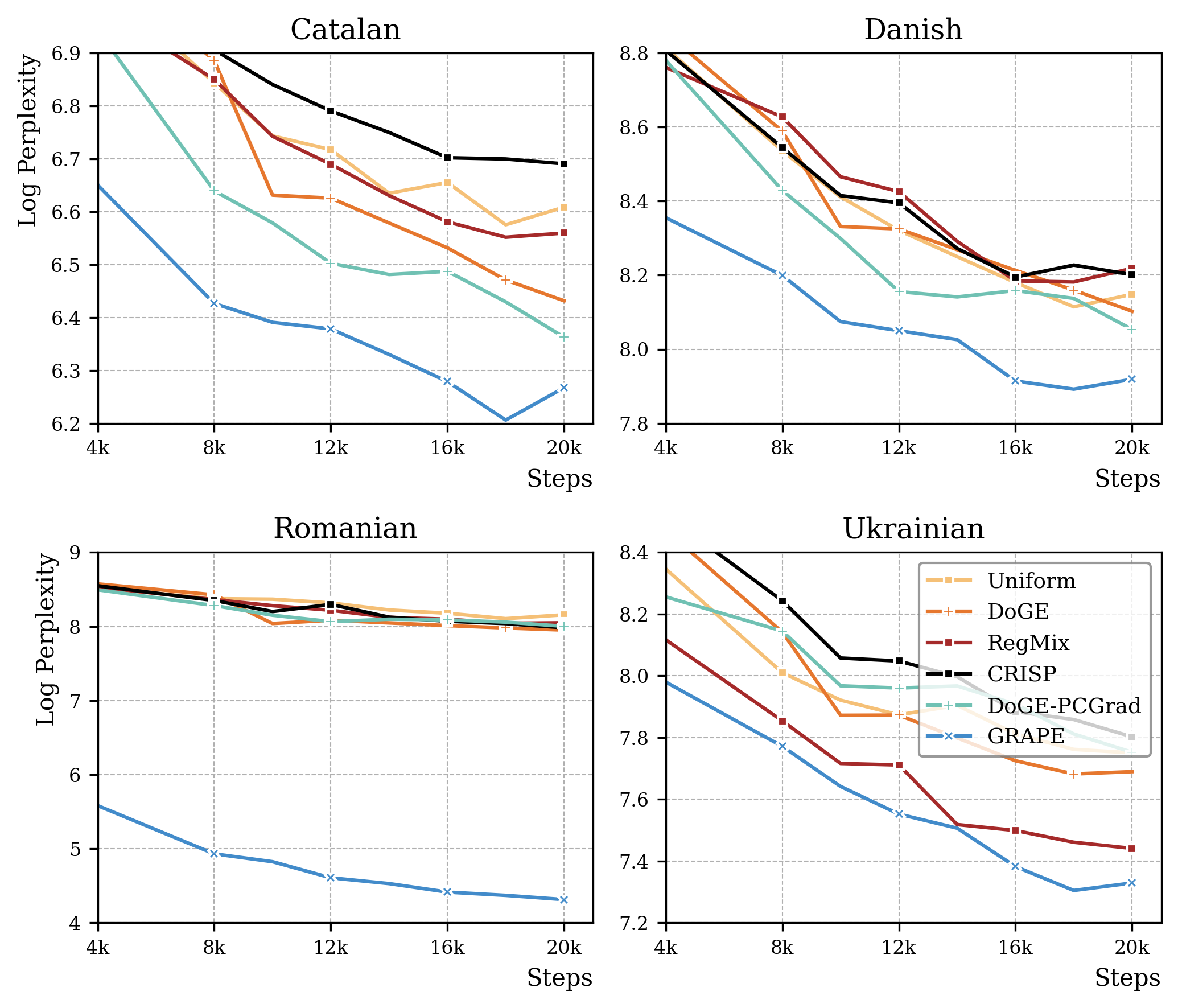}
    \caption{\small \textbf{Log-perplexities on 4 target languages} [Catalan, Danish, Romanian, Ukrainian]} 
    \label{fig:wiki40b-6T}
\end{figure}

\clearpage
\newpage
\paragraph{Reweighting with 8 target languages: Catalan, Danish, Romanian, Ukrainian, Polish, Portuguese, Turkish, Dutch.}
Figure~\ref{fig:wiki40b-8T} present the full results on the multilingual pretraining experiments, where the training langauge mixture are adapted to 8 target low-resource languages: Catalan, Danish, Romanian, Ukrainian, Polish, Portuguese, Turkish, Dutch.
\grape significantly outperforms all the other baseline methods across all 8 target languages. 
In contrast, the offline reweighting algorithm such as RegMix and CRISP exhibit a very biased performance across various target languages: RegMix accelerate the learning on Catalan, Portuguese, Ukrainian and Polish, while sacrificing the performance on Dutch and Danish; CRISP only accelerates the learning on Turkish while sabotaging all the other languages.
\begin{figure}[ht!]
    \centering
    \includegraphics[width=\linewidth]{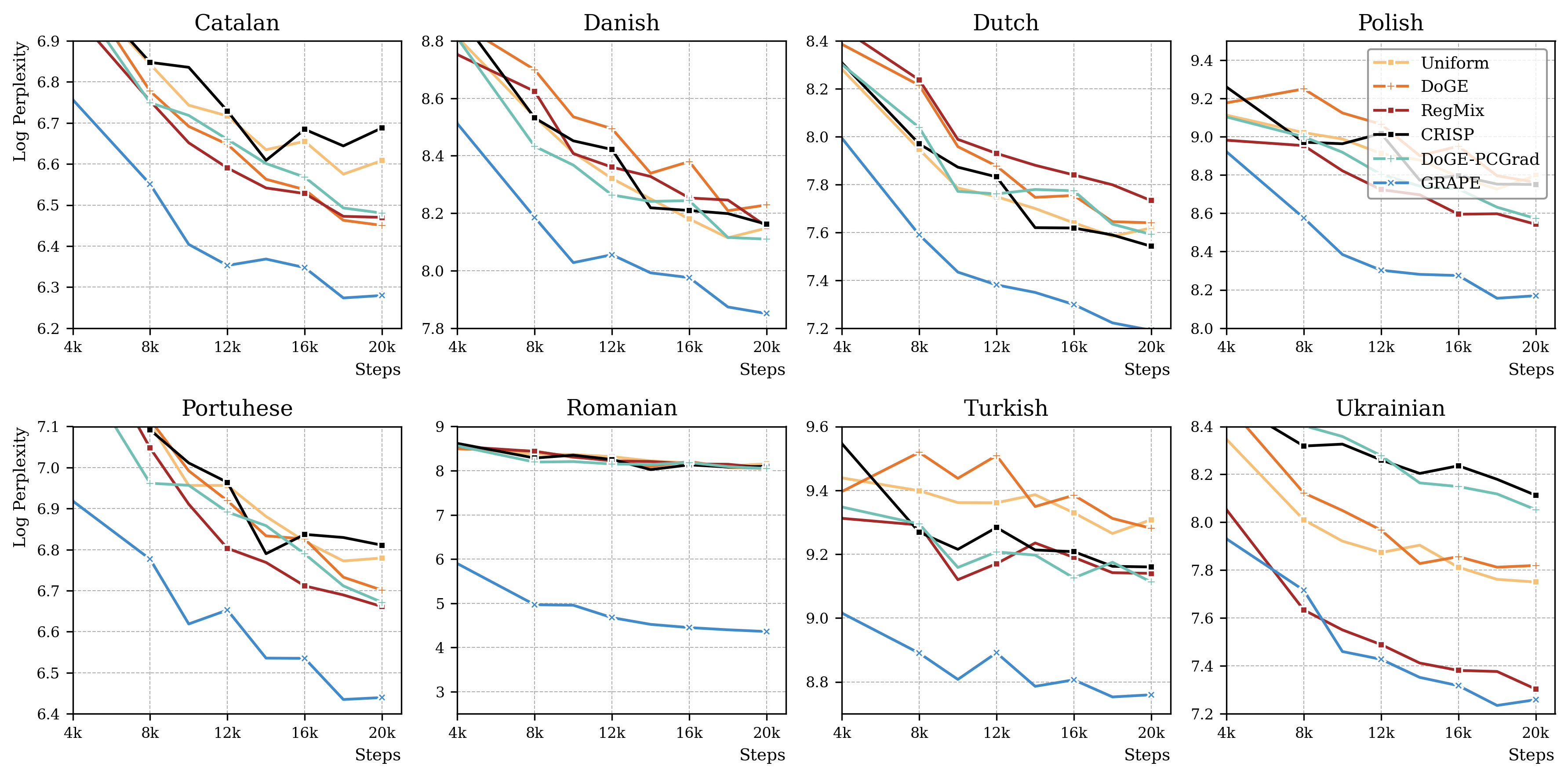}
    \caption{\small \textbf{Log-perplexities on 8 target languages} [Catalan, Danish, Romanian, Ukrainian, Polish, Portuguese, Turkish, Dutch]} 
    \label{fig:wiki40b-8T}
\end{figure}

\clearpage
\newpage
\section{Ablation Study}\label{appd:ablation}
In this section, we present the full results on our ablation experiments, with three various progress measurement metrics for Group DRO, and 12 different target task configurations.
\subsection{Progress Assessment Metrics for Group DRO}\label{appd:DRO-metric}
We evaluate the impact of different step-wise progress metrics on the performance of Group DRO within the \grape framework. 
In addition to the primary Rate-of-Improvement (ROI) metric (\autoref{equ:roi}), we investigate two alternatives: \textit{Gap-of-Improvement} (\textit{GOI}) and \textit{EMA-Rate-of-Improvement} (\textit{ROI-ema}).
The step-wise improvmenet of task $\mathcal{T}_n$ at step $t$ are assessed as:
\begin{equation}
    GOI_{n}^{(t)} := l_n(\vtheta_t) - l_n(\vtheta_{t+1}), \quad
    ROI\text{-}ema_{n}^{(t)} := \frac{l_n(\vtheta_t) - l_n(\vtheta_{t+1})}{l^t_{ema,n}}, \qquad \forall n \in [N]
\end{equation}
The exponential moving average loss $l_{ema,n}^t$ is updated as: $l_{ema,n}^t = \beta \cdot l_{ema,n}^{t-1} + (1-\beta)\cdot l_n(\vtheta_t)$, where the hyperparameter $\beta$ is set to $0.7$ in our experiments. 
Substituting these alternative progress metrics for $r_n$ in the minimax objective \autoref{equ:minimax} yields modified update rules according to Equation~\ref{equ:grape-gap} and \ref{equ:grape-ema}. The core optimization principle for adjusting task weights ($\vz$) and domain weights ($\valpha$) remains, but the specific gradient terms within the exponents change.
\begin{itemize}[itemsep=0.00pt,topsep=0pt,leftmargin=10pt]
    \item \text{\textit{GOI}}: 
    \begin{align}
    \begin{cases}
        \vz^{t} = \frac{\hat{\vz}^{t}}{\sum_{n \in [N]} \hat{z}_n^{t}}, &\text{with } \hat{z}_n^t \leftarrow z_n^{t-1} \cdot \exp \bigg( - \frac{\gamma_t }{\mu_{\vz}} \mathbb{E}_{\vx \sim \mathrm{mix}(\valpha^{t-1})}[\langle \nabla_{\vtheta} l_n(\vtheta_t), \nabla_{\vtheta} \ell(\vtheta_t, \vx) \rangle] \bigg), \\
        \valpha^{t} = \frac{\hat{\valpha}^{t}}{\sum_{k \in [K]} \hat{\alpha}_k^{t}}, &\text{with } \hat{\alpha}_k^t \leftarrow \alpha_k^{t-1} \cdot \exp \bigg( \frac{\gamma_t}{\mu_{\valpha}} \mathbb{E}_{\vy \sim \mathrm{mix}(\vz^{t-1})}[\langle \vg_k(\vtheta_t), \nabla_{\vtheta} \ell(\vtheta_t, \vy) \rangle] \bigg).
    \end{cases}\label{equ:grape-gap}
    \end{align}
    \item \text{\textit{ROI-ema}}:  
    \begin{align}
        \begin{cases}
        \vz^{t} = \frac{\hat{\vz}^{t}}{\sum_{n \in [N]} \hat{z}_n^{t}}, &\text{with } \hat{z}_n^t \leftarrow z_n^{t-1} \cdot \exp \bigg( - \frac{\gamma_t }{\mu_{\vz}} \mathbb{E}_{\vx \sim \mathrm{mix}(\valpha^{t-1})}[\langle \frac{\nabla_{\vtheta} l_n(\vtheta_t)}{l_{ema,n}^t}, \nabla_{\vtheta} \ell(\vtheta_t, \vx) \rangle] \bigg), \\
        \valpha^{t} = \frac{\hat{\valpha}^{t}}{\sum_{k \in [K]} \hat{\alpha}_k^{t}}, &\text{with } \hat{\alpha}_k^t \leftarrow \alpha_k^{t-1} \cdot \exp \bigg( \frac{\gamma_t}{\mu_{\valpha}} \mathbb{E}_{\vy \sim \mathrm{mix}(\vz^{t-1})}[\langle \vg_k(\vtheta_t), \nabla_{\vtheta} \log \ell(\vtheta_t, \vy) \rangle] \bigg).
    \end{cases}\label{equ:grape-ema}
    \end{align}
\end{itemize}

\paragraph{Task Configurations.}
We evaluate the efficacy of \grape on 12 various target task combinations as follows:
\begin{itemize}
    \item T1: GSM8K, ARC-C, ARC-E 
    \item T2: GSM8K, Hellaswag
    \item T3: GSM8K, PIQA
    \item T4: GSM8K, LogiQA
    \item T5: GSM8K, SciQ
    \item T6: GSM8K, ARC-E, ARC-C, Kodcode
    \item T7: GSM8K, Kodcode, Hellaswag
    \item T8: ARC-E, ARC-C, Hellaswag, SciQ, PIQA, LogiQA, Kodcode, GSM8K
    \item T9: ARC-E, ARC-C, Hellaswag, SciQ, PIQA, LogiQA
    \item T10: ARC-E, ARC-C, Hellaswag, SciQ, PIQA, LogiQA, MathQA, MedQA
    \item T11: ARC-E, ARC-C, MathQA, MedQA
    \item T12: LogiQA, Hellaswag, MathQA, MedQA
\end{itemize}

\newpage
\paragraph{Results.} We present the average and worst log-perplexity scores on each of 12 groups of ablation experiments in Table~\ref{tab:ablations-full}. Specifically, \grape outperforms uniform and DoGE across all task sets; on 7 out of 12 tasks, \grape with $ROI$ metric outperforms the other two DRO variants with $GOI$ and $ROI$-$ema$, respectively.
We present the log-perplexity scores associated with task weights evolution trajectories for all 12 groups of ablations in Figure~\ref{fig:ablation-t1}-\ref{fig:ablation-t12}.

\begin{table}[ht!]
    \centering
    \caption{\textbf{Average and worst-case test log-perplexity($\downarrow$) on 12 task combinations (trained on 0.66 B \texttt{SlimPajama}) tokens.} \grape outperforms uniform and DoGE across all task sets; on 7 out of 12 tasks, \grape with $ROI$ metric outperforms the other two DRO variants.}
    \resizebox{\textwidth}{!}{
    \small
    \begin{tabular}{c | c | c || c | c | c } 
        \toprule
        Method (avg. | wst.) & Uniform         & \textsc{DoGE}   & \grape          & \grape-ema      & \grape-gap  \\
        \midrule 
        T1                   & $4.30$ | $4.66$ & $4.18$ | $4.53$ & $\textbf{3.53}$ | $\textbf{3.55}$ & $3.68$ | $3.73$ & $3.58$ | $3.59$ \\
        T2                   & $4.01$ | $4.35$  & $3.94$ | $4.24$ & $\textbf{3.61}$ | $\textbf{3.70}$ & $3.70$ | $3.81$ & $3.68$ | $3.70$ \\                     
        T3                   & $4.11$ | $4.55$  & $4.00$ | $4.38$ & $3.70$ | $3.85$ & $3.73$ | $3.89$ & $\textbf{3.69}$ | $\textbf{3.80}$ \\       
        T4                   & $3.75$ | $3.83$  & $3.67$ | $3.69$ & $\textbf{3.40}$ | $\textbf{3.47}$ & $3.38$ | $3.46$ & $3.41$ | $3.54$ \\
        T5                   & $3.85$ | $4.02$  & $3.70$ | $3.85$ & $3.37$ | $\textbf{3.43}$ & $3.37$ | $3.47$ & $\textbf{3.35}$ | $3.48$ \\
        T6                   & $3.62$ | $4.66$  & $3.63$ | $4.77$ & $\textbf{3.04}$ | $3.58$ & $3.10$ | $3.65$ & $3.05$ | $\textbf{3.52}$ \\
        T7                   & $3.20$ | $4.34$  & $3.12$ | $4.26$ & $\textbf{2.93}$ | $3.80$ & $2.96$ | $3.81$ & $3.11$ | $\textbf{3.72}$ \\
        T8                   & $3.90$ | $4.66$  & $3.58$ | $4.58$ & $3.46$ | $\textbf{4.30}$ & $\textbf{3.41}$ | $4.42$ & $3.51$ | $4.32$ \\
        T9                   & $4.33$ | $4.66$  & $4.13$ | $4.51$ & $\textbf{3.86}$ | $\textbf{4.19}$ & $3.88$ | $4.21$ & $3.88$ | $4.21$ \\
        T10                  & $4.62$ | $4.92$ & $4.32$ | $4.82$ & $3.95$ | $4.52$ & $\textbf{3.66}$ | $\textbf{4.51}$ & $3.86$ | $4.55$ \\
        T11                  & $4.81$ | $4.92$  & $4.57$ | $4.74$ & $3.94$ | $\textbf{4.52}$ & $3.90$ | $4.53$ & $\textbf{3.85}$ | $4.53$ \\
        T12                  & $4.53$ | $4.88$  & $4.33$ | $4.76$ & $3.70$ | $4.51$ & $3.73$ | $4.56$ & $\textbf{3.59}$ | $\textbf{4.50}$ \\
        \bottomrule
    \end{tabular}
    }
    \label{tab:ablations-full}
\end{table}

\clearpage
\newpage
\subsection{Task Combination 1: GSM8K, ARC-C, ARC-E}
\begin{figure}[htbp!]
  \centering
  \begin{subfigure}[Log-Perplexity Scores]{
  \includegraphics[width=0.8\linewidth,clip]{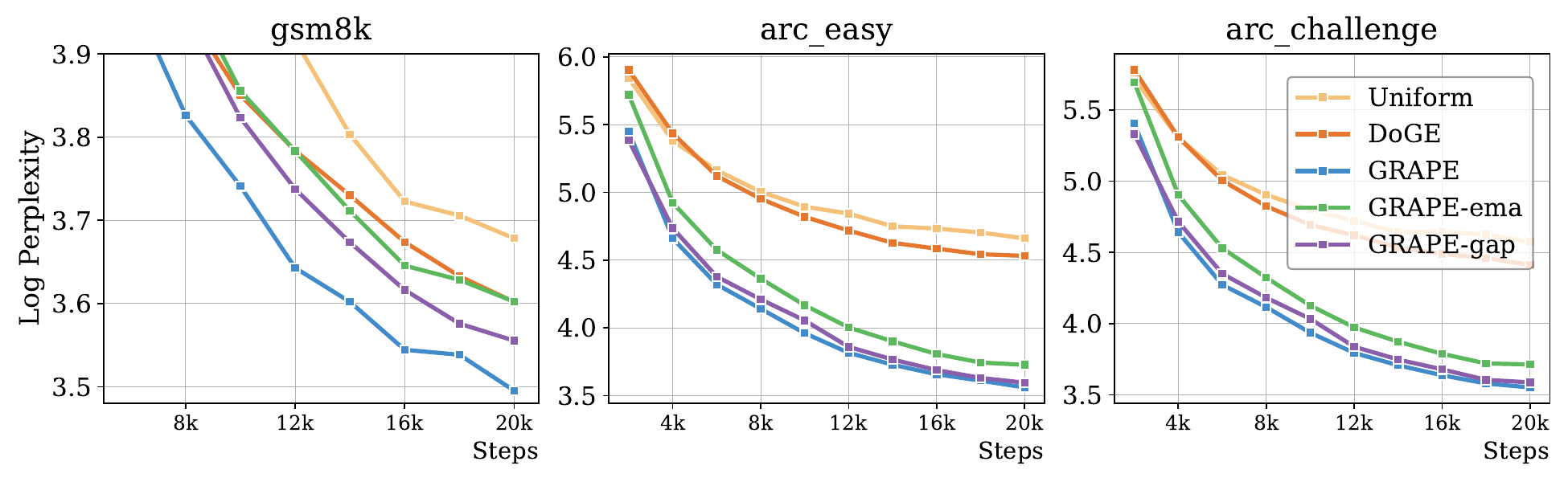}
  } 
  \end{subfigure} 
  \begin{subfigure}[Task Weights]{
  \includegraphics[width=0.8\linewidth,clip]{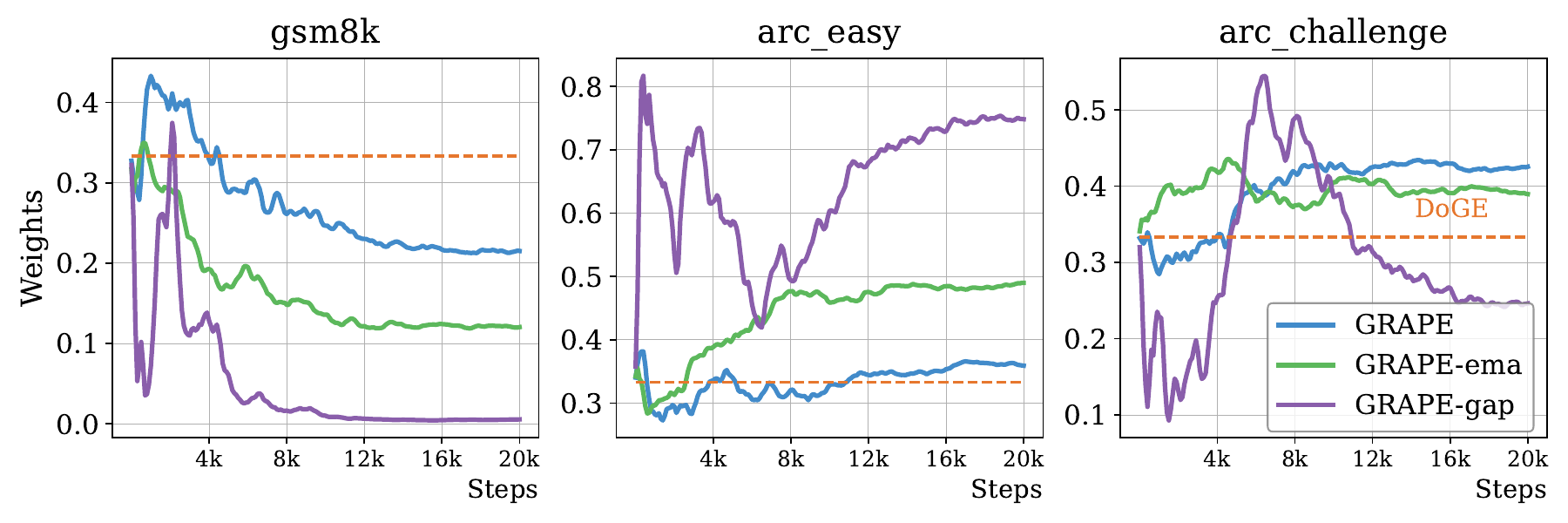}
  }
  \end{subfigure} 
  \begin{subfigure}[Domain Weights]{
  \includegraphics[width=0.6\linewidth,clip]{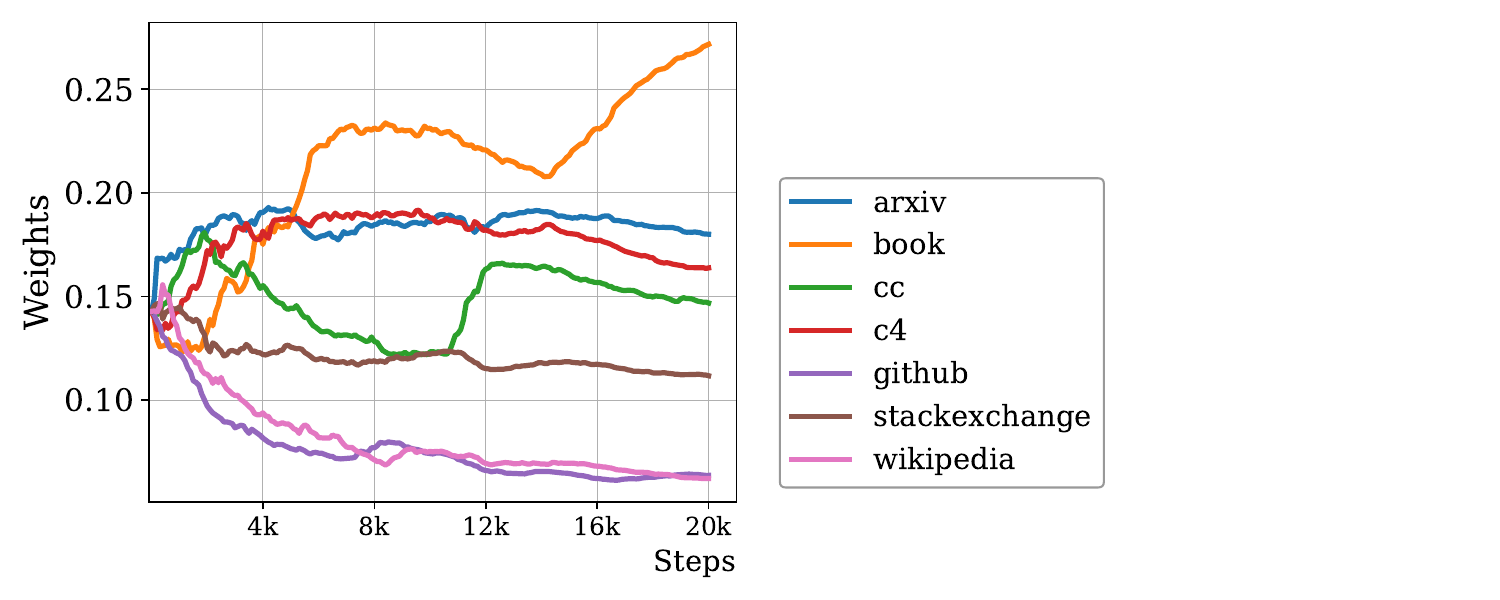}
  }
  \end{subfigure} 
 \caption{\textbf{Ablation on task combination [GSM8K, ARC-C, ARC-E].}}
 \vspace{-1em}
 \label{fig:ablation-t1}
\end{figure}

\clearpage
\newpage
\subsection{Task Combination 2: GSM8K, Hellaswag}
\begin{figure}[htbp!]
  \centering
  \begin{subfigure}[Log-Perplexity Scores]{
  \includegraphics[width=0.6\linewidth,clip]{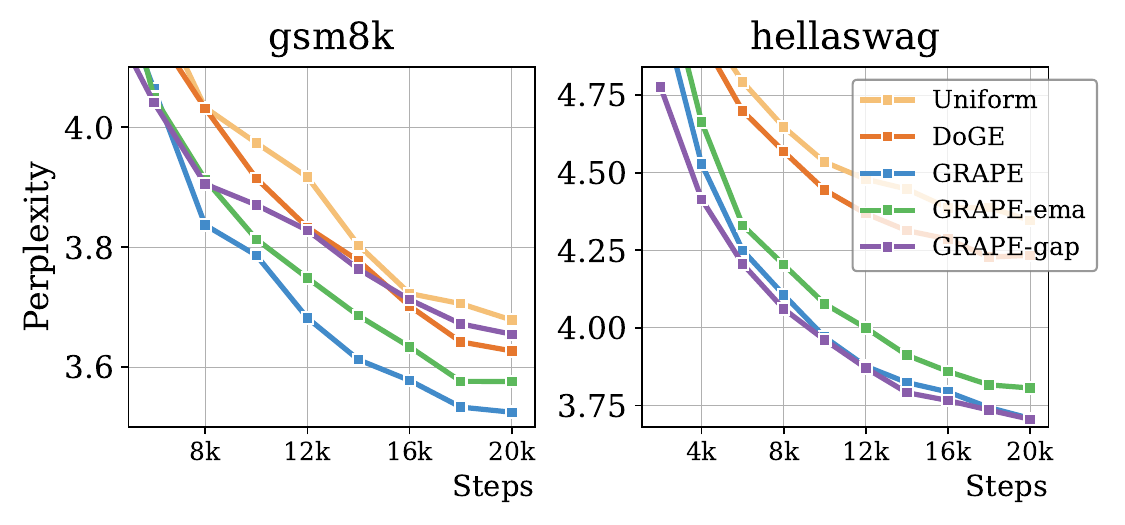}
  } 
  \end{subfigure} 
  \begin{subfigure}[Task Weights]{
  \includegraphics[width=0.6\linewidth,clip]{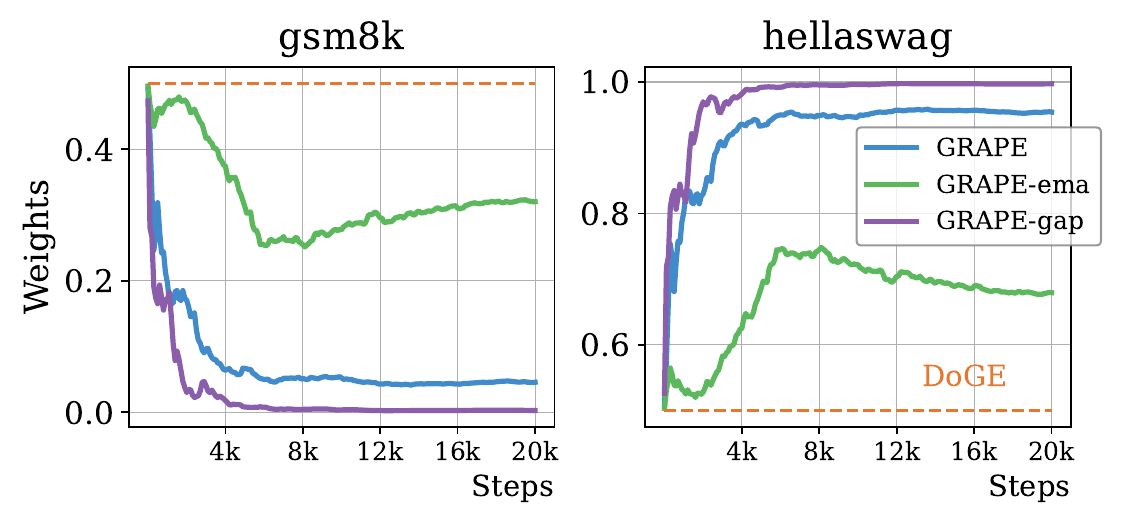}
  }
  \end{subfigure} 
  \begin{subfigure}[Domain Weights]{
  \includegraphics[width=0.6\linewidth,clip]{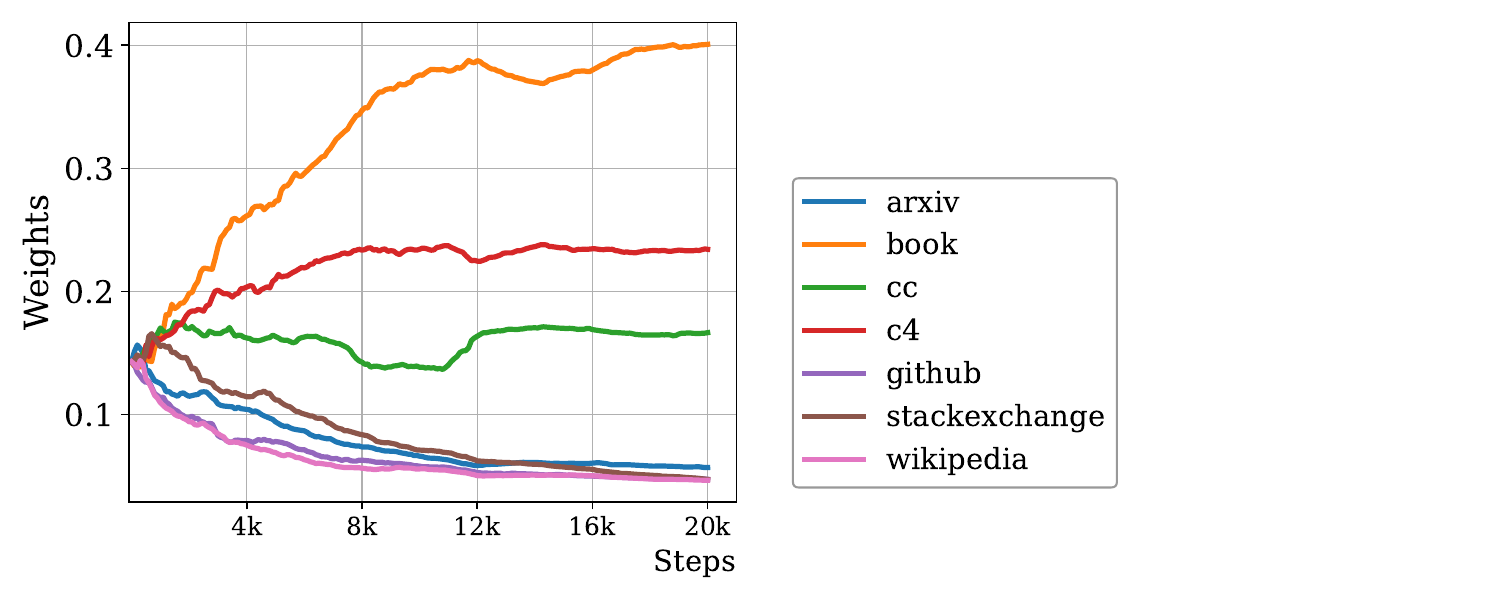}
  }
  \end{subfigure} 
 \caption{\textbf{Ablation on task combination [GSM8K, Hellaswag].}}
 \vspace{-1em}
 \label{fig:ablation-t2}
\end{figure}

\clearpage
\newpage
\subsection{Task Combination 3: GSM8K, PIQA}
\begin{figure}[htbp!]
  \centering
  \begin{subfigure}[Log-Perplexity Scores]{
  \includegraphics[width=0.6\linewidth,clip]{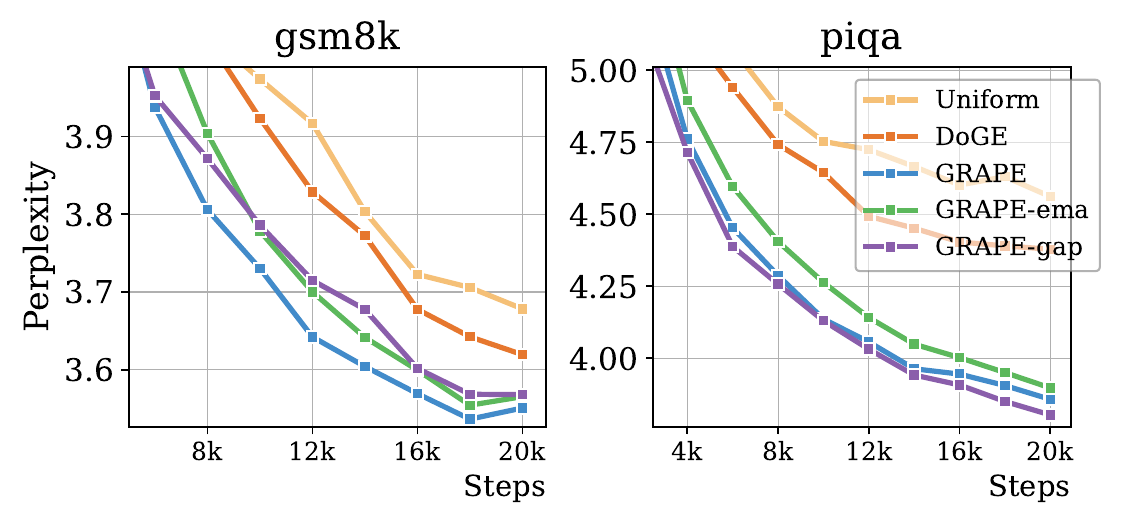}
  } 
  \end{subfigure} 
  \begin{subfigure}[Task Weights]{
  \includegraphics[width=0.6\linewidth,clip]{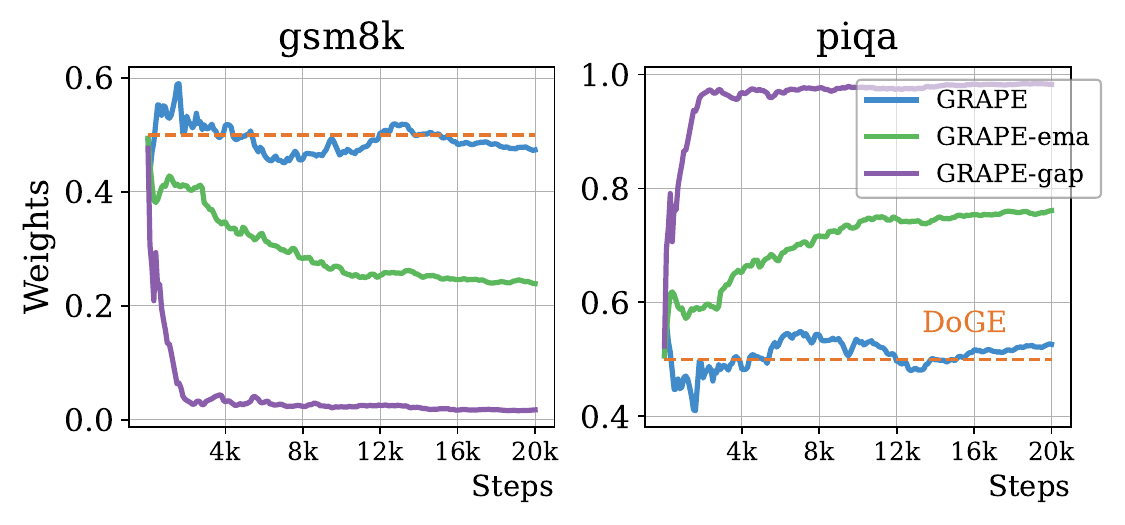}
  }
  \end{subfigure} 
  \begin{subfigure}[Domain Weights]{
  \includegraphics[width=0.6\linewidth,clip]{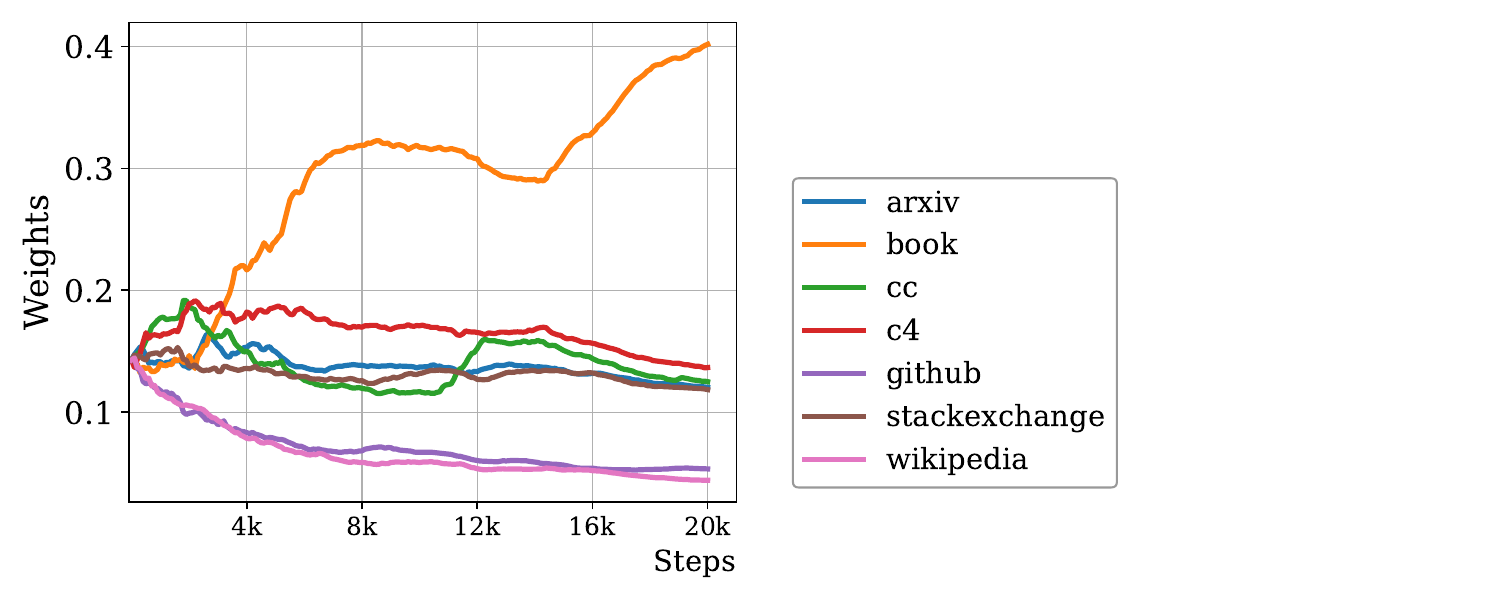}
  }
  \end{subfigure} 
 \caption{\textbf{Ablation on task combination [GSM8K, PIQA].}}
 \vspace{-1em}
 \label{fig:ablation-t3}
\end{figure}

\clearpage
\newpage
\subsection{Task Combination 4: GSM8K, LogiQA}
\begin{figure}[htbp!]
  \centering
  \begin{subfigure}[Log-Perplexity Scores]{
  \includegraphics[width=0.6\linewidth,clip]{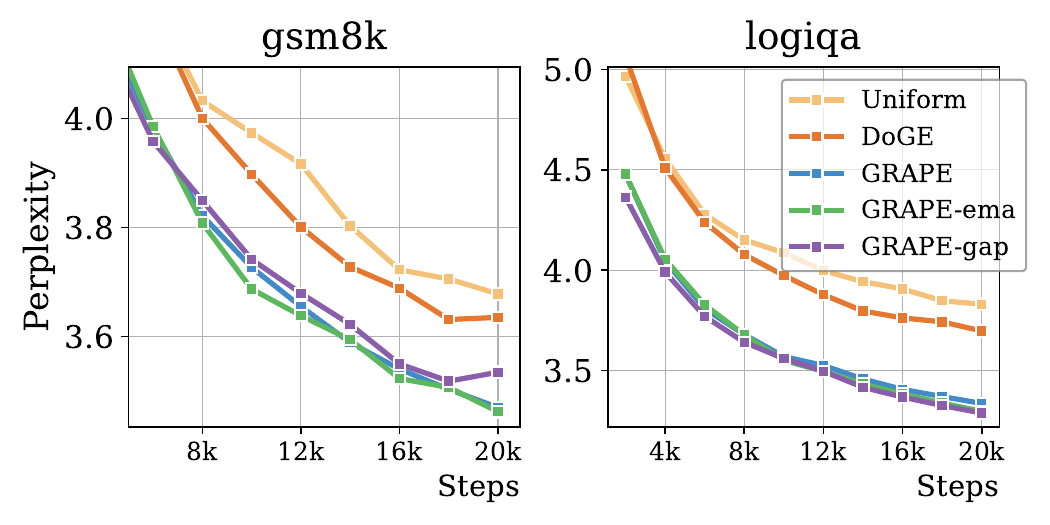}
  } 
  \end{subfigure} 
  \begin{subfigure}[Task Weights]{
  \includegraphics[width=0.6\linewidth,clip]{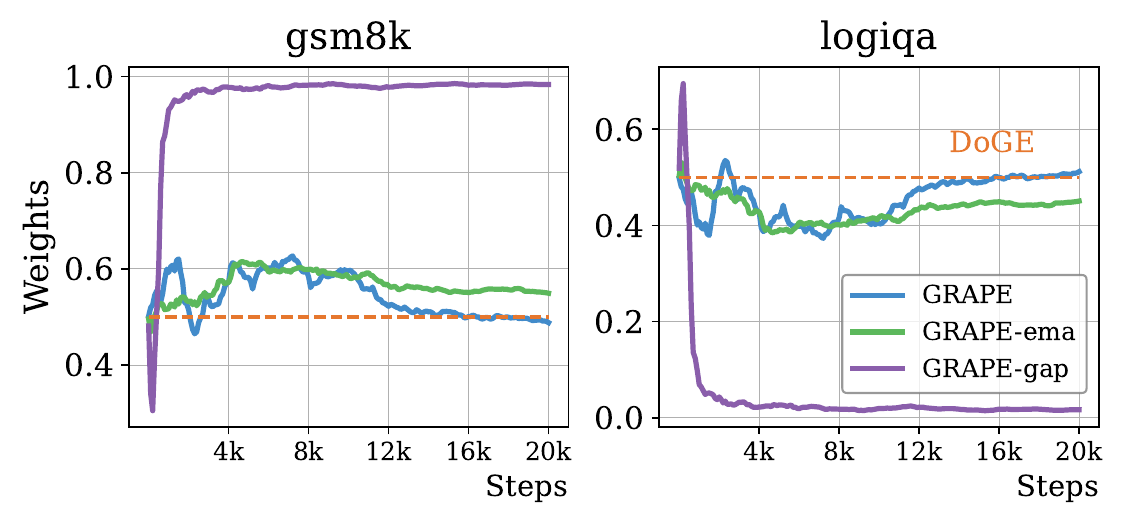}
  }
  \end{subfigure} 
  \begin{subfigure}[Domain Weights]{
  \includegraphics[width=0.6\linewidth,clip]{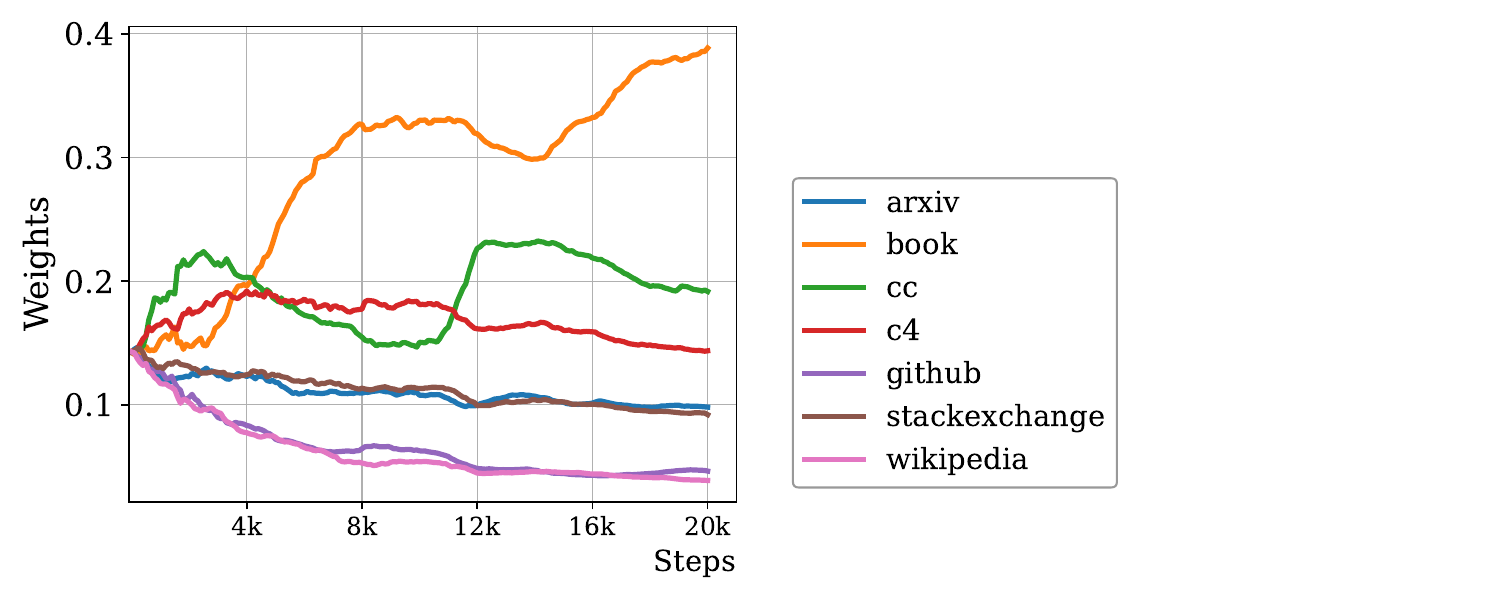}
  }
  \end{subfigure} 
 \caption{\textbf{Ablation on task combination [GSM8K, LogiQA].}}
 \vspace{-1em}
 \label{fig:ablation-t4}
\end{figure}

\clearpage
\newpage
\subsection{Task Combination 5: GSM8K, SciQ}
\begin{figure}[htbp!]
  \centering
  \begin{subfigure}[Log-Perplexity Scores]{
  \includegraphics[width=0.6\linewidth,clip]{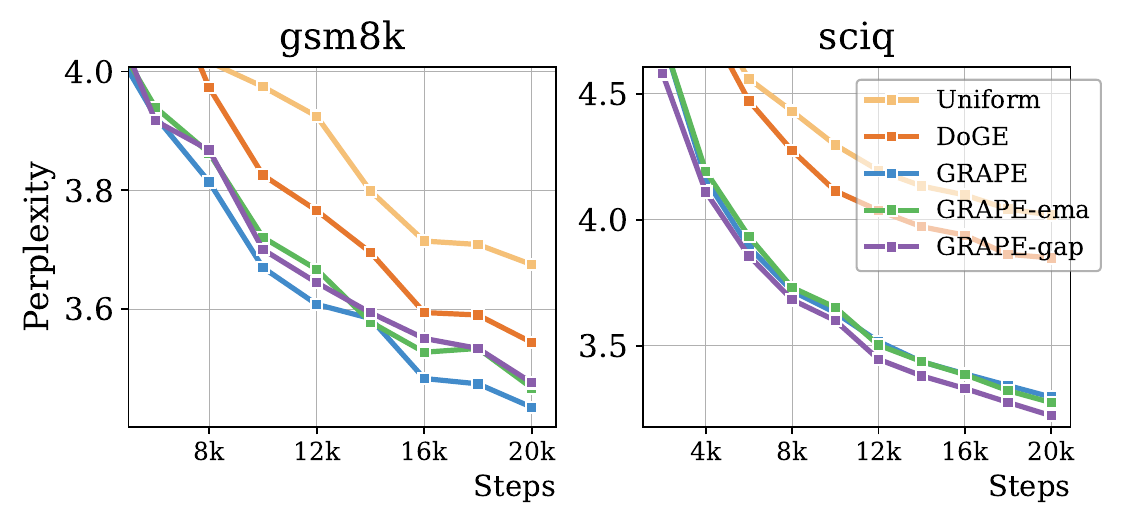}
  } 
  \end{subfigure} 
  \begin{subfigure}[Task Weights]{
  \includegraphics[width=0.6\linewidth,clip]{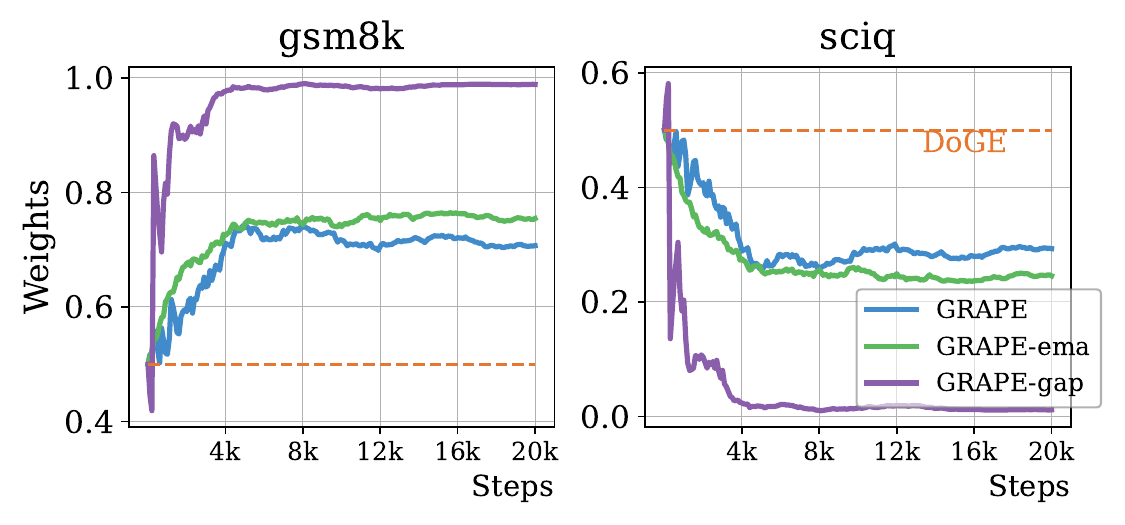}
  }
  \end{subfigure} 
  \begin{subfigure}[Domain Weights]{
  \includegraphics[width=0.6\linewidth,clip]{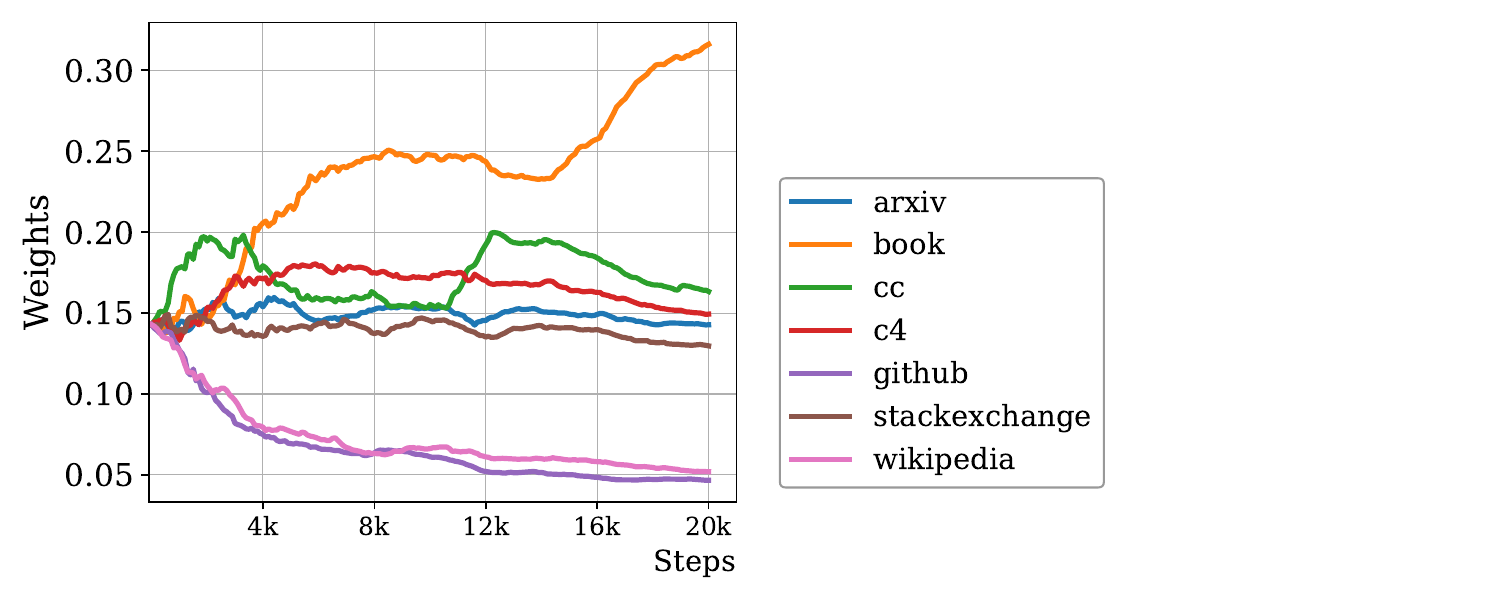}
  }
  \end{subfigure} 
 \caption{\textbf{Ablation on task combination [GSM8K, SciQ].}}
 \vspace{-1em}
 \label{fig:ablation-t5}
\end{figure}

\clearpage
\newpage
\subsection{Task Combination 6: GSM8K, ARC-E, ARC-C, Kodcode}
\begin{figure}[htbp!]
  \centering
  \begin{subfigure}[Log-Perplexity Scores]{
  \includegraphics[width=0.8\linewidth,clip]{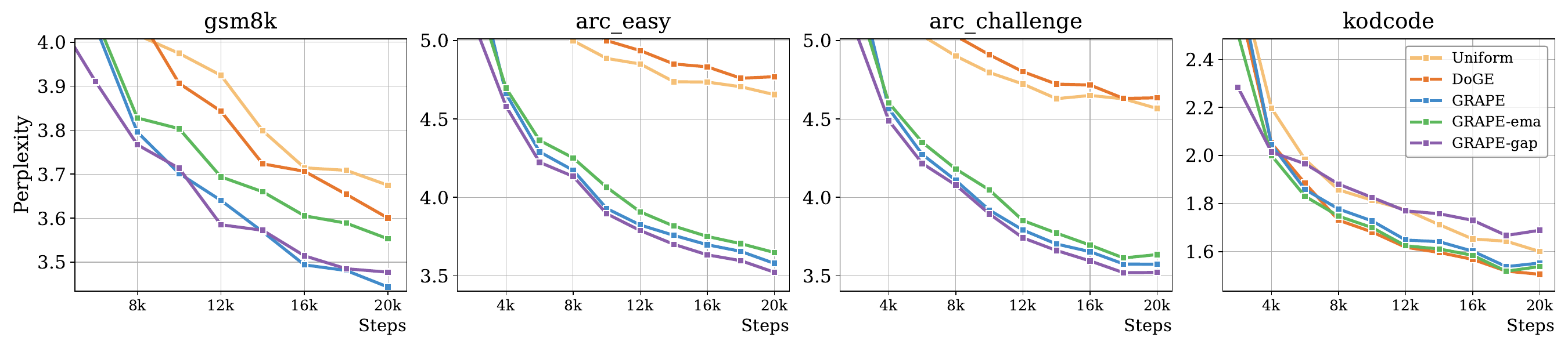}
  } 
  \end{subfigure} 
  \begin{subfigure}[Task Weights]{
  \includegraphics[width=0.8\linewidth,clip]{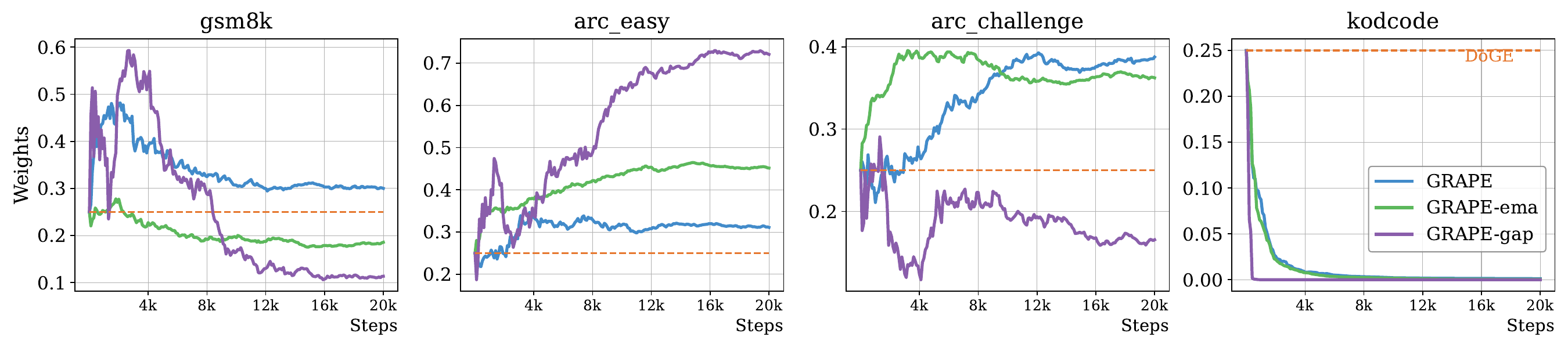}
  }
  \end{subfigure} 
  \begin{subfigure}[Domain Weights]{
  \includegraphics[width=0.6\linewidth,clip]{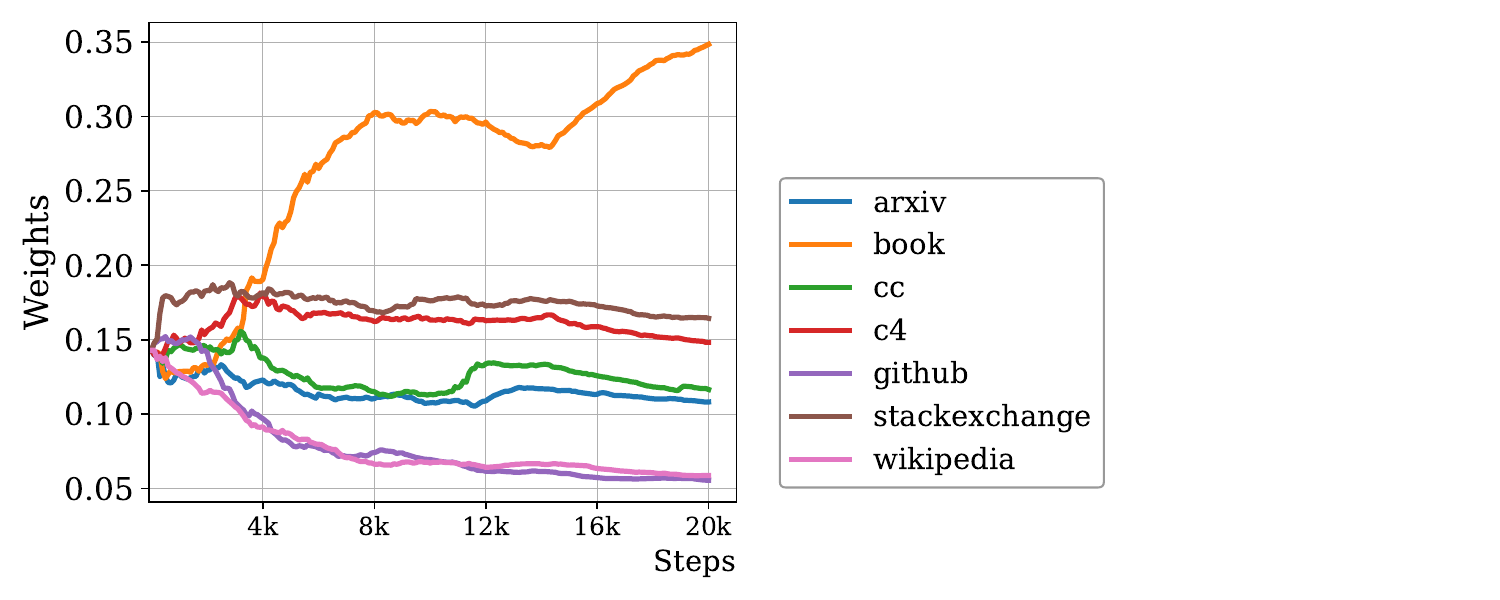}
  }
  \end{subfigure} 
 \caption{\textbf{Ablation on task combination [GSM8K, ARC-E, ARC-C, Kodcode].}}
 \vspace{-1em}
 \label{fig:ablation-t6}
\end{figure}

\clearpage
\newpage
\subsection{Task Combination 7: GSM8K, Kodcode, Hellaswag}
\begin{figure}[htbp!]
  \centering
  \begin{subfigure}[Log-Perplexity Scores]{
  \includegraphics[width=0.8\linewidth,clip]{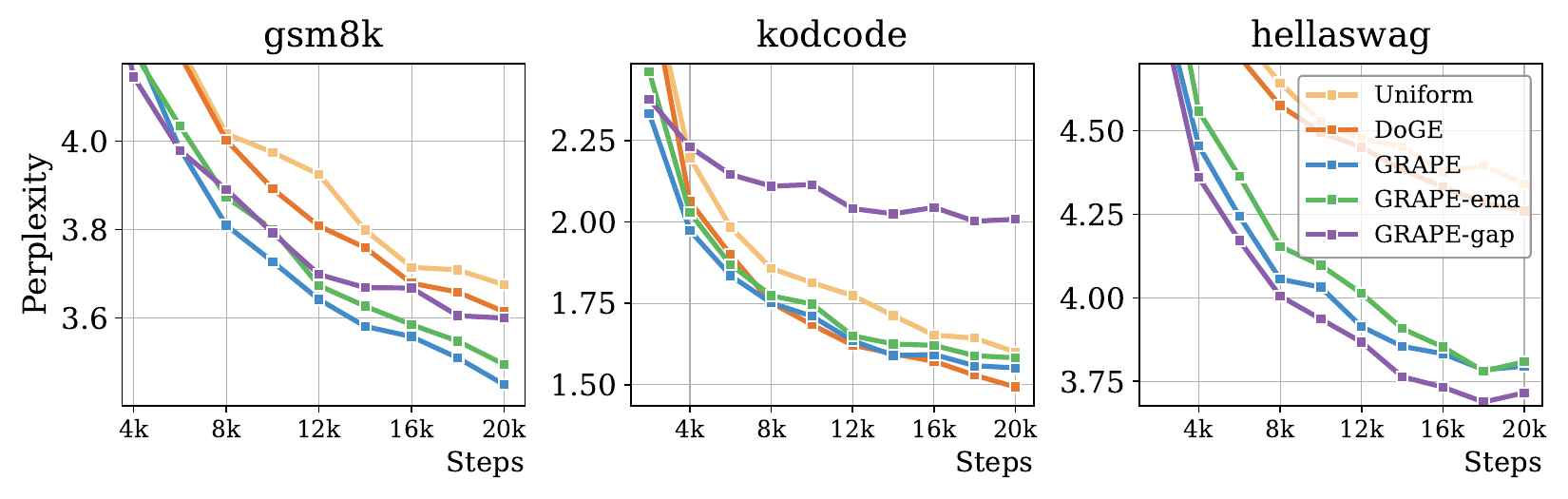}
  } 
  \end{subfigure} 
  \begin{subfigure}[Task Weights]{
  \includegraphics[width=0.8\linewidth,clip]{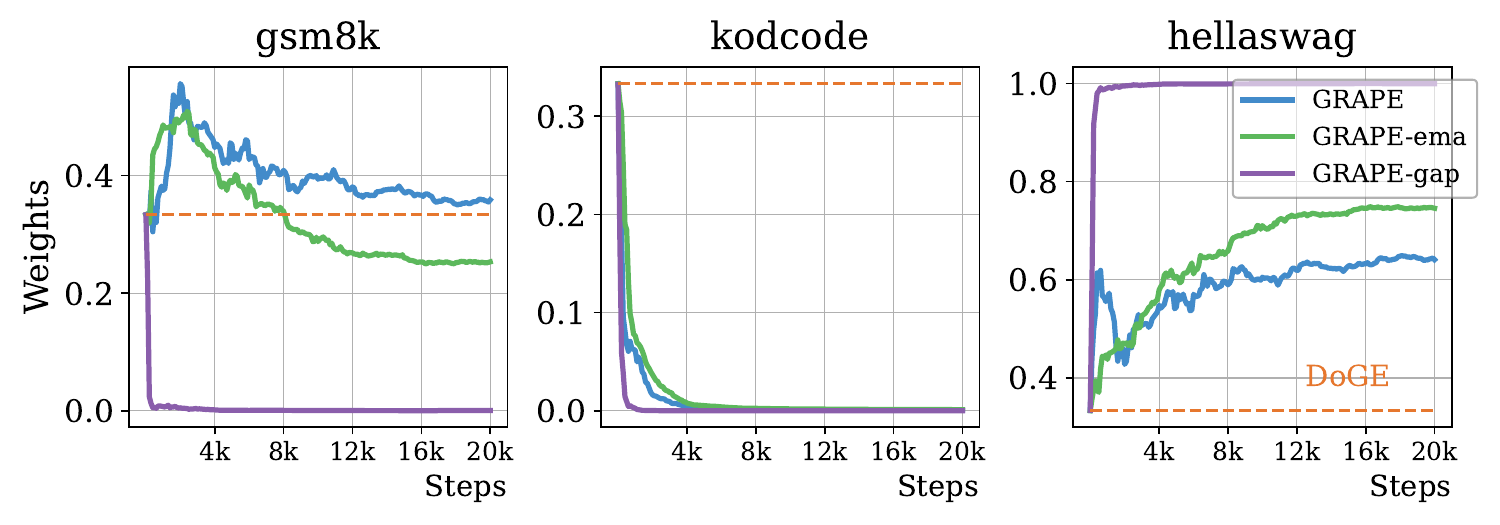}
  }
  \end{subfigure} 
  \begin{subfigure}[Domain Weights]{
  \includegraphics[width=0.6\linewidth,clip]{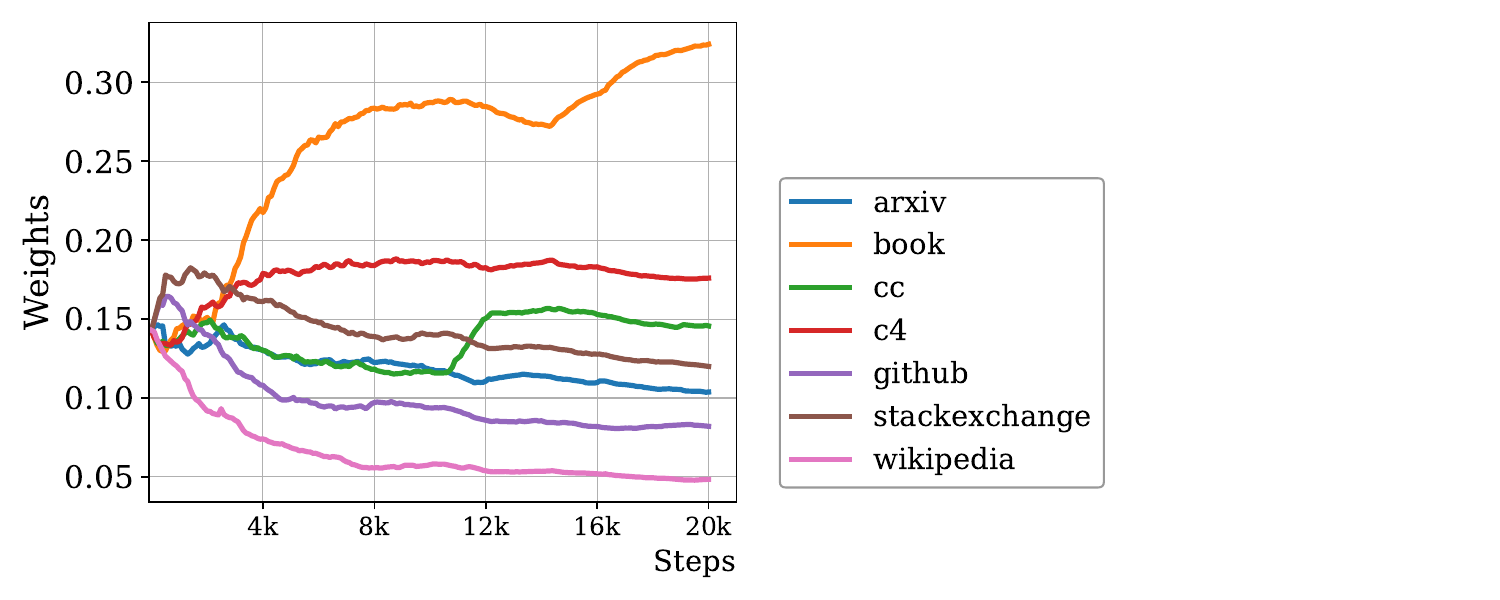}
  }
  \end{subfigure} 
 \caption{\textbf{Ablation on task combination [GSM8K, Kodcode, Hellaswag].}}
 \vspace{-1em}
 \label{fig:ablation-t7}
\end{figure}

\clearpage
\newpage
\subsection{Task Combination 8: ARC-E, ARC-C, Hellaswag, SciQ, PIQA, LogiQA, Kodcode, GSM8K}
\begin{figure}[htbp!]
  \centering
  \begin{subfigure}[Log-Perplexity Scores]{
  \includegraphics[width=0.8\linewidth,clip]{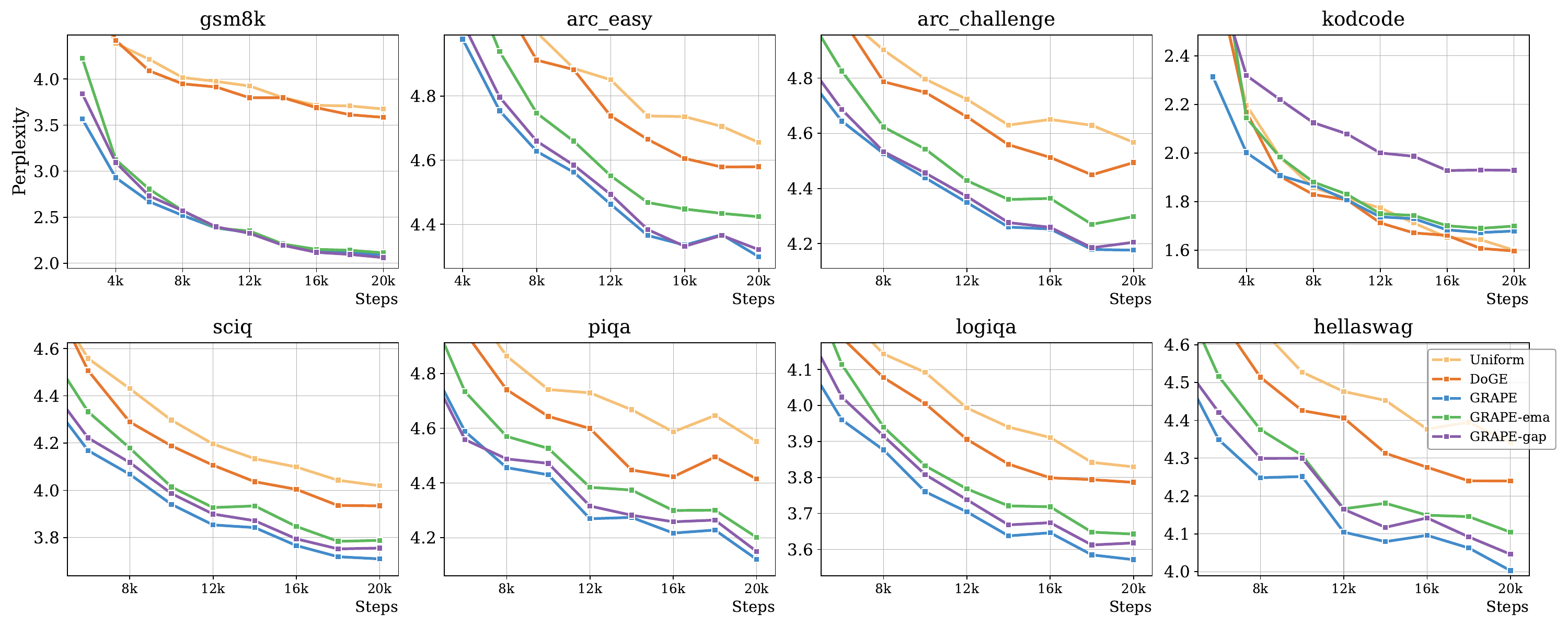}
  } 
  \end{subfigure} 
  \begin{subfigure}[Task Weights]{
  \includegraphics[width=0.8\linewidth,clip]{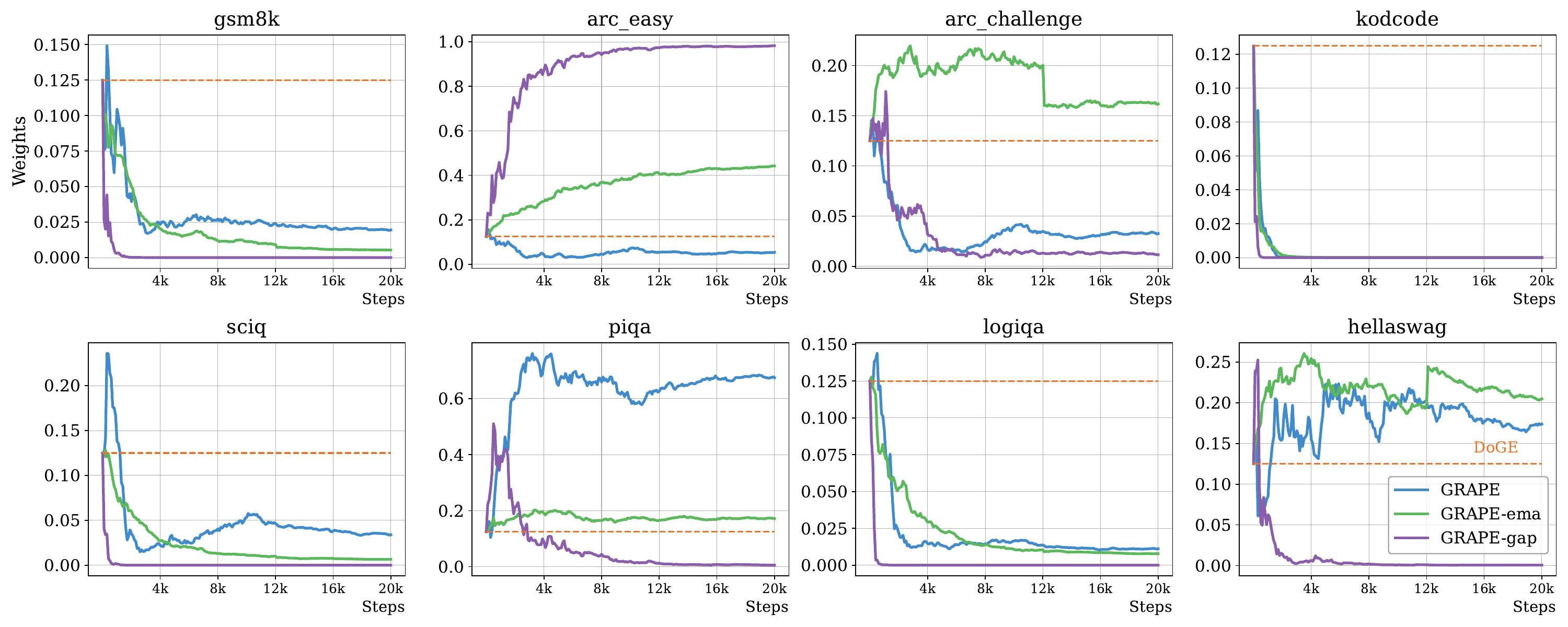}
  }
  \end{subfigure} 
  \begin{subfigure}[Domain Weights]{
  \includegraphics[width=0.6\linewidth,clip]{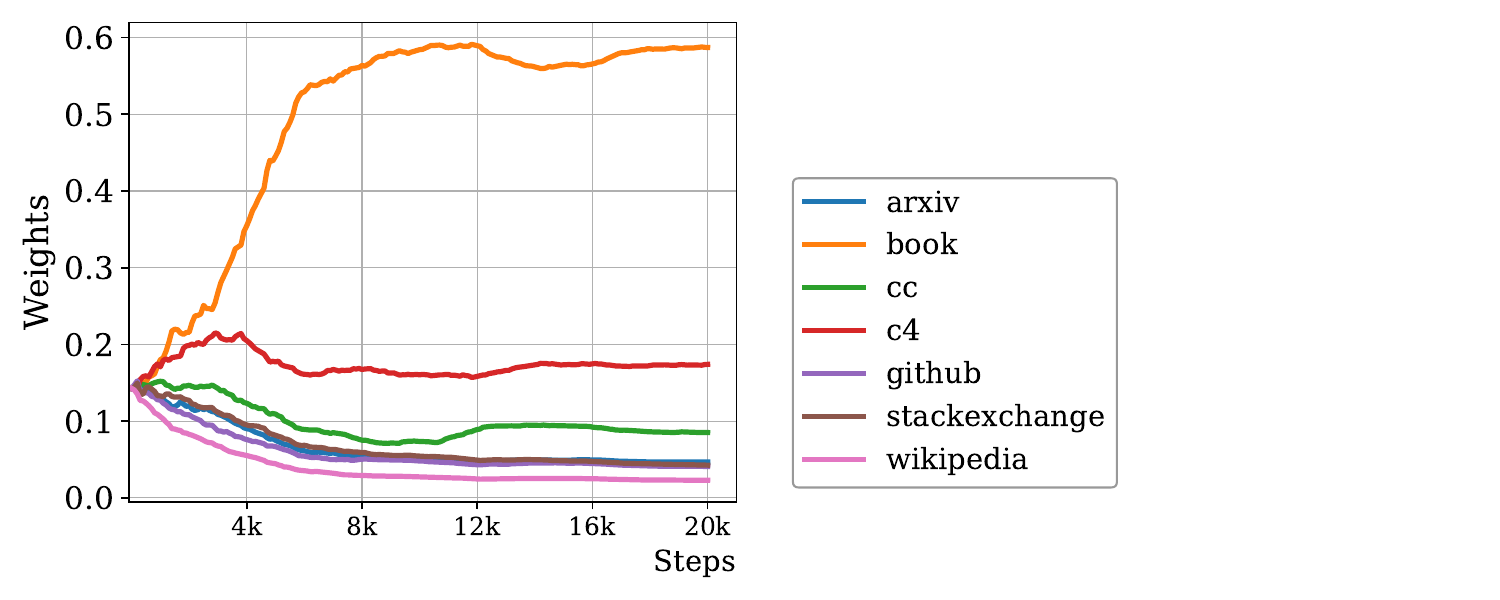}
  }
  \end{subfigure} 
 \caption{\textbf{Ablation on task combination [ARC-E, ARC-C, Hellaswag, SciQ, PIQA, LogiQA, Kodcode, GSM8K].}}
 \vspace{-1em}
 \label{fig:ablation-t8}
\end{figure}

\clearpage
\newpage
\subsection{Task Combination 9: ARC-E, ARC-C, Hellaswag, SciQ, PIQA, LogiQA}
\begin{figure}[htbp!]
  \centering
  \begin{subfigure}[Log-Perplexity Scores]{
  \includegraphics[width=0.8\linewidth,clip]{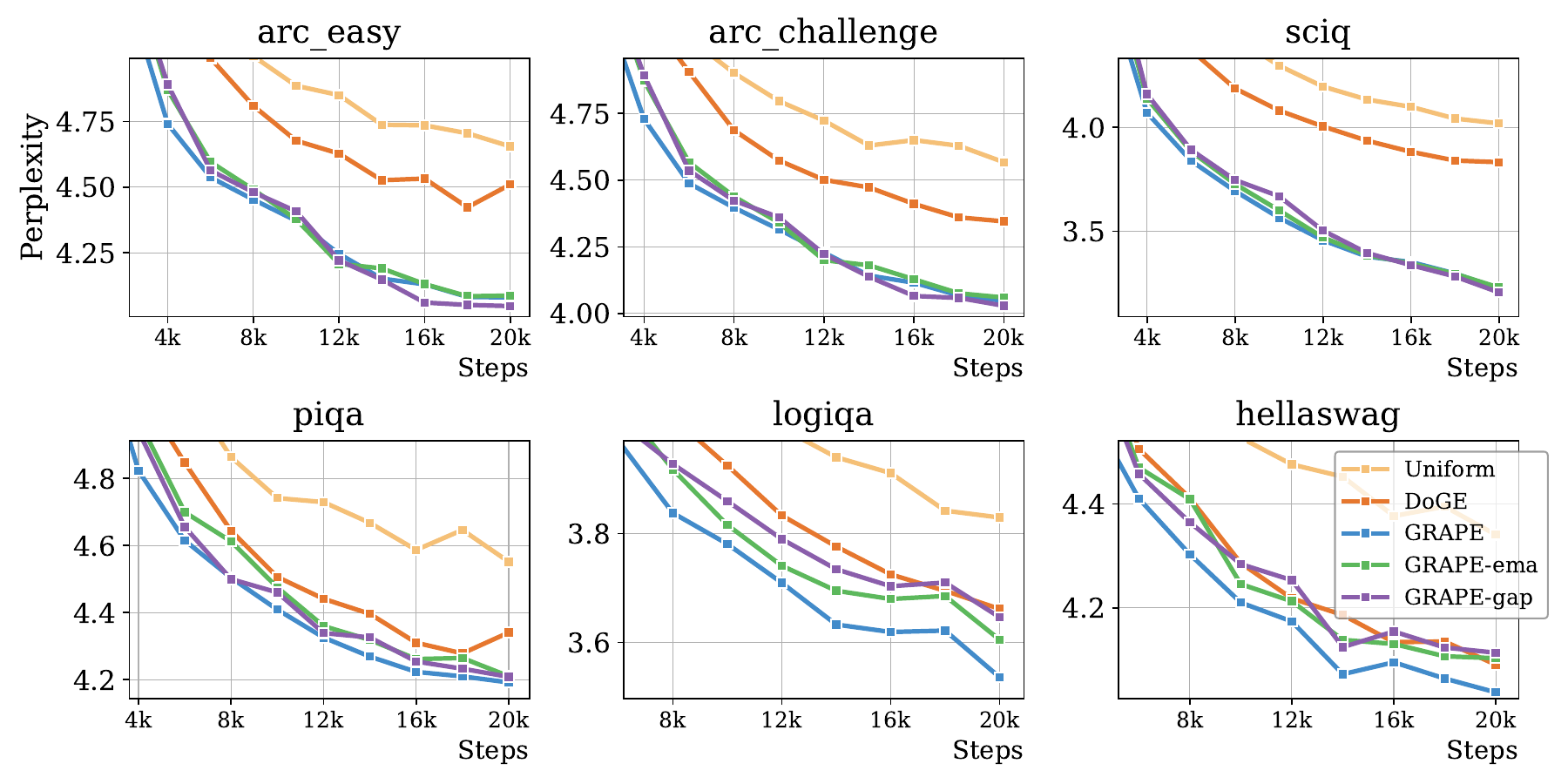}
  } 
  \end{subfigure} 
  \begin{subfigure}[Task Weights]{
  \includegraphics[width=0.8\linewidth,clip]{figs/ablation/ablation-t9-tw.pdf}
  }
  \end{subfigure} 
  \begin{subfigure}[Domain Weights]{
  \includegraphics[width=0.6\linewidth,clip]{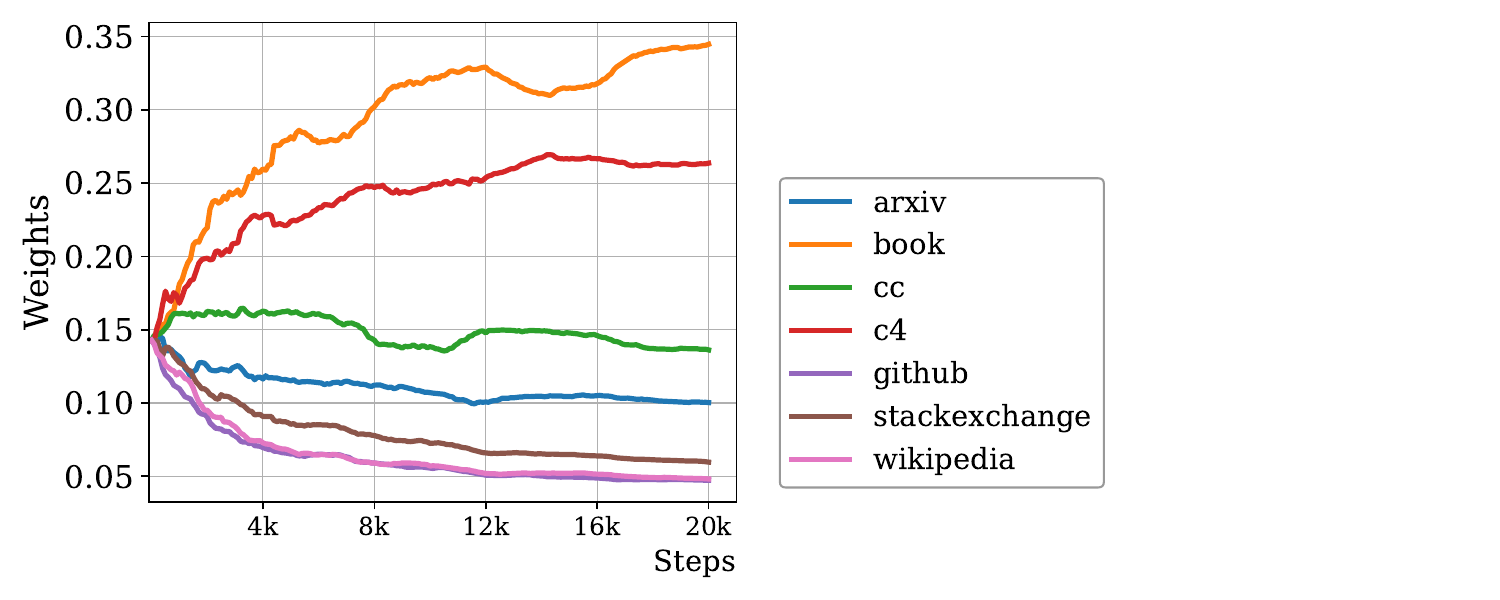}
  }
  \end{subfigure} 
 \caption{\textbf{Ablation on task combination [ARC-E, ARC-C, Hellaswag, SciQ, PIQA, LogiQA].}}
 \vspace{-1em}
 \label{fig:ablation-t9}
\end{figure}

\clearpage
\newpage
\subsection{Task Combination 10: ARC-E, ARC-C, Hellaswag, SciQ, PIQA, LogiQA, MathQA, MedQA}
\begin{figure}[htbp!]
  \centering
  \begin{subfigure}[Log-Perplexity Scores]{
  \includegraphics[width=0.8\linewidth,clip]{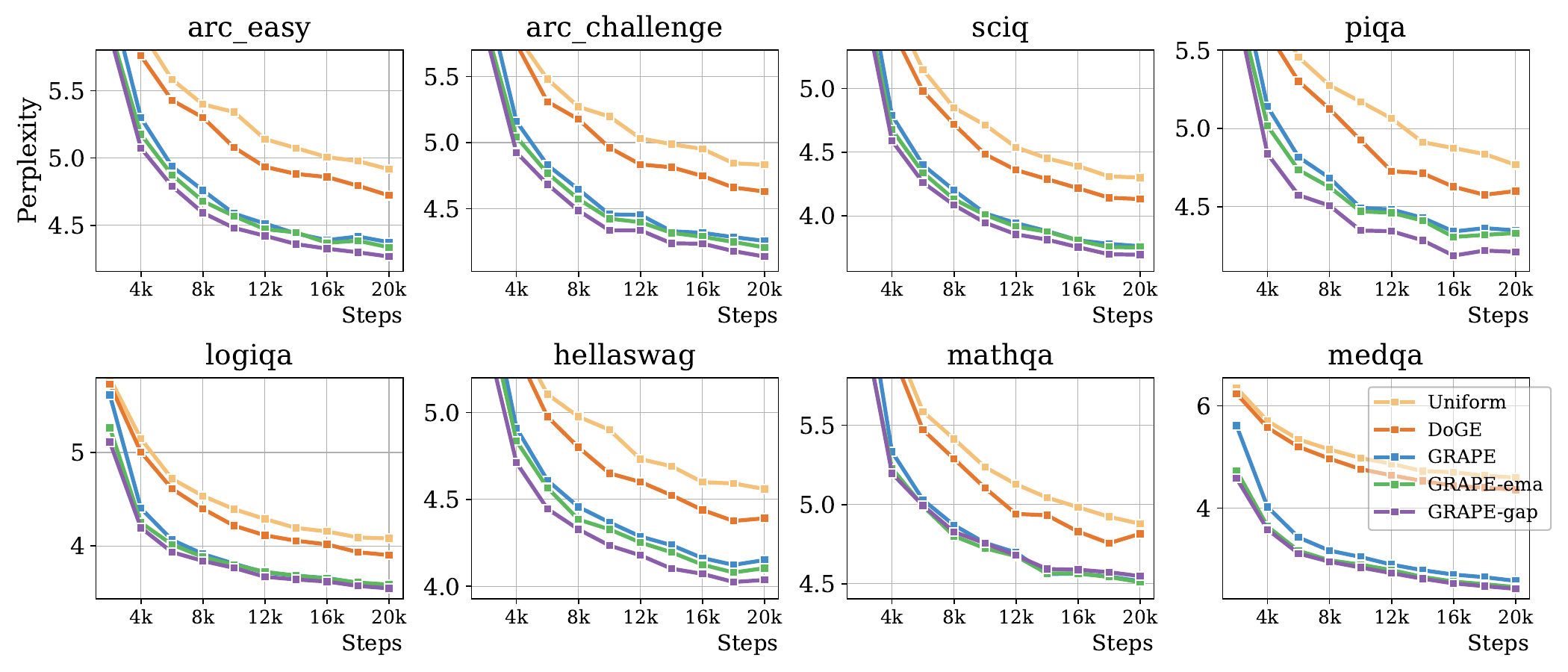}
  } 
  \end{subfigure} 
  \begin{subfigure}[Task Weights]{
  \includegraphics[width=0.8\linewidth,clip]{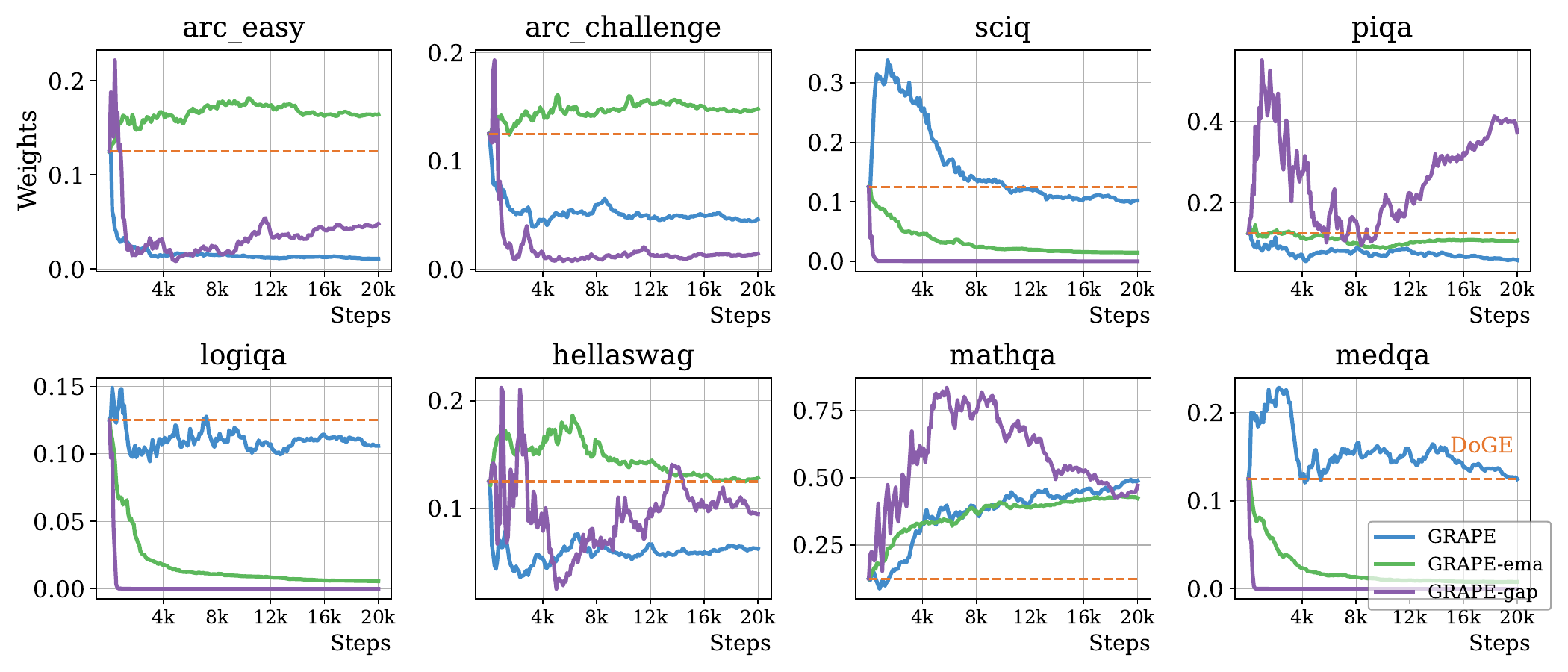}
  }
  \end{subfigure} 
  \begin{subfigure}[Domain Weights]{
  \includegraphics[width=0.6\linewidth,clip]{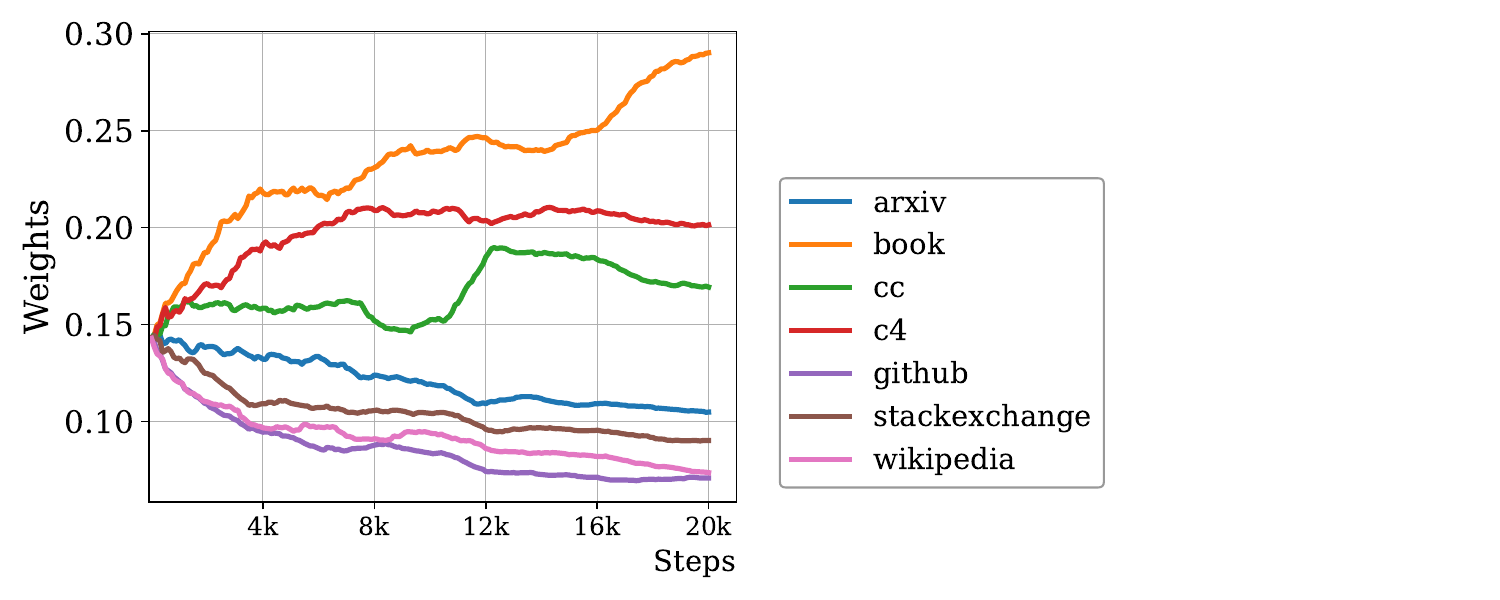}
  }
  \end{subfigure} 
 \caption{\textbf{Ablation on task combination [ARC-E, ARC-C, Hellaswag, SciQ, PIQA, LogiQA, MathQA, MedQA].}}
 \vspace{-1em}
 \label{fig:ablation-t10}
\end{figure}

\clearpage
\newpage
\subsection{Task Combination 11: ARC-E, ARC-C, MathQA, MedQA}
\begin{figure}[htbp!]
  \centering
  \begin{subfigure}[Log-Perplexity Scores]{
  \includegraphics[width=0.8\linewidth,clip]{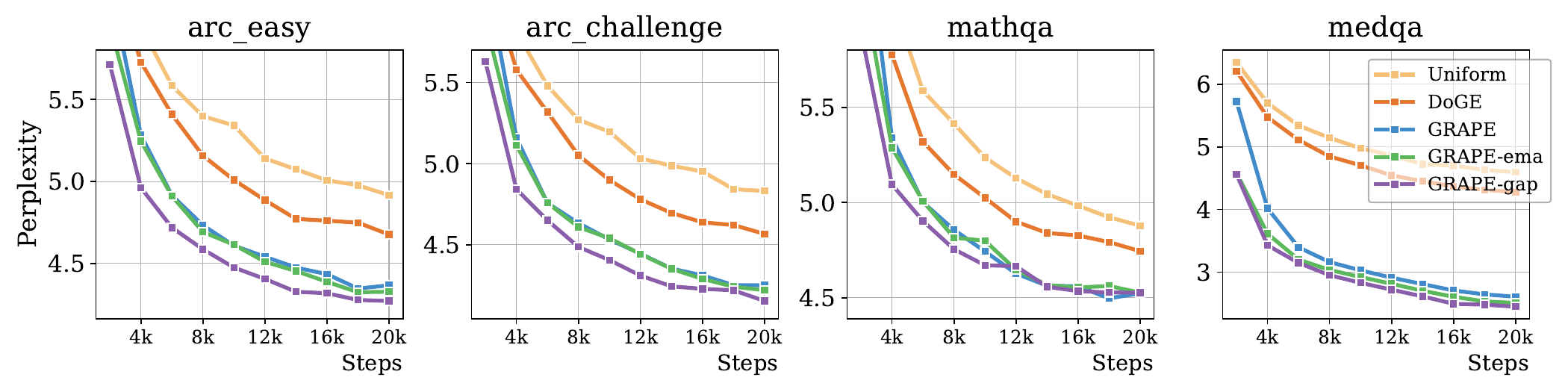}
  } 
  \end{subfigure} 
  \begin{subfigure}[Task Weights]{
  \includegraphics[width=0.8\linewidth,clip]{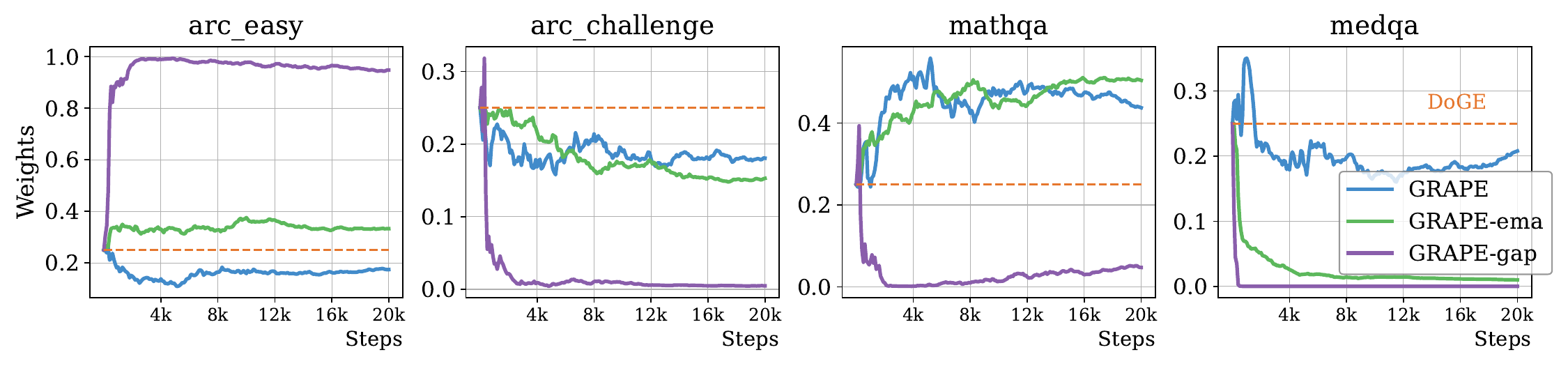}
  }
  \end{subfigure} 
  \begin{subfigure}[Domain Weights]{
  \includegraphics[width=0.6\linewidth,clip]{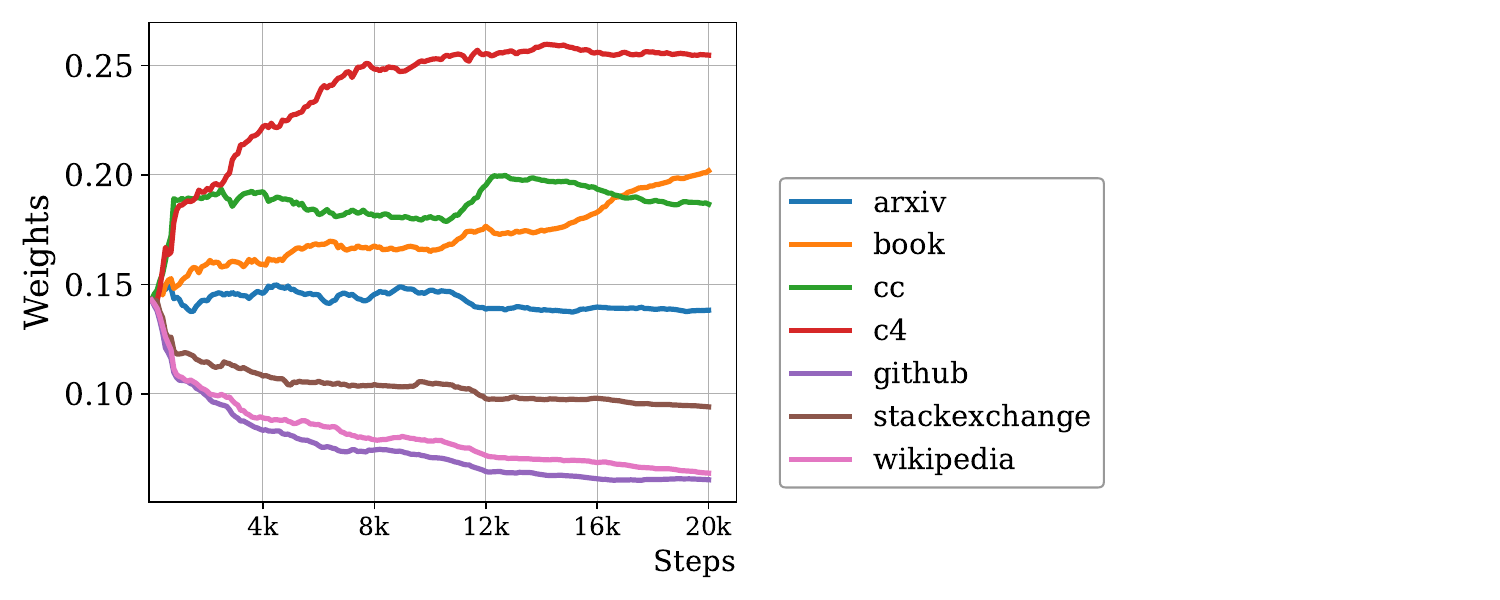}
  }
  \end{subfigure} 
 \caption{\textbf{Ablation on task combination [ARC-E, ARC-C, MathQA, MedQA].}}
 \vspace{-1em}
 \label{fig:ablation-t11}
\end{figure}

\clearpage
\newpage
\subsection{Task Combination 12: LogiQA, Hellaswag, MathQA, MedQA}
\begin{figure}[htbp!]
  \centering
  \begin{subfigure}[Log-Perplexity Scores]{
  \includegraphics[width=0.8\linewidth,clip]{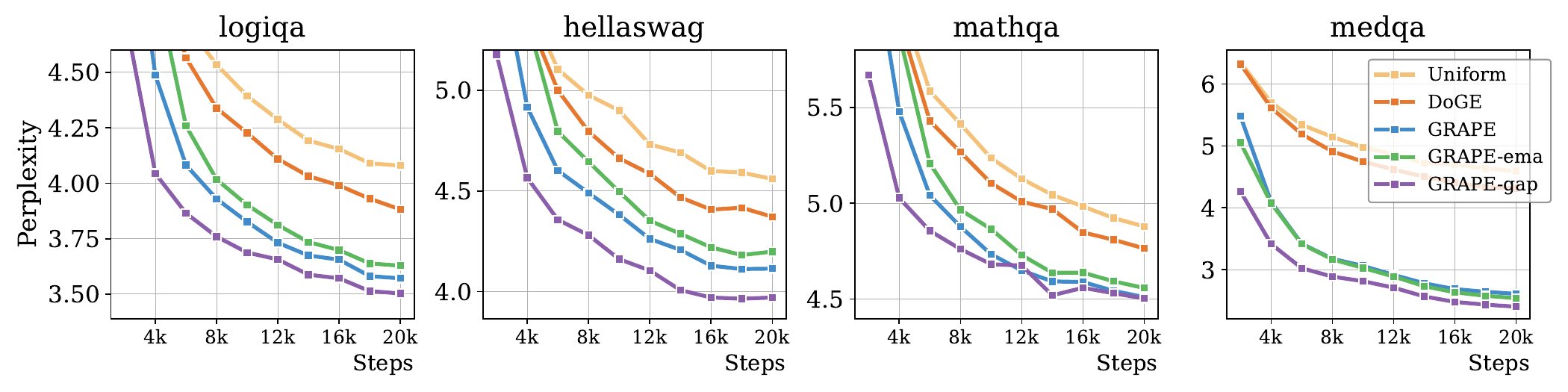}
  } 
  \end{subfigure} 
  \begin{subfigure}[Task Weights]{
  \includegraphics[width=0.8\linewidth,clip]{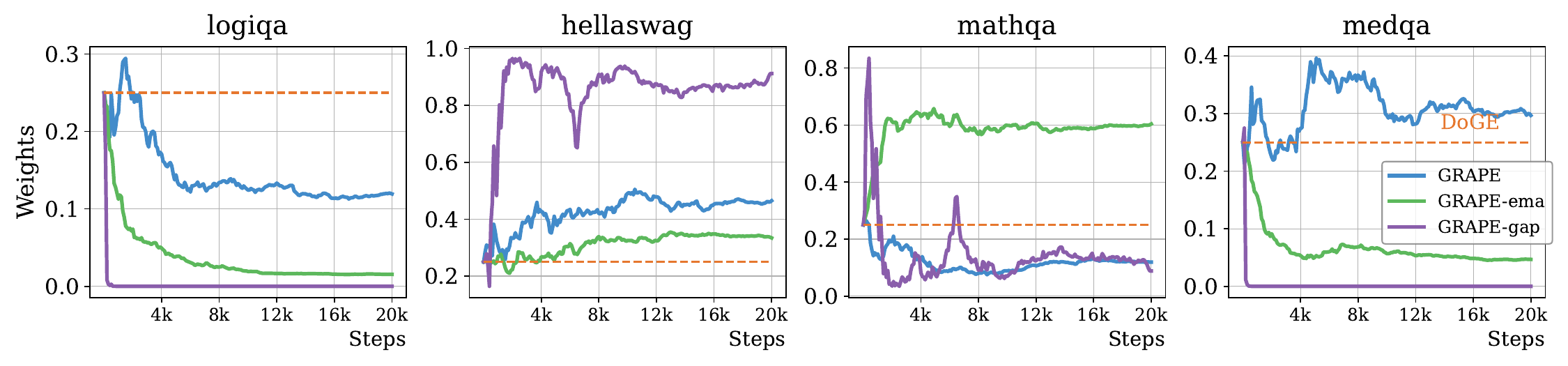}
  }
  \end{subfigure} 
  \begin{subfigure}[Domain Weights]{
  \includegraphics[width=0.6\linewidth,clip]{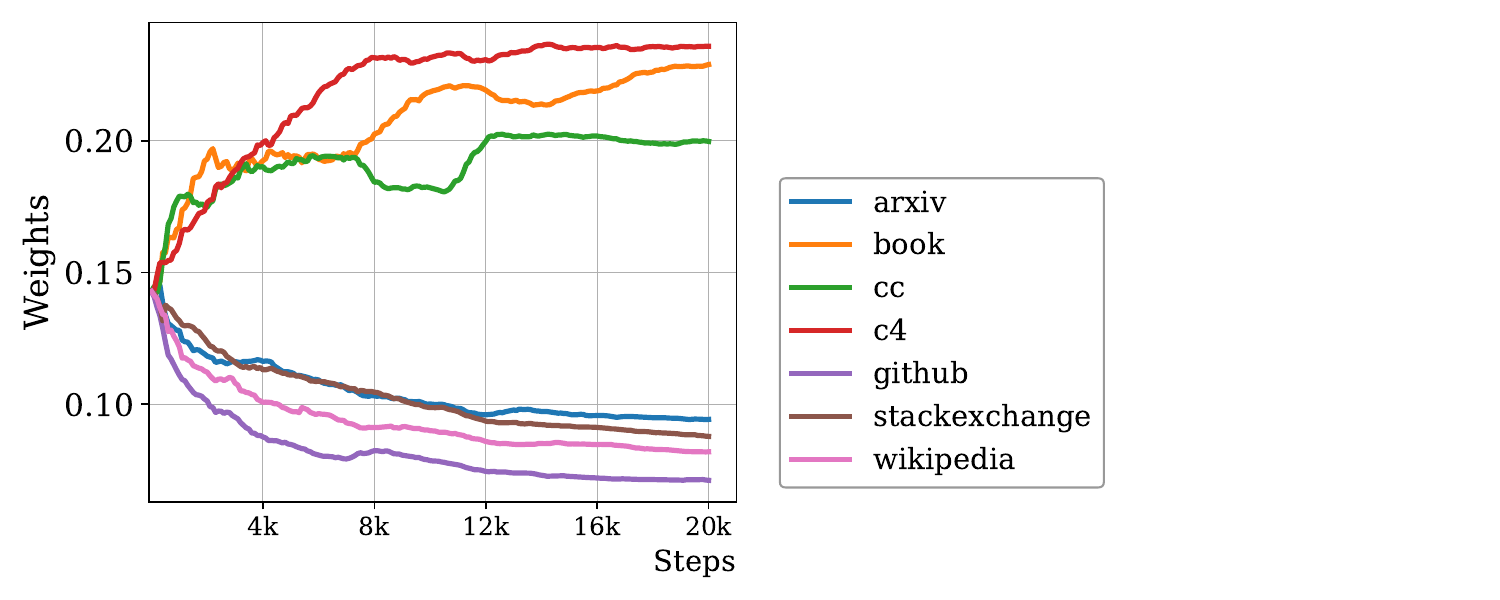}
  }
  \end{subfigure} 
 \caption{\textbf{Ablation on task combination [LogiQA, Hellaswag, MathQA, MedQA].}}
 \vspace{-1em}
 \label{fig:ablation-t12}
\end{figure}

\end{document}